\theoremstyle{plain}
\newtheorem{theorem}{Theorem}[section]
\newtheorem{proposition}[theorem]{Proposition}
\newtheorem{lemma}[theorem]{Lemma}
\theoremstyle{definition}
\theoremstyle{remark}
\newtheorem{remark}[theorem]{Remark}
\title{Bayesian Nonparametrics Meets Data-Driven Distributionally Robust Optimization}
\author{%
  Nicola Bariletto \\
  Department of Statistics and Data Sciences\\
  The University of Texas at Austin\\
  Austin, TX 78712 \\
  \texttt{nicola.bariletto@utexas.edu} \\
  \And
  Nhat Ho \\
  Department of Statistics and Data Sciences\\
  The University of Texas at Austin\\
  Austin, TX 78712 \\
  \texttt{minhnhat@utexas.edu} \\
}
\begin{document}

\maketitle

\begin{abstract}
  Training machine learning and statistical models often involves optimizing a data-driven risk criterion. The risk is usually computed with respect to the empirical data distribution, but this may result in poor and unstable out-of-sample performance due to distributional uncertainty. In the spirit of distributionally robust optimization, we propose a novel robust criterion by combining insights from Bayesian nonparametric (i.e., Dirichlet process) theory and a recent decision-theoretic model of smooth ambiguity-averse preferences. First, we highlight novel connections with standard regularized empirical risk minimization techniques, among which Ridge and LASSO regressions. Then, we theoretically demonstrate the existence of favorable finite-sample and asymptotic statistical guarantees on the performance of the robust optimization procedure. For practical implementation, we propose and study tractable approximations of the criterion based on well-known Dirichlet process representations. We also show that the smoothness of the criterion naturally leads to standard gradient-based numerical optimization. Finally, we provide insights into the workings of our method by applying it to a variety of tasks based on simulated and real datasets.
\end{abstract}

\section{Introduction}\label{sec:introduction}

In machine learning and statistics applications, several quantities of interest solve the optimization problem 
\begin{equation*}
    \min_{\theta\in\Theta} \mathcal R_{p_*}(\theta), 
\end{equation*}
where $\mathcal R_{p}(\theta) := \mathbb E_{\xi\sim p}[h(\theta, \xi)]$ is the expected risk associated to decision $\theta$, under cost function $h(\theta, \xi)$ (measurable in the argument $\xi$) and given that the distribution of the $(\Xi, \mathscr B(\Xi))$-valued data $\xi$ is $p$.\footnote{Given a topological space $(A, \mathcal T)$, we denote by $\mathscr B(A)$ the Borel $\sigma$-algebra generated by $\mathcal T$.} For instance, if we are dealing with a supervised learning task where $\xi = (x, y)\in \mathbb R^{m-1} \times \mathbb R$,  $h(\theta,\xi)$ is usually a loss function $\ell(f_\theta(x),y)$ quantifying the cost incurred in predicting $y$ with $f_\theta(x)$ -- here the decision variable is $\theta$, which parametrizes the function $f_\theta : \mathbb R^{m-1} \rightarrow \mathbb R$. For the rest of the paper, we assume $\Xi\subseteq \mathbb R^m$, $\Theta\subseteq \mathbb R^d$ and $h:\Theta\times\Xi\rightarrow[0,K]$ for some $K<\infty$.\footnote{Note that, under mild regularity (e.g., continuity) conditions on $h$, its boundedness is ensured, for instance, by the frequently assumed compactness of $\Xi$ and $\Theta$.}

In most cases of interest the true data-generating process $p_*$ is unknown, and only a sample $\bm \xi^n = (\xi_1,\dots,\xi_n)$ from it is available. The most popular solution is to approximate $p_*$ by the empirical distribution $p_{\bm \xi^n}$, and optimize $\mathcal R_{p_{\bm \xi^n}}(\theta) \equiv n^{-1}\sum_{i=1}^n h(\theta,\xi_i)$. However, especially for small sample sizes $n$ and complex data-generating mechanisms, this can result in poor out-of-sample performance, leading to the need for robust alternatives. A flourishing literature on Distributionally Robust Optimization (DRO) has provided several methods in that direction \citep[though not always with data-driven applications as the primary focus; see][for a recent exhaustive review of the field]{rahimian2022frameworks}. A prominent approach is the min-max DRO (mM-DRO) one, whereby a worst-case criterion over an ambiguity\footnote{Throughout the paper, we adopt the terms ‘‘ambiguity" and ‘‘uncertainty" interchangeably.} set of plausible distributions is minimized \citep{gilboa1989maxmin, ben2013robust, bertsimas2010models, delage2010distributionally, wiesemann2013robust, duchi2021learning}. Recent notable results involve the study of mM-DRO problems where the ambiguity set is defined as a Wasserstein ball of probability measures centered at the empirical distribution \citep{mohajerin2018data, kuhn2019wasserstein}.

\paragraph{Contribution.} Differently from the mM-DRO paradigm, we propose a distributionally robust procedure based on the minimization of the following criterion:
\begin{equation}\label{eq:criterion}
     V_{\bm \xi^n}(\theta) := \int_{\mathscr P_{\Xi}}\phi(\mathcal R_p(\theta))Q_{\bm \xi^n}(\mathrm dp),
\end{equation}
where $\phi:[0,K]\rightarrow\mathbb R_+$ is a \emph{continuous}, \emph{convex} and \emph{strictly increasing} function, and $Q_{\bm \xi^n}$ is a Dirichlet process posterior conditional on $\bm\xi^n$ \citep{ferguson1973bayesian}.

As we show below, our proposal brings together insights from two well-established strands of literature -- decision theory under ambiguity and Bayesian nonparametric statistics, -- contributing in a novel way to the field of data-driven distributionally robust optimization. As we establish throughout the article, among the key advantages of the criterion are: (i) its favorable statistical properties in terms of probabilistic finite-sample and asymptotic performance guarantees; (ii) the availability of tractable approximations that are easy to optimize using standard gradient-based methods; and (iii) its ability to both improve and stabilize the out-of-sample performance of standard learning methods.

The rest of the paper is organized as follows. In Section \ref{sec:decision_theory}, we motivate the formulation in Equation~\eqref{eq:criterion} by providing a concise overview of decision theory under ambiguity and its connections to Bayesian statistics and regularization. In Section \ref{sec:stat_properties}, we study the statistical properties of procedures based on $V_{\bm\xi^n}$. In Section \ref{sec:MC_approximations}, we propose and study tractable approximations for $V_{\bm\xi^n}$ based on the theory of DP representations. In Section \ref{sec:experiments}, we highlight the robustness properties of our method by applying it to a variety of learning tasks based on real and simulated data. Section \ref{sec:discussion} concludes the article. Proofs of theoretical results and further background are provided in Appendices \ref{app:proofs} and \ref{app:background}, respectively, while in Appendix \ref{app:experiment} we discuss how the smoothness of the proposed criterion yields straightforward gradient-based optimization, and present more details on the numerical experiments. Code to replicate our experiments can be found at the folllowing link: \url{https://github.com/nbariletto/BNP_for_DRO}.

\section{Decision Theory and Bayesian Statistics}\label{sec:decision_theory}
Following a long-standing tradition in Bayesian statistics and decision theory \citep{savage1972foundations}, the distributional uncertainty on the data-generating process $p_*$ can be dealt with by defining a prior $Q$ over it. We point out that this Bayesian approach contrasts with the classical one, where a prior would typically be placed directly on the parameter $\theta$, whose data-driven optimal value would be determined as a function (e.g., the mean or mode) of the resulting posterior distribution. In this new framework, instead, the parameter $\theta$ is treated as a variable to be optimized, while the prior is assigned to the entire data-generating process. This perspective allows us to incorporate several valuable Bayesian concepts, as we will clarify throughout the paper, while preserving the flexibility of the original optimization-based learning framework. Notably, $\theta$ does not need to be interpreted as the parameter of a full generative model, as would be required in a classical Bayesian setting. Instead, it can represent a vector of parameters associated with a generic, possibly complex loss function, such as those employed in modern deep learning architectures.

Specifically, our Bayesian approach is equivalent to modeling the observed data $\bm \xi^n = (\xi_1,\dots,\xi_n)$ as exchangeable with de Finetti measure $Q$:
\begin{align*}
    \xi_i \mid p & \hspace{0.1cm} \overset{\textnormal{iid}}{\sim} \hspace{0.1cm} p, \quad i= 1, \dots, n, \\
                                p & \hspace{0.1cm} \sim \hspace{0.1cm} Q.
\end{align*}
Due to the stochasticity of $p$, $\mathcal R_{p}(\theta)$ is itself a random variable, and a sensible procedure is to maximize its posterior expectation. Let $Q_{\bm \xi^n}$ be the posterior law of $p$ conditional on the sample $\bm \xi^n$. Then, one solves the following problem:
\begin{align*}
    \min_{\theta\in\Theta} & \hspace{0.1cm} \mathbb E_{p \sim Q_{\bm\xi^n}}[\mathcal R_{p}(\theta)] = \min_{\theta\in\Theta}\hspace{0.1cm} \int_{\mathscr P_\Xi}\int_{\Xi}h(\theta, \xi) p(\mathrm d\xi)Q_{\bm \xi^n}(\mathrm dp)  = \min_{\theta\in\Theta}\hspace{0.1cm} \mathbb E_{\xi\sim p(\mathrm \cdot\vert \bm \xi^n)} [h(\theta, \xi)],
\end{align*}
where $\mathscr P_{\Xi}$ denotes the space of probability measures on $\mathscr B(\Xi)$ endowed with the Borel $\sigma$-algebra $\mathscr B(\mathscr P_\Xi)$ generated by the topology of weak convergence, while $p(\mathrm d\xi\vert \bm \xi^n) := \int_{\mathscr P_\Xi}p(\mathrm d\xi)Q_{\bm \xi^n}(\mathrm dp)$ denotes the posterior predictive distribution. In sum, within this general Bayesian framework, the data-driven problem reduces to minimizing $h(\theta,\xi)$ averaged w.r.t. the posterior predictive distribution, i.e., $\min_{\theta\in\Theta} \mathcal{R}_{p(\mathrm \cdot\vert \bm \xi^n)}(\theta)$.

\subsection{The Dirichlet Process} A natural choice is to model the prior $Q$ as a Dirichlet process (DP), and $Q_{\bm \xi^n}$ is then a DP posterior. First proposed by \cite{ferguson1973bayesian}, the DP is the cornerstone nonparametric prior over spaces of probability measures. Its specification involves a \emph{concentration parameter} $\alpha>0$ and a \emph{centering probability measure} $p_0$. Intuitively, the DP is characterized by the following finite-dimensional distributions: $p\sim\textnormal{DP}(\alpha, p_0)$ implies $(p(A_1),\dots,p(A_k))\sim\textnormal{Dirichlet}(\alpha p_0(A_1), \dots, \alpha p_0(A_k))$ for any finite measurable partition $\{A_1,\dots,A_k\}$ of $\Xi$.\footnote{Also, $\mathbb E[p(A)] = p_0(A)$ and $\mathbb V[p(A)] = (1+\alpha)^{-1}p_0(A)(1-p_0(A))$ for any $A\in\mathscr B(\Xi)$, justifying the names of $\alpha$ and $p_0$.} A key property of the DP is its almost sure discreteness, which allows to write $p\overset{\textnormal d}{=}\sum_{j\geq 1}p_j\delta_{\xi_j}$ (where probability weights and atom locations are independent). Moreover, the DP is conjugate with respect to exchangeable sampling. In our case, this means
\begin{align*}
    p & \sim \textnormal{DP}(\alpha, p_0)  \Rightarrow Q_{\bm\xi^n} = \textnormal{DP}\Big(\alpha + n, \frac{\alpha}{\alpha + n}p_0 + \frac{n}{\alpha +n}p_{\bm\xi^n}\Big).
\end{align*}
That is, conditional on the sample $\bm\xi^n$, $p$ is again a DP with larger concentration parameter $\alpha + n$ and centered at the predictive distribution $p(\cdot\vert\bm\xi^n) := \frac{\alpha}{\alpha + n}p_0 + \frac{n}{\alpha +n}p_{\bm\xi^n}$. The latter is a compromise between the prior guess $p_0$ and the empirical distribution $p_{\bm\xi^n}$, and the balance between the two is determined by the relative size of $\alpha$ and $n$. The predictive distribution is also related to the celebrated Blackwell-MacQueen P\'olya urn scheme (or Chinese restaurant process) to draw an exchangeable sequence $(\xi_i)_{i\geq 1}$ distributed according to $p\sim\textnormal{DP}(\alpha, p_0)$: Draw $\xi_1\sim p_0$ and, for all $i>1$ and $\ell<i$, set $\xi_i = \xi_\ell$ with probability $(\alpha + j-1)^{-1}$, else (i.e., with probability $\alpha(\alpha + j-1)^{-1})$ draw $\xi_i\sim p_0$ \citep{blackwell1973ferguson}.

Given the large support of $Q_{\bm\xi^n}$, which consists of all probability measures whose support is included in that of $p(\mathrm \cdot\vert \bm \xi^n)$ \citep{majumdar1992topological}, the DP is a reasonable and tractable option to mitigate misspecification concerns. Then, leveraging the mentioned expression for the DP predictive distribution, the problem specializes to
\begin{equation}\label{eq:ambiguity_neutral_criterion}
     \min_{\theta \in \Theta} \left\{\frac{n}{\alpha+n}\mathcal R_{p_{\bm\xi^n}}(\theta) + \frac{\alpha}{\alpha + n}\mathcal R_{p_0}(\theta)\right\}
\end{equation}
\citep[see also][]{lyddon2018nonparametric, wang2022distributional}. In practice, adopting the above Bayesian approach amounts to introducing a regularization term depending on the prior centering distribution $p_0$. Compared to the simple empirical risk $\mathcal R_{p_{\bm\xi^n}}(\theta)$, this type of criterion displays lower variance (because $\mathcal R_{p_0}(\theta)$ is non-random) at the cost of some additional, asymptotically-vanishing bias w.r.t. the theoretical criterion $\mathcal R_{p_*}(\theta)$. We also note that such bias can be attenuated in finite samples as long as the prior guess $p_0$ and the true data-generating process $p_*$ are close enough in terms of the difference $\vert\mathcal R_{p_0}(\theta) - \mathcal R_{p_*}(\theta)\vert$.\footnote{In practice, the prior guess $p_0$ can be leveraged to incorporate features of the underlying process that the researcher suspects to hold (e.g., in regression applications, sparsity). See also Section \ref{sec:experiments}.}

\paragraph{Connections to Regularization in Linear Regression.} One of the most ubiquitous data-driven learning tasks is linear regression \citep{seber2003linear, christensen2020plane}. It is well-known that, in this setting, coefficient estimation (e.g., via maximum likelihood or least squares) can be framed as a minimization problem of the sample average of the squared loss function. It turns out that, applying the Bayesian regularized approach~\eqref{eq:ambiguity_neutral_criterion}, an interesting equivalence with standard regularization techniques such as Ridge \citep{hoerl1970ridge} and LASSO \citep{tibshirani1996regression} emerges.

\begin{proposition}\label{pro:equivalence_regularization}
    Let $h(\theta, (y, x)) = (y-\theta^\top x)^2$. Then, denoting $\lambda_{\alpha, n}:=\alpha/n$, the following equivalences hold:

    1. If $p_0 = \mathcal N(0, I)$, then $\hat\theta$ solving~\eqref{eq:ambiguity_neutral_criterion} implies that it solves
    \begin{equation*}
        \min_{\theta \in\Theta} \left\{\frac{1}{n}\sum_{i=1}^n h(\theta,\xi_i) + \lambda_{\alpha, n}\Vert\theta\Vert_2^2\right\};
    \end{equation*}
    2. If $V=\textnormal{diag}(\vert\theta_1\vert^{-1}, \dots, \vert\theta_{d-1}\vert^{-1})$ and $p_0 = \mathcal N(0, V)$, then $\hat\theta$ solving~\eqref{eq:ambiguity_neutral_criterion} implies that it solves
    \begin{equation*}
        \min_{\theta \in\Theta} \left\{\frac{1}{n}\sum_{i=1}^n h(\theta,\xi_i) + \lambda_{\alpha, n}\Vert\theta\Vert_1\right\}.
    \end{equation*}

\end{proposition}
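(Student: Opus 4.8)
The plan is to reduce both claims to a single closed-form evaluation of the prior risk $\mathcal R_{p_0}(\theta)$, exploiting the fact that the set of minimizers of an objective is invariant under multiplication by a positive constant and under addition of a $\theta$-independent constant. Multiplying the objective in~\eqref{eq:ambiguity_neutral_criterion} by the positive factor $(\alpha+n)/n$ turns it into
\begin{equation*}
    \frac{1}{n}\sum_{i=1}^n h(\theta,\xi_i) + \lambda_{\alpha,n}\,\mathcal R_{p_0}(\theta),
\end{equation*}
so $\hat\theta$ solves~\eqref{eq:ambiguity_neutral_criterion} if and only if it minimizes this rescaled criterion over $\Theta$. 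Everything therefore hinges on showing that, for each choice of $p_0$, the term $\mathcal R_{p_0}(\theta)$ equals a $\theta$-independent constant plus the advertised penalty.

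The next step is to expand the square under the prior expectation. Writing $(y,x)\sim p_0$ with $p_0$ a centered Gaussian, linearity of expectation gives
\begin{equation*}
    \mathcal R_{p_0}(\theta) = \mathbb E_{p_0}[y^2] - 2\,\theta^\top \mathbb E_{p_0}[xy] + \theta^\top \mathbb E_{p_0}[xx^\top]\,\theta.
\end{equation*}
In both parts $p_0$ renders $y$ and $x$ independent and mean zero, so the cross term $\mathbb E_{p_0}[xy] = \mathbb E_{p_0}[x]\,\mathbb E_{p_0}[y] = 0$ vanishes and $\mathbb E_{p_0}[y^2]$ is a constant free of $\theta$; thus $\mathcal R_{p_0}(\theta)$ reduces, up to that additive constant, to the quadratic form $\theta^\top \Sigma\,\theta$, where $\Sigma$ is the covariance of the regressor block of $p_0$. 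For Part 1, $p_0 = \mathcal N(0,I)$ gives $\Sigma = I$, whence $\theta^\top\Sigma\,\theta = \Vert\theta\Vert_2^2$; dropping the additive constant and recalling $\lambda_{\alpha,n}=\alpha/n$ yields exactly the Ridge objective. For Part 2, $p_0 = \mathcal N(0,V)$ with $V = \textnormal{diag}(\vert\theta_1\vert^{-1},\dots,\vert\theta_{d-1}\vert^{-1})$ gives
\begin{equation*}
    \theta^\top V\,\theta = \sum_{j} \vert\theta_j\vert^{-1}\theta_j^2 = \sum_{j}\vert\theta_j\vert = \Vert\theta\Vert_1,
\end{equation*}
so the same rescaling produces the LASSO objective.

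The genuinely delicate point, and the one I expect to be the main obstacle, is the self-referential nature of $V$ in Part 2: since $V$ depends on $\theta$, the object $p_0 = \mathcal N(0,V)$ is not a fixed prior but a formal device, and I would make explicit that the asserted equivalence is an identity between objective functions evaluated at the same $\theta$. Concretely, the quadratic form $\theta^\top V\,\theta$ computed with this $\theta$-dependent $V$ equals $\Vert\theta\Vert_1$ \emph{pointwise} in $\theta$, so the rescaled criterion and the LASSO objective coincide (up to the irrelevant additive constant) as functions of $\theta$ and hence share the minimizer $\hat\theta$. I would also flag the minor bookkeeping that, if an unpenalized intercept is present, it should be excluded from $\Sigma$ and $V$ so that the penalty ranges over the intended coordinates; this accounts for the dimension of $V$ but does not affect the argument. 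Beyond this, the proof is a routine Gaussian second-moment computation.
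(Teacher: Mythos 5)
Your proof is correct and takes essentially the same route as the paper's: rescale the objective in~\eqref{eq:ambiguity_neutral_criterion} by the positive factor $(\alpha+n)/n$ and evaluate $\mathcal R_{p_0}(\theta)$ in closed form, which the paper does via the distributional fact that $y-\theta^\top x\sim\mathcal N(0,1+\Vert\theta\Vert_2^2)$ while you use an equivalent direct second-moment expansion. Your explicit handling of the LASSO case --- which the paper dismisses as ``completely analogous'' --- is a worthwhile addition, in particular the observation that since $V$ depends on $\theta$ the identity $\theta^\top V\theta=\Vert\theta\Vert_1$ must be read as a pointwise equality of objective functions rather than a statement about a fixed prior, along with the intercept bookkeeping explaining the dimension $d-1$ of $V$.
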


Proposition \ref{pro:equivalence_regularization} is insightful because it highlights a novel Bayesian interpretation of Ridge and LASSO linear regression. In fact, it is well known that both methods are equivalent to maximum-a-posteriori estimation of regression coefficients when the latter are assigned either a normal or a Laplace prior. In our setting, instead of a parametric prior on the regression coefficients, we place a nonparametric one on the joint distribution of the response and the covariates. The degree of regularization, then, is naturally guided by the prior confidence parameter $\alpha$ and the sample size $n$. We also note that sparsity is only one of the possible data-generating features one might want to enforce in regularized estimation,\footnote{For instance, one might have prior information on specific correlation patterns among covariates, which could be useful to incorporate in regression training with few data points.} and the nonparametric Bayesian approach offers greater flexibility, compared to Ridge and LASSO, to incorporate such patterns by specifying the prior expectation $p_0$ of the joint response-covariate distribution.

\subsection{Ambiguity Aversion}

As we just showed, adopting a traditional Bayesian framework, uncertainty about the model $p$ is resolved by using the posterior $Q_{\bm \xi^n}$ to directly average $p$ out. This procedure, however, does not take into account the (partly) subjective nature of the beliefs encoded in $Q_{\bm \xi^n}$, and the aversion to this that a statistical decision maker (DM) might have. In fact, the result of the procedure is that the DM ends up minimizing the expected risk, where the average is taken according to the predictive distribution. In practice, then, the latter is put on the same footing as an objectively known probability distribution, such as the true model.

This issue has been thoroughly studied and addressed in the economic decision theory literature \citep{gilboa2016ambiguity, cerreia2013ambiguity}. In that context, the economic DM faces an analogous expected utility maximization problem $\max_{\theta\in\Theta}\mathbb E_{\xi\sim p}[u(\theta,\xi)]$ (e.g., to allocate her capital to a portfolio of investments subject to random economic shocks $\xi$). However, she does not possess enough objective information to pick one single model of the world $p$, but deems a larger set of models plausible. One possibility, then, is that the DM forms a second-order belief (e.g., a prior $Q$) over such set, and resolves uncertainty by directly averaging expected utility profiles $\mathbb E_{\xi\sim p}[u(\theta,\xi)]$ w.r.t. $p\sim Q$.

Just like in our data-driven problem, however, direct averaging does not account for ambiguity aversion. \cite{klibanoff2005smooth} proposed and axiomatized a tractable ‘‘Smooth Ambiguity Aversion" (SmAA) model, whereby second-order averaging is preceded by a deterministic transformation $\phi$ inducing uncertainty aversion via its curvature: The DM optimizes $\int_{\mathscr P_{\Xi}}\phi(\mathbb E_{\xi\sim p}[u(\theta,\xi)])Q(\mathrm dp)$, and criterion~\eqref{eq:criterion} simply specializes the SmAA model to the data-driven case.\footnote{Our approach is also related to recent literature \cite{zhou2015simulation, wu2018bayesian, shapiro2023bayesian} exploring Bayesian ideas in the context of DRO. However, the cited works (i) focus on parametric priors and (ii) rely on either ambiguity sets or statistical risk measures to induce robustness. This is in contrast with our work, which (i) resorts to a more assumption-free nonparametric prior and (ii) leverages the highlighted robustness properties of the simple yet powerful convex transformation $\phi$.} When optimization takes the form of minimization, ambiguity aversion is driven by the degree of convexity of $\phi$.\footnote{In the economic decision theory literature, as the DM usually maximizes a criterion (utility), convexity is replaced by concavity.}  In particular, convexity encodes the DM's tendency to pick decisions that yield less variable expected loss levels across ambiguous probability models. To see this intuitively, examine the simple case when only two models, $p_1$ and $p_2$, are supported by $Q=\frac{1}{2}\delta_{p_1} + \frac{1}{2}\delta_{p_2}$. Consider two decisions $\theta_1$ and $\theta_2$ that, under $p_1$ and $p_2$, yield the expected risks marked on the horizontal axis of Figure \ref{fig:phi}. While $\int\mathcal R_p(\theta_1)Q(\mathrm dp) = \int\mathcal R_p(\theta_2)Q(\mathrm dp) = \mathcal R^*$, the convexity of $\phi$ implies $\int\phi(\mathcal R_p(\theta_1))Q(\mathrm dp) < \int\phi(\mathcal R_p(\theta_2))Q(\mathrm dp)$. That is, although $\theta_1$ and $\theta_2$ yield the same loss in $Q$-expectation, the ambiguity-averse criterion favors $\theta_1$ because it ensures less variability across uncertain distributions $p_1$ and $p_2$.

\begin{figure}[t]
\begin{center}
\centerline{\includegraphics[width=0.6\textwidth]{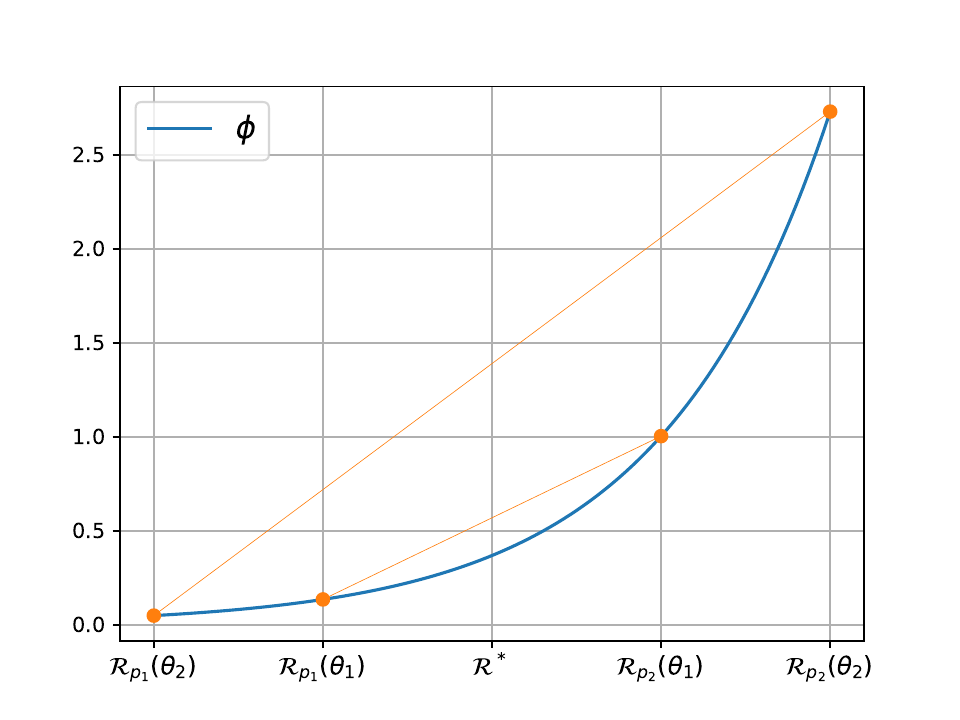}}
\caption{Graphical display of smooth ambiguity aversion at work. Although $\theta_1$ and $\theta_2$ yield the same loss $\mathcal R^*$ in $Q$-expectation, the ambiguity averse criterion favors the less variable decision $\theta_1$. Graphically, this is because the orange line connecting $\phi(\mathcal R_{p_1}(\theta_1))$ to $\phi(\mathcal R_{p_2}(\theta_1))$ lies (point-wise) below the line connecting $\phi(\mathcal R_{p_1}(\theta_2))$ to $\phi(\mathcal R_{p_2}(\theta_2))$.}
\label{fig:phi}
\end{center}
\vskip -0.3in
\end{figure}

Interestingly, \cite{cerreia2011uncertainty} showed that the SmAA model belongs to a general class of ambiguity-averse preferences, which admit a common utility function representation. For SmAA preferences with $\phi(t)=\beta\exp(\beta^{-1}t)-\beta$ (with $\beta>0$ and under additional technical assumptions), this representation implies the equivalence of problem~\eqref{eq:criterion} with
\begin{equation*}
    \min_{\theta\in\Theta}\max_{P:P\ll Q_{\bm \xi^n}} \big\{\mathbb E_{p\sim P} [\mathcal R_p(\theta)] - \beta\textnormal{KL}(P\Vert Q_{\bm \xi^n})\big\},
\end{equation*}
where $\textnormal{KL}(\cdot\Vert\cdot)$ is the Kullback-Leibler divergence and $\ll$ denotes absolute continuity. The above result further clarifies the mechanism through which distributional robustness is induced: Intuitively, instead of directly averaging over $p\sim Q_{\bm \xi^n}$, one computes a worst-case scenario w.r.t. the mixing measure, penalizing distributions that are further away from the posterior -- the latter acts as a reference probability measure. 
Moreover, in the limiting case $\beta \to 0$, the mM-DRO setup is recovered, with ambiguity set $\mathcal C = \{p\in \mathscr P_{\Xi} : \exists P\ll Q_{\bm \xi^n}, p = \int_{\mathscr P_\Xi}q P(\mathrm dq)\}$. In the other limiting case $\beta \to \infty$ (with the convention $0\cdot\infty=0$), the ambiguity neutral Bayesian criterion (\ref{eq:ambiguity_neutral_criterion}) is instead recovered.

\section{Statistical Properties}
\label{sec:stat_properties}

In this section, we analyze the statistical properties of the criterion $V_{\bm \xi^n}(\theta)$, as a function of the sample size $n$. A first issue of interest, addressed in Proposition \ref{pro:pointwise_convergence}, is to study its asymptotic point-wise behavior (cf. \cite{ghosal2017fundamentals}, Corollary 4.17).
\begin{proposition}
\label{pro:pointwise_convergence}
    Let $\bm \xi^n$ be iid according to $p_*$ and $\xi\mapsto h(\theta,\xi)$ continuous for all $\theta\in \Theta$. Then, for all $\theta \in \Theta$,
    \begin{equation*}
        \lim_{n\to \infty}V_{\bm \xi^n}(\theta) = \phi(\mathcal R_{p_*}(\theta))
    \end{equation*}
    almost surely.
\end{proposition}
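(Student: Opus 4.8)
My plan is to prove the almost-sure convergence by reducing it to a concentration statement for the bounded linear functional $R:=\mathcal R_p(\theta)=\int_\Xi h(\theta,\xi)\,p(\mathrm d\xi)$ under the posterior $Q_{\bm\xi^n}$, and then transferring that concentration through the fixed map $\phi$. The starting observation is that $V_{\bm\xi^n}(\theta)=\mathbb E_{p\sim Q_{\bm\xi^n}}[\phi(R)]$ and that, since $h$ takes values in $[0,K]$, one has $R\in[0,K]$ for every $p$; hence $\phi$ only ever needs to be evaluated on the compact interval $[0,K]$, on which it is bounded (say by $M$) and uniformly continuous. Continuity of $\xi\mapsto h(\theta,\xi)$ together with its boundedness makes $p\mapsto\mathcal R_p(\theta)$ weakly continuous, so $\phi(R)$ is a genuine bounded measurable functional of $p$ and the integral defining $V_{\bm\xi^n}(\theta)$ is well posed.

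The first main step is to exploit DP conjugacy, $Q_{\bm\xi^n}=\mathrm{DP}(\alpha+n,\bar p_n)$ with $\bar p_n=\tfrac{\alpha}{\alpha+n}p_0+\tfrac{n}{\alpha+n}p_{\bm\xi^n}$, to compute the first two posterior moments of $R$. The standard DP moment formulas give $\mathbb E_{Q_{\bm\xi^n}}[R]=\mathcal R_{\bar p_n}(\theta)$ and $\mathbb V_{Q_{\bm\xi^n}}[R]=(\alpha+n+1)^{-1}\,\mathrm{Var}_{\bar p_n}\!\big(h(\theta,\cdot)\big)$. Because $h\in[0,K]$, the variance numerator is bounded by $K^2/4$ uniformly in the sample, so the posterior variance is at most $K^2/\big(4(\alpha+n+1)\big)$ and tends to $0$ deterministically. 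For the posterior mean I would write $\mathcal R_{\bar p_n}(\theta)=\tfrac{\alpha}{\alpha+n}\mathcal R_{p_0}(\theta)+\tfrac{n}{\alpha+n}\,\tfrac1n\sum_{i=1}^n h(\theta,\xi_i)$; the prior term vanishes (a bounded factor times $\alpha/(\alpha+n)\to0$), while the empirical term converges to $\mathcal R_{p_*}(\theta)$ almost surely by the strong law of large numbers, the summands $h(\theta,\xi_i)$ being i.i.d., bounded, with mean $\mathcal R_{p_*}(\theta)$. Hence $\mathcal R_{\bar p_n}(\theta)\to\mathcal R_{p_*}(\theta)$ a.s.

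The second main step transfers the concentration through $\phi$. Fixing $\epsilon>0$ and choosing $\delta>0$ from uniform continuity of $\phi$ on $[0,K]$, I would split the posterior expectation according to whether $|R-\mathcal R_{\bar p_n}(\theta)|<\delta$ or not: the first region contributes at most $\epsilon$, and the second at most $2M\cdot Q_{\bm\xi^n}\big(|R-\mathcal R_{\bar p_n}(\theta)|\ge\delta\big)\le 2M\,\mathbb V_{Q_{\bm\xi^n}}[R]/\delta^2$ by Chebyshev's inequality. Since the variance bound is deterministic and vanishes, this yields $\limsup_n\big|V_{\bm\xi^n}(\theta)-\phi(\mathcal R_{\bar p_n}(\theta))\big|\le\epsilon$, and letting $\epsilon\downarrow0$ gives $V_{\bm\xi^n}(\theta)-\phi(\mathcal R_{\bar p_n}(\theta))\to0$. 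Combining this with $\mathcal R_{\bar p_n}(\theta)\to\mathcal R_{p_*}(\theta)$ a.s. and continuity of $\phi$ at $\mathcal R_{p_*}(\theta)$ closes the argument. An equivalent, more topological route would invoke weak consistency of the DP posterior, $Q_{\bm\xi^n}\Rightarrow\delta_{p_*}$ on $\mathscr P_\Xi$ (cf. the cited Corollary 4.17), and apply the portmanteau theorem to the bounded continuous functional $p\mapsto\phi(\mathcal R_p(\theta))$.

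I expect the one genuinely delicate point to be the bookkeeping of the almost-sure statement: the posterior variance bound is deterministic and uniform over samples, whereas convergence of the posterior mean is only almost sure, so the two facts must be combined on the single probability-one event on which the SLLN holds. The DP second-moment formula is the other ingredient that needs to be stated with care, though it is standard; everything else is routine once concentration of the linear functional $R$ is in hand.
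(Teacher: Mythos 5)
Your proof is correct, but it follows a genuinely different route from the paper's. The paper's argument is essentially two lines: it cites Corollary 4.17 of Ghosal and van der Vaart for almost-sure weak consistency of the DP posterior, $Q_{\bm\xi^n}\rightsquigarrow\delta_{p_*}$, and then verifies that $p\mapsto\phi(\mathcal R_p(\theta))$ is a bounded continuous functional on $\mathscr P_\Xi$ (this is precisely where the continuity of $\xi\mapsto h(\theta,\xi)$ is used, since weak continuity of $p\mapsto\mathcal R_p(\theta)$ requires the integrand to be bounded and continuous), so the conclusion is immediate from the definition of weak convergence --- exactly the ``more topological route'' you mention at the end. Your argument instead is self-contained modulo the standard DP moment formulas: conjugacy gives $\mathbb E_{Q_{\bm\xi^n}}[R]=\mathcal R_{\bar p_n}(\theta)$ and $\mathbb V_{Q_{\bm\xi^n}}[R]=\mathrm{Var}_{\bar p_n}(h(\theta,\cdot))/(\alpha+n+1)\leq K^2/\big(4(\alpha+n+1)\big)$, and the Chebyshev-plus-uniform-continuity step correctly transfers this concentration through $\phi$; combining the deterministic variance bound with the SLLN on a single probability-one event is indeed routine, as you anticipated. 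Your route buys two things the paper's does not: (i) it is quantitative, yielding a deterministic, sample-uniform bound on $\vert V_{\bm\xi^n}(\theta)-\phi(\mathcal R_{\bar p_n}(\theta))\vert$ governed by $(\alpha+n+1)^{-1}$ and the modulus of continuity of $\phi$ on $[0,K]$; and (ii) it never actually uses continuity of $\xi\mapsto h(\theta,\xi)$ --- bounded measurability suffices, since $p\mapsto\int h(\theta,\xi)\,p(\mathrm d\xi)$ is Borel measurable on $\mathscr P_\Xi$ without continuity --- so you have in fact proved the proposition under a weaker hypothesis than stated. What the paper's approach buys in exchange is brevity and the fact that it leans on a general consistency theorem that extends beyond the DP, whereas your moment computation exploits DP conjugacy specifically.
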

This ensures that, as more data are collected, the proposed criterion approaches, with probability 1, the true theoretical risk (up to the strictly increasing transformation $\phi$).

While point-wise convergence to the target ground truth is a first desirable property for any sensible criterion, it is not enough to characterize the behavior of the optimization's out-of-sample performance, nor the closeness of the optimal criterion value and the criterion optimizer(s) to their theoretical counterparts. In the following subsections, we study these properties both in the finite-sample regime and in the asymptotic limit $n\to\infty$.

\paragraph{Finite-Sample Guarantees.}Denote
\begin{equation*}
    \theta_n\in\underset{\theta \in \Theta}{\arg\min}V_{\bm \xi^n}(\theta), \quad \theta_* \in\underset{\theta \in \Theta}{\arg\min}\mathcal R_{p_*}(\theta),
\end{equation*}
and we assume the above sets of minimizers to be non-empty throughout the article. In finite-sample analysis, a first question of interest is whether probabilistic performance guarantees hold for the robust criterion optimizer $\theta_n$. In our setting, one can naturally measure performance by the narrowness of the gap between $\mathcal R_{p_*}(\theta_n)$ and $\mathcal R_{p_*}(\theta_*)$. As we clarify later, Lemma \ref{lem:finite_sample_bounds} is a first step towards establishing this type of guarantees.
\begin{lemma}\label{lem:finite_sample_bounds}
    Let $\phi$ be twice continuously differentiable on $(0,K)$, with $M_\phi:= \sup_{t\in(0,K)}\phi'(t)<+\infty$ and $\gamma_\phi^*:=\sup_{t\in(0,K)}\gamma_\phi(t)<+\infty$, where $\gamma_\phi(t):=\phi''(t)/\phi'(t)\geq 0$. Then
    \begin{align*}
        \sup_{\theta \in \Theta} \vert V_{\bm \xi^n}(\theta) & - \phi(\mathcal R_{p_*}(\theta))\vert  \leq M_\phi \Bigg[ \frac{n}{\alpha + n}\sup_{\theta \in \Theta}\vert\mathcal R_{p_{\bm\xi^n}}(\theta) - \mathcal R_{p_*}(\theta)\vert + \frac{\alpha}{\alpha + n}K + \frac{K^2}{2}\gamma_\phi^* \Bigg].
    \end{align*}
\end{lemma}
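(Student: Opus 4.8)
The plan is to exploit the convexity and smoothness of $\phi$ through a second-order Taylor expansion of $\phi(\mathcal R_p(\theta))$ around the posterior mean of $\mathcal R_p(\theta)$, which the Dirichlet process conjugacy makes explicit. Writing $\bar R_\theta := \mathbb E_{p\sim Q_{\bm\xi^n}}[\mathcal R_p(\theta)]$, the order-of-integration argument already used to derive criterion~\eqref{eq:ambiguity_neutral_criterion} gives $\bar R_\theta = \mathcal R_{p(\cdot\vert\bm\xi^n)}(\theta) = \frac{\alpha}{\alpha+n}\mathcal R_{p_0}(\theta) + \frac{n}{\alpha+n}\mathcal R_{p_{\bm\xi^n}}(\theta)$. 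Since $\phi$ is $C^2$ on $(0,K)$ and $\mathcal R_p(\theta)\in[0,K]$ for every $p$ (because $h\in[0,K]$), I would expand, for each fixed $\theta$ and $p$,
\begin{equation*}
    \phi(\mathcal R_p(\theta)) = \phi(\bar R_\theta) + \phi'(\bar R_\theta)\big(\mathcal R_p(\theta) - \bar R_\theta\big) + \tfrac{1}{2}\phi''(\zeta)\big(\mathcal R_p(\theta) - \bar R_\theta\big)^2,
\end{equation*}
with $\zeta$ lying between $\mathcal R_p(\theta)$ and $\bar R_\theta$. Integrating against $Q_{\bm\xi^n}$ kills the linear term by definition of $\bar R_\theta$, leaving $V_{\bm\xi^n}(\theta) = \phi(\bar R_\theta) + \tfrac{1}{2}\mathbb E_{Q_{\bm\xi^n}}[\phi''(\zeta)(\mathcal R_p(\theta) - \bar R_\theta)^2]$.

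Next I would apply the triangle inequality to split $\vert V_{\bm\xi^n}(\theta) - \phi(\mathcal R_{p_*}(\theta))\vert$ into a \emph{bias} contribution $\vert\phi(\bar R_\theta) - \phi(\mathcal R_{p_*}(\theta))\vert$ and the Taylor \emph{remainder}. For the bias term, the mean value theorem together with $\sup\phi' = M_\phi$ gives the Lipschitz bound $M_\phi\vert\bar R_\theta - \mathcal R_{p_*}(\theta)\vert$; decomposing $\bar R_\theta - \mathcal R_{p_*}(\theta) = \frac{\alpha}{\alpha+n}(\mathcal R_{p_0}(\theta) - \mathcal R_{p_*}(\theta)) + \frac{n}{\alpha+n}(\mathcal R_{p_{\bm\xi^n}}(\theta) - \mathcal R_{p_*}(\theta))$ and bounding $\vert\mathcal R_{p_0}(\theta) - \mathcal R_{p_*}(\theta)\vert\leq K$ yields exactly the first two bracketed terms. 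For the remainder, I would write $\phi''(\zeta) = \gamma_\phi(\zeta)\phi'(\zeta)\leq \gamma_\phi^* M_\phi$, note that $\phi''\geq 0$ makes the integrand nonnegative, and use $(\mathcal R_p(\theta) - \bar R_\theta)^2\leq K^2$ (both quantities lie in $[0,K]$) to get $\tfrac{1}{2}\mathbb E_{Q_{\bm\xi^n}}[\phi''(\zeta)(\cdots)^2]\leq M_\phi\tfrac{K^2}{2}\gamma_\phi^*$. Taking the supremum over $\theta\in\Theta$ and collecting the three pieces delivers the claimed inequality.

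The routine measure-theoretic point to verify is the Fubini/Tonelli justification for swapping integrals when identifying $\bar R_\theta$ with the predictive mean, which is immediate since $h$ is bounded and measurable. The one genuinely delicate step is the remainder control: the Taylor intermediate point $\zeta$ can in principle reach the endpoints $\{0,K\}$, where $\phi$ is only assumed $C^2$ on the \emph{open} interval $(0,K)$, so the bounds $M_\phi$ and $\gamma_\phi^*$ are defined as suprema over $(0,K)$. I expect this to be the main obstacle, handled by restricting attention to the event $\mathcal R_p(\theta)\in(0,K)$ and arguing that the boundary contributes negligibly—e.g., via continuity of $\phi$ and $\phi'$ or by passing to the limit from interior points—so that the interior suprema still dominate the remainder $Q_{\bm\xi^n}$-almost everywhere.
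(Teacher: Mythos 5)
Your proof is correct and is essentially the paper's argument: a second-order Taylor expansion of $\phi(\mathcal R_p(\theta))$ combined with the DP predictive-mean identity $\mathbb E_{p\sim Q_{\bm\xi^n}}[\mathcal R_p(\theta)] = \frac{\alpha}{\alpha+n}\mathcal R_{p_0}(\theta) + \frac{n}{\alpha+n}\mathcal R_{p_{\bm\xi^n}}(\theta)$, the bounds $\phi'\leq M_\phi$ and $\phi''=\gamma_\phi\,\phi'\leq \gamma_\phi^* M_\phi$, and the crude bounds $\vert\mathcal R_{p_0}(\theta)-\mathcal R_{p_*}(\theta)\vert\leq K$ and $(\cdot)^2\leq K^2$; the only difference is cosmetic, namely your expansion point (the posterior mean $\bar R_\theta$, which annihilates the linear term upon integration and relocates the first-order contribution into the bias $\vert\phi(\bar R_\theta)-\phi(\mathcal R_{p_*}(\theta))\vert$ handled by the mean value theorem) versus the paper's $\mathcal R_{p_*}(\theta)$ (where the integrated linear term is bounded directly by $M_\phi$), and both routes deliver the identical constant. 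One small correction: the boundary issue you flag as ``the main obstacle'' needs no event-restriction or limiting argument, since the Lagrange intermediate point always lies strictly between two \emph{distinct} points of $[0,K]$ and hence automatically in $(0,K)$, while the remainder vanishes identically when the two points coincide --- which is also the reading that fixes the paper's slightly loose statement that $c_{p,\theta}\in[0,K]$.
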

Lemma \ref{lem:finite_sample_bounds} links the $\sup$ distance of the criterion $V_{\bm \xi^n}$ from the theoretical risk to three key objects:

\begin{enumerate}
    \item The classical $\sup$ distance between the empirical and theoretical risk, $\sup_{\theta \in \Theta}\vert\mathcal R_{p_{\bm\xi^n}}(\theta) - \mathcal R_{p_*}(\theta)\vert$;

    \item The $\sup$ distance between the theoretical risk and the risk computed w.r.t.\@ the base probability measure $p_0$, $\sup_{\theta \in \Theta}\vert \mathcal R_{p_0}(\theta)- \mathcal R_{p_*}(\theta)\vert$. In fact, while in the formulation of Lemma \ref{lem:finite_sample_bounds} we bound such distance by $K$ (see the second addendum) to eliminate the dependence on the unknown but fixed $p_*$, one could equivalently replace $K$ by $\sup_{\theta \in \Theta}\vert \mathcal R_{p_0}(\theta)- \mathcal R_{p_*}(\theta)\vert$. This clarifies that, if $p_0$ is a good guess for $p_*$, i.e., if the above $\sup$ distance is small, adopting a Bayesian prior centered at $p_0$ can improve finite sample bounds;

    \item The \emph{Arrow-Pratt coefficient} $\gamma_\phi(t)$ of absolute ambiguity aversion. In the economic theory literature on decision-making under risk, this is a well-known concept measuring the degree of risk aversion of decision makers, with point-wise larger values of $\gamma_\phi$ corresponding to more risk aversion. See \cite{klibanoff2005smooth} for a discussion on the straightforward adaptation of this measure to the ambiguity (rather than risk) aversion setup we work in.
\end{enumerate}



Most importantly, Lemma \ref{lem:finite_sample_bounds} allows us to prove the following Theorem, which yields the performance guarantees we are after.

\begin{theorem}\label{thm:finite_sample_bounds}
    For all $\delta>0$
    \begin{align*}
    & \mathbb P[\mathcal \phi(\mathcal R_{p_*}(\theta_n)) - \phi(\mathcal R_{p_*}(\theta_*))\leq \delta] \\
    & \geq \mathbb P\Bigg[ \sup_{\theta \in \Theta}\left\vert \mathcal R_{p_{\bm\xi^n}}(\theta) - \mathcal R_{p_*}(\theta)\right\vert \leq \hspace{0.1cm} \frac{\alpha + n}{n}\left(\frac{\delta}{2M_\phi} - \frac{\alpha}{\alpha + n}K- \frac{K^2}{2}\gamma_\phi^*\right) \Bigg].
\end{align*}
\end{theorem}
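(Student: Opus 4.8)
The plan is to reduce the theorem to a purely deterministic event-inclusion statement, after which the probabilistic claim follows immediately from monotonicity of $\mathbb P$. Throughout, write $U_n := \sup_{\theta \in \Theta}\vert V_{\bm \xi^n}(\theta) - \phi(\mathcal R_{p_*}(\theta))\vert$ for the uniform gap that Lemma~\ref{lem:finite_sample_bounds} already controls.

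First I would establish the standard ``sandwich'' bound on the excess risk of the data-driven optimizer. Decompose
\begin{align*}
\phi(\mathcal R_{p_*}(\theta_n)) - \phi(\mathcal R_{p_*}(\theta_*)) &= \big[\phi(\mathcal R_{p_*}(\theta_n)) - V_{\bm \xi^n}(\theta_n)\big] + \big[V_{\bm \xi^n}(\theta_n) - V_{\bm \xi^n}(\theta_*)\big] \\
&\quad + \big[V_{\bm \xi^n}(\theta_*) - \phi(\mathcal R_{p_*}(\theta_*))\big].
\end{align*}
The first and third brackets are each bounded above by $U_n$ in absolute value, while the middle bracket is non-positive because $\theta_n$ minimizes $V_{\bm \xi^n}$ over $\Theta$. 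This yields the pathwise bound $\phi(\mathcal R_{p_*}(\theta_n)) - \phi(\mathcal R_{p_*}(\theta_*)) \leq 2 U_n$. The one conceptual subtlety I would flag here is that, since $\phi$ is strictly increasing, $\theta_*$ minimizing $\mathcal R_{p_*}$ also minimizes $\phi(\mathcal R_{p_*}(\cdot))$, so the left-hand side is genuinely a nonnegative excess-risk quantity and the decomposition is legitimate.

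Next I would insert the bound from Lemma~\ref{lem:finite_sample_bounds} for $U_n$, giving
\begin{equation*}
\phi(\mathcal R_{p_*}(\theta_n)) - \phi(\mathcal R_{p_*}(\theta_*)) \leq 2 M_\phi\Bigg[\frac{n}{\alpha + n}\sup_{\theta \in \Theta}\vert\mathcal R_{p_{\bm\xi^n}}(\theta) - \mathcal R_{p_*}(\theta)\vert + \frac{\alpha}{\alpha + n}K + \frac{K^2}{2}\gamma_\phi^*\Bigg]
\end{equation*}
on every sample. A short algebraic rearrangement then shows that whenever $\sup_{\theta \in \Theta}\vert\mathcal R_{p_{\bm\xi^n}}(\theta) - \mathcal R_{p_*}(\theta)\vert \leq \frac{\alpha + n}{n}\big(\frac{\delta}{2M_\phi} - \frac{\alpha}{\alpha + n}K - \frac{K^2}{2}\gamma_\phi^*\big)$, the right-hand side above is at most $\delta$, and hence $\phi(\mathcal R_{p_*}(\theta_n)) - \phi(\mathcal R_{p_*}(\theta_*)) \leq \delta$. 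In other words, the empirical-process event appearing on the right of the theorem is contained in the excess-risk event on the left, and applying $\mathbb P$ to this set inclusion delivers the stated inequality.

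I do not anticipate a genuine obstacle: the whole argument is deterministic up to the final monotonicity step, and all the probabilistic content has deliberately been isolated into the uniform deviation $\sup_{\theta \in \Theta}\vert\mathcal R_{p_{\bm\xi^n}}(\theta) - \mathcal R_{p_*}(\theta)\vert$, which the theorem leaves unbounded so that any preferred concentration or Glivenko--Cantelli-type control can later be plugged in. The only points needing minor care are the sign bookkeeping in the three-term decomposition and verifying that the rearranged threshold matches the stated expression exactly; I would also remark that the bound is informative only when that threshold is nonnegative, i.e., when $\delta$ is large enough relative to the prior-misspecification term $\frac{\alpha}{\alpha+n}K$ and the ambiguity-aversion term $\frac{K^2}{2}\gamma_\phi^*$.
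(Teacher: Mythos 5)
Your proposal is correct and follows essentially the same route as the paper's proof: the identical three-term decomposition with the middle term non-positive by optimality of $\theta_n$, the sandwich bound by twice the uniform gap, insertion of Lemma~\ref{lem:finite_sample_bounds}, and the algebraic rearrangement yielding the event inclusion (the paper writes this as a chain of probability inequalities, which is equivalent to your deterministic-inclusion-plus-monotonicity phrasing). Your closing remark that the bound is vacuous unless the threshold is nonnegative is a sensible observation the paper leaves implicit, but it does not change the argument.
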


Theorem \ref{thm:finite_sample_bounds} allows to obtain finite-sample probabilistic guarantees on the excess risk $\phi(\mathcal R_{p_*}(\theta_n)) - \phi(\mathcal R_{p_*}(\theta_*))$ via bounds on $\sup_{\theta \in \Theta}\left\vert \mathcal R_{p_{\bm\xi^n}}(\theta) - \mathcal R_{p_*}(\theta)\right\vert$. The latter is a well-studied quantity, and the sought bounds follow from standard results relying on conditions on the complexity of the function class $\mathscr H:=\{h(\theta, \cdot):\theta\in\Theta\}$, as measured by its Vapnik–Chervonenkis dimension, metric entropy, etc. We refer the reader to \cite{wainwright2019high, vershynin2018high} for a systematic treatment of the topic and specific useful results.

\paragraph{Asymptotic Guarantees.} So far, we have studied the finite-sample behavior of the out-of-sample performance of $\theta_n$. Another closely related type of results deals with the asymptotic limit of such performance, as well as with the convergence of optimum criterion values and optimizing parameters to their ground-truth counterparts. In this Subsection, attention is turned to theoretical results of this kind.

Finite-sample guarantees on $\sup_{\theta \in \Theta}\left\vert \mathcal R_{p_{\bm\xi^n}}(\theta) - \mathcal R_{p_*}(\theta)\right\vert$ are usually of the form
\begin{equation*}
    \mathbb P\Big[\sup_{\theta \in \Theta}\left\vert \mathcal R_{p_{\bm\xi^n}}(\theta) - \mathcal R_{p_*}(\theta)\right\vert\leq\delta\Big]\geq 1-\eta_n,
\end{equation*}
with $\sum_{n=1}^\infty\eta_n<\infty$. This implies (via a straightforward application of the first Borel-Cantelli Lemma) the almost sure vanishing of $\sup_{\theta \in \Theta}\left\vert \mathcal R_{p_{\bm\xi^n}}(\theta) - \mathcal R_{p_*}(\theta)\right\vert$. Thus, we include this as an assumption of the next Theorem. Moreover, we introduce a functional dependence of $\phi$ on $n$, and denote $\phi\equiv\phi_n$ accordingly.

\begin{theorem}
\label{thm:uniform_convergence_criterion}
    Retain the assumptions of Lemma \ref{lem:finite_sample_bounds} and $\lim_{n\to\infty}\sup_{\theta \in \Theta}\left\vert \mathcal R_{p_{\bm\xi^n}}(\theta) - \mathcal R_{p_*}(\theta)\right\vert = 0$ almost surely. Moreover, assume that $\phi_n$ satisfies (1) $\lim_{n\rightarrow \infty}\gamma_{\phi_n}^* = 0$, (2) $\sup_{n\geq 1}M_{\phi_n} < \infty$, and (3) $\lim_{n\rightarrow\infty}\sup_{t\in[0,K]}\vert\phi_n(t)-t\vert = 0$. Then the next two almost sure limits hold:
    \begin{align*}
        \lim_{n\rightarrow\infty}\mathcal R_{p_*}(\theta_n) = \mathcal R_{p_*}(\theta_*), \quad
        \lim_{n\rightarrow\infty}V_{\bm \xi^n}(\theta_n) = \mathcal R_{p_*}(\theta_*).
    \end{align*}
\end{theorem}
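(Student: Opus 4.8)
The plan is to collapse both claimed limits into a single uniform-convergence statement, $\sup_{\theta\in\Theta}|V_{\bm\xi^n}(\theta)-\mathcal R_{p_*}(\theta)|\to 0$ almost surely, and then finish with a standard argmin-stability argument. First I would apply Lemma \ref{lem:finite_sample_bounds} with $\phi$ replaced by $\phi_n$, obtaining
\[
\sup_{\theta\in\Theta}\big|V_{\bm\xi^n}(\theta)-\phi_n(\mathcal R_{p_*}(\theta))\big|\leq M_{\phi_n}\left[\frac{n}{\alpha+n}\sup_{\theta\in\Theta}\big|\mathcal R_{p_{\bm\xi^n}}(\theta)-\mathcal R_{p_*}(\theta)\big|+\frac{\alpha}{\alpha+n}K+\frac{K^2}{2}\gamma_{\phi_n}^*\right].
\]
I would then argue this right-hand side vanishes almost surely: the factor $M_{\phi_n}$ stays bounded by assumption (2); inside the bracket, the first term tends to $0$ almost surely by the assumed uniform law of large numbers on the empirical risk (using $n/(\alpha+n)\le 1$), the second term tends to $0$ deterministically, and the third tends to $0$ by assumption (1).

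Next I would pass from $\phi_n(\mathcal R_{p_*}(\theta))$ to $\mathcal R_{p_*}(\theta)$ using assumption (3). Since $h$ takes values in $[0,K]$, we have $\mathcal R_{p_*}(\theta)\in[0,K]$ for every $\theta$, whence $\sup_{\theta\in\Theta}|\phi_n(\mathcal R_{p_*}(\theta))-\mathcal R_{p_*}(\theta)|\le \sup_{t\in[0,K]}|\phi_n(t)-t|\to 0$. A triangle inequality then delivers the key conclusion
\[
\varepsilon_n:=\sup_{\theta\in\Theta}\big|V_{\bm\xi^n}(\theta)-\mathcal R_{p_*}(\theta)\big|\longrightarrow 0\quad\text{almost surely}.
\]

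Finally I would fix a realization in the probability-one event on which $\varepsilon_n\to 0$ and run the routine chain of inequalities pathwise. For the optimal-value limit, optimality of $\theta_n$ gives $V_{\bm\xi^n}(\theta_n)\le V_{\bm\xi^n}(\theta_*)\le \mathcal R_{p_*}(\theta_*)+\varepsilon_n$, while optimality of $\theta_*$ for $\mathcal R_{p_*}$ gives $V_{\bm\xi^n}(\theta_n)\ge \mathcal R_{p_*}(\theta_n)-\varepsilon_n\ge \mathcal R_{p_*}(\theta_*)-\varepsilon_n$; together these sandwich $V_{\bm\xi^n}(\theta_n)$ within $\varepsilon_n$ of $\mathcal R_{p_*}(\theta_*)$, proving the second limit. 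For the true-risk limit I would combine $\mathcal R_{p_*}(\theta_*)\le \mathcal R_{p_*}(\theta_n)\le V_{\bm\xi^n}(\theta_n)+\varepsilon_n\le \mathcal R_{p_*}(\theta_*)+2\varepsilon_n$ to conclude $|\mathcal R_{p_*}(\theta_n)-\mathcal R_{p_*}(\theta_*)|\le 2\varepsilon_n\to 0$, which gives the first limit.

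The genuine content lies entirely in establishing $\varepsilon_n\to 0$; the argmin step is a standard consequence of uniform convergence. I expect the only real subtlety to be bookkeeping the almost-sure qualifier: the empirical-risk uniform law of large numbers is the sole random ingredient, so I would first restrict to the probability-one event on which it holds, after which every remaining estimate (the deterministic decay of $\alpha/(\alpha+n)$, $\gamma_{\phi_n}^*$ and $\sup_{t\in[0,K]}|\phi_n(t)-t|$, together with the inequalities in the argmin argument) is purely analytic and holds for each such fixed path.
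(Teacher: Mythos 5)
Your proposal is correct and follows essentially the same route as the paper: both hinge on applying Lemma \ref{lem:finite_sample_bounds} with $\phi_n$ (using assumptions (1)--(2) and the almost-sure uniform law of large numbers) and then invoking assumption (3) to strip off $\phi_n$. The only difference is organizational: you fold assumption (3) in early to get the single statement $\sup_{\theta\in\Theta}\vert V_{\bm\xi^n}(\theta)-\mathcal R_{p_*}(\theta)\vert\to 0$ and then run a standard argmin sandwich against $\mathcal R_{p_*}$ directly, whereas the paper first carries out the excess-risk decomposition in $\phi_n$-transformed space (its display \eqref{eq:excess_risk_decomposed}) and only removes $\phi_n$ at the end via triangle inequalities --- your ordering is, if anything, slightly cleaner.
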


Theorem \ref{thm:uniform_convergence_criterion} is crucial because it ensures that, asymptotically, the excess risk vanishes and the finite-sample optimal value converges to the optimal value under the data generating process.\footnote{Using an analogous line of reasoning as in the proof of Theorem \ref{thm:uniform_convergence_criterion} (see Appendix \ref{app:proofs}), we note that Lemma \ref{lem:finite_sample_bounds} and Theorem \ref{thm:finite_sample_bounds} can be easily adapted in to obtain finite sample bounds on $\sup_{\theta\in \Theta} \vert V_{\bm \xi^n}(\theta) - \mathcal R_{p_*}(\theta)\vert$ and $\mathcal R_{p_*}(\theta_n)-\mathcal R_{p_*}(\theta_*)$ depending on $\sup_{t\in[0,K]}\vert\phi(t)-t\vert$.}

\begin{remark}
    From a design point of view, the type of $n$-dependent parametrization of $\phi$ required in Theorem \ref{thm:uniform_convergence_criterion} is sensible, as it is equivalent to adopting vanishing levels of ambiguity aversion (uniformly vanishing Arrow-Pratt coefficient) as the sample size grows -- that is, as one obtains a more and more precise estimate of the true distribution $p_*$. Moreover, this assumption is in the spirit of the condition imposed on the radius of the Wasserstein ambiguity ball in \cite{mohajerin2018data}, which is required to vanish as the sample size grows. Finally, it is easy to show that $\phi_n(t)=\beta_n\exp(\beta_n^{-1}t)-\beta_n$, for positive $\beta_n\to \infty$, satisfies the conditions of Theorem \ref{thm:uniform_convergence_criterion}, and from now on we silently assume this form for $\phi_n$.
\end{remark}


Finally, we leverage the above results to ensure the convergence of the sequence of optimizers $(\theta_n)_{n\geq 1}$ to a theoretical optimizer.
\begin{theorem}\label{thm:optimizer_convergence_1}
    Let $\theta\mapsto h(\theta, \xi)$ be continuous for all $\xi\in\Xi$ and $\lim_{n\rightarrow\infty}\mathcal R_{p_*}(\theta_n)=\mathcal R_{p_*}(\theta_*)$ almost surely (e.g., as ensured in Theorem \ref{thm:uniform_convergence_criterion}). Then, almost surely, $\lim_{n\to\infty}\theta_n = \bar\theta$ implies $\mathcal R_{p_*}(\bar\theta)=\mathcal R_{p_*}(\theta_*)$.
\end{theorem}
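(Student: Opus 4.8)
The plan is to reduce the whole claim to the continuity of the map $\theta\mapsto\mathcal R_{p_*}(\theta)$ on $\Theta$, after which the conclusion follows from nothing more than uniqueness of limits in $\mathbb R$. Indeed, I would work on the almost-sure event on which $\lim_{n\to\infty}\mathcal R_{p_*}(\theta_n)=\mathcal R_{p_*}(\theta_*)$ (such an event has probability one by hypothesis). Fix a realization in this event for which $\theta_n\to\bar\theta$. If $\mathcal R_{p_*}$ is continuous, then $\mathcal R_{p_*}(\theta_n)\to\mathcal R_{p_*}(\bar\theta)$; since the \emph{same} sequence $(\mathcal R_{p_*}(\theta_n))_{n\geq 1}$ also converges to $\mathcal R_{p_*}(\theta_*)$, uniqueness of the limit forces $\mathcal R_{p_*}(\bar\theta)=\mathcal R_{p_*}(\theta_*)$, which is exactly the assertion.

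The only substantive step, then, is to establish the continuity of $\mathcal R_{p_*}$, and I would argue via sequential continuity, which suffices because $\Theta\subseteq\mathbb R^d$ is a metric space. Take any $\theta^{(k)}\to\theta$ in $\Theta$. By the assumed continuity of $\theta\mapsto h(\theta,\xi)$, one has $h(\theta^{(k)},\xi)\to h(\theta,\xi)$ for every $\xi\in\Xi$. Moreover, the standing assumption $h:\Theta\times\Xi\to[0,K]$ provides the uniform bound $\vert h(\theta^{(k)},\xi)\vert\le K$, and the constant $K$ is trivially $p_*$-integrable since $p_*$ is a probability measure. The dominated convergence theorem therefore yields
\begin{equation*}
\mathcal R_{p_*}(\theta^{(k)})=\int_{\Xi}h(\theta^{(k)},\xi)\,p_*(\mathrm d\xi)\longrightarrow\int_{\Xi}h(\theta,\xi)\,p_*(\mathrm d\xi)=\mathcal R_{p_*}(\theta),
\end{equation*}
proving that $\mathcal R_{p_*}$ is continuous on $\Theta$.

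I do not expect a genuine obstacle here: the argument is essentially an application of dominated convergence combined with uniqueness of limits. The only point requiring a word of care is that $\mathcal R_{p_*}(\bar\theta)$ must be well defined, i.e.\ $\bar\theta\in\Theta$; this holds provided $\Theta$ is closed, so that it contains the limit of any convergent sequence of its points, which is the natural setting already implicitly assumed when positing the existence of the minimizers $\theta_n,\theta_*\in\Theta$. With $\bar\theta\in\Theta$ secured, the continuity established above can be applied at $\bar\theta$, and the proof concludes as sketched in the first paragraph.
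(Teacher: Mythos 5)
Your proposal is correct and is essentially the paper's own argument: the paper likewise combines the pointwise continuity of $\theta\mapsto h(\theta,\xi)$ with the dominated convergence theorem (the domination by $K$ being automatic from $h\in[0,K]$) to pass the limit $\theta_n\to\bar\theta$ inside the expectation, concluding $\mathcal R_{p_*}(\bar\theta)=\lim_{n\to\infty}\mathcal R_{p_*}(\theta_n)=\mathcal R_{p_*}(\theta_*)$; packaging this as ``$\mathcal R_{p_*}$ is continuous, then invoke uniqueness of limits'' is only a cosmetic reorganization. Your remark that $\bar\theta\in\Theta$ needs $\Theta$ closed is a fair point of care that the paper glosses over, but it does not change the substance.
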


\section{Monte Carlo Approximation}
\label{sec:MC_approximations}

In what follows, we fix a sample $\bm\xi^n$ and propose simulation strategies to estimate $V_{\bm \xi^n}(\theta)$. The latter, in fact, is analytically intractable due to the infinite dimensionality of $Q_{\bm\xi^n}$. To that end, we exploit a key representation of DPs first established by \cite{sethuraman1994constructive}: If $p\sim\textnormal{DP}(\eta, q)$, then $p\overset{\textnormal d}{=} \sum_{j= 1}^\infty p_j\xi_j$, where $\xi_j \overset{\textnormal{iid}}{\sim} q$ and the sequence of weights $(p_j)_{j\geq 1}$ is constructed via a stick-breaking procedure based on iid $\textnormal{Beta}(1, \eta)$ samples (see Appendix \ref{app:background}). Thus, for large enough integers $T$ and $N$, we propose the following Stick-Breaking Monte Carlo (SBMC) approximation for $ V_{\bm \xi^n}(\theta)$:
\begin{equation}\label{eq:approx_criterion}
    \hat V_{\bm \xi^n}(\theta, T, N) := \frac{1}{N}\sum_{i=1}^N \phi\Bigg( \sum_{j=0}^{T}p_{ij} h(\theta,\xi_{ij})\Bigg),
\end{equation}
where, $T$ denotes the number of stick-breaking steps performed before truncating each Monte Carlo sample from the DP posterior $Q_{\bm\xi^n}$, while $N$ denotes the number of such samples. Algorithm \ref{alg:stickbreaking_approx} in Appendix \ref{app:background} details the procedure, which essentially approximates the posterior DP via truncation and takes expectations accordingly.

\begin{remark}
    We propose to truncate the stick-breaking procedure at some fixed step $T$. Another strategy would involve truncating it at a random step $T_i(\varepsilon):=\min\big\{t\in\mathbb N : \prod_{j=1}^t p_{ij}\leq \varepsilon\big\}$ for some small $\varepsilon>0$. This allows to directly control the approximation error at each Monte Carlo sample \citep{muliere1998approximating, arbel2019PYapproximation}, though it leads to simulated measures with supports of different cardinalities. For the sake of theory, we opt for the fixed-step/random-error approximation, though the random-step/fixed-error one is equally viable in practice.
\end{remark}

\begin{remark}
    On top of being a theory-based approximation for $V_{\bm\xi^n}(\theta)$, the criterion~\eqref{eq:approx_criterion} can be interpreted as implementing a form of \emph{robust Bayesian bootstrap}. Instead of directly averaging the risk $h(\theta,\cdot)$ with respect to the empirical distribution, we first obtain $N$ bootstrap samples of size $T$ from the predictive (which is a compromise between the empirical and the prior centering distributions), we weight observations according to the stick-breaking procedure, and finally take a grand average of the $\phi$-transformed weighted sums.
    This connection with the Bayesian bootstrap suggests the following alternative Multinomial-Dirichlet Monte Carlo (MDMC) version of $V_{\bm\xi^n}(\theta)$:
    \begin{equation*}
        \frac{1}{N}\sum_{i=1}^N\phi\Bigg( \sum_{j=1}^T w_{ij}h(\theta,\xi_{ij}) \Bigg),
    \end{equation*}
    where $(w_{i1}, \dots, w_{iT})\overset{\textnormal{iid}}{\sim} \textnormal{Dirichlet}\big(T; \frac{\alpha + n}{T}, \dots,\frac{\alpha + n}{T}\big)$ and the atoms are iid according to the predictive (see Algorithm \ref{alg:multdir_approx} in Appendix \ref{app:background}).\footnote{In the $\alpha = 0$ limit and setting $T=n$, the well-known ‘‘Bayesian bootstrap distribution" is recovered \citep[][see also Appendix \ref{app:background} for further details]{ghosal2017fundamentals}.} For practical computation, we recommend using the MDMC approximation, as it tends to yield more balanced weights, compared to SBMC, even for low values of $T$.
\end{remark}

With the following results, we ensure finite-sample and asymptotic guarantees on the closeness of optimization procedures based on the SBMC approximation versus the target $V_{\bm\xi^n}$.
\begin{lemma}\label{lem:stickbreak_bounds}
    Assume $\Theta$ is a bounded subset of $\mathbb R^d$ and, for all $\xi\in \Xi$,  $\theta\mapsto h(\theta,\xi)$ is $c(\xi)$-Lipschitz continuous. Then, for all $T,N\geq 1$ and $\varepsilon >0$,
    \begin{align*}
        \sup_{\theta \in \Theta} \big\vert \hat V_{\bm\xi^n}(\theta, T, N) & - V_{\bm\xi^n}(\theta)] \big\vert \leq M_\phi K\times\Bigg[\frac{\alpha + n}{\alpha + n + 1} \Bigg]^T + \varepsilon
    \end{align*}
    with probability at least
    \begin{align*}
        & 1 -2\left(\frac{32 M_\phi C_T\textnormal{diam}(\Theta)\sqrt{d}}{\varepsilon}\right)^d  \times \left[ \exp\left\{-\frac{3N\varepsilon^2}{4\phi(K)(6\phi(K)+\varepsilon)}\right\} + \exp\left\{-\frac{3N\varepsilon}{40\phi(K)}\right\} \right]
    \end{align*}
    for some constant $C_T>0$.
\end{lemma}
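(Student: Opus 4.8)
The plan is to decompose the error into a deterministic truncation bias and a random Monte Carlo fluctuation, then control the latter by a covering argument combined with Bernstein's inequality. Writing $\mathbb{E}_{Q}$ for expectation under the posterior $Q_{\bm\xi^n}$ and introducing the mean of a single truncated sample, $\bar V_{\bm\xi^n}(\theta,T):=\mathbb{E}_{Q}\big[\phi(\sum_{j=0}^{T}p_{j}h(\theta,\xi_{j}))\big]$, I would split
\[
\hat V_{\bm\xi^n}(\theta,T,N)-V_{\bm\xi^n}(\theta)=\underbrace{\big(\hat V_{\bm\xi^n}(\theta,T,N)-\bar V_{\bm\xi^n}(\theta,T)\big)}_{\text{MC fluctuation}}+\underbrace{\big(\bar V_{\bm\xi^n}(\theta,T)-V_{\bm\xi^n}(\theta)\big)}_{\text{truncation bias}}
\]
and bound the two pieces separately.

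For the truncation bias, since $\phi$ is $M_\phi$-Lipschitz on $[0,K]$ and $h\le K$, the single-sample integrand differs from its untruncated counterpart by at most $M_\phi K\sum_{j>T}p_{j}$, uniformly in $\theta$. Because the stick-breaking weights of the $\textnormal{DP}(\alpha+n,\,\cdot\,)$ posterior satisfy $\sum_{j>T}p_{j}=\prod_{l=1}^{T}(1-V_{l})$ with $V_{l}\overset{\textnormal{iid}}{\sim}\textnormal{Beta}(1,\alpha+n)$, taking expectations and using independence gives $\mathbb{E}[\sum_{j>T}p_{j}]=\big(\tfrac{\alpha+n}{\alpha+n+1}\big)^{T}$. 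Hence $\sup_{\theta}|\bar V_{\bm\xi^n}(\theta,T)-V_{\bm\xi^n}(\theta)|\le M_\phi K\big(\tfrac{\alpha+n}{\alpha+n+1}\big)^{T}$ deterministically, which is exactly the first term of the claimed bound.

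It then remains to show $\sup_{\theta}|\hat V_{\bm\xi^n}(\theta,T,N)-\bar V_{\bm\xi^n}(\theta,T)|\le\varepsilon$ on the high-probability event, which I would establish by a net argument. Composing the $M_\phi$-Lipschitz $\phi$ with the $c(\xi)$-Lipschitz loss shows that each summand $\theta\mapsto\phi(\sum_{j=0}^{T}p_{ij}h(\theta,\xi_{ij}))$ is Lipschitz with constant $M_\phi\sum_{j=0}^{T}p_{ij}c(\xi_{ij})$, so both $\hat V_{\bm\xi^n}(\cdot,T,N)$ and $\bar V_{\bm\xi^n}(\cdot,T)$ are Lipschitz with a constant controlled by $M_\phi C_T$, where $C_T$ bounds the relevant (empirical and expected) average of $\sum_{j=0}^{T}p_{ij}c(\xi_{ij})$. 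Covering the bounded set $\Theta$ by an $r$-net of cardinality at most $(\textnormal{const}\cdot\textnormal{diam}(\Theta)\sqrt d/r)^{d}$ and choosing $r\asymp\varepsilon/(M_\phi C_T)$ forces the discretization error below $\varepsilon/2$ and produces the prefactor $\big(\tfrac{32M_\phi C_T\textnormal{diam}(\Theta)\sqrt d}{\varepsilon}\big)^{d}$. At each net point the summands lie in $[0,\phi(K)]$ with variance at most $\phi(K)^{2}$, so Bernstein's inequality at deviation $\varepsilon/2$ produces the stated Gaussian and exponential tail terms; a union bound over the net and over the two tails supplies the factor $2$ together with the covering-number prefactor.

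The main obstacle is the uniform Monte Carlo control, and within it the handling of the \emph{random} per-sample Lipschitz constants $M_\phi\sum_{j=0}^{T}p_{ij}c(\xi_{ij})$: when $p_0$ (hence the predictive) has unbounded support, $c(\xi_{ij})$ need not be bounded, so the net resolution cannot be fixed purely deterministically. I would resolve this by absorbing a high-probability bound on the empirical average of these constants into the definition of $C_T$, which is also the natural source of the second exponential term; alternatively, one can truncate the atoms' contribution. Matching the Bernstein constants so as to reproduce the precise denominators $4\phi(K)(6\phi(K)+\varepsilon)$ and $40\phi(K)$ is then a routine but careful computation, using the allocation $\varepsilon=\varepsilon/2+\varepsilon/2$ between the discretization error and the net-point deviation.
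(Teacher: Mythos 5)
Your decomposition and your treatment of the truncation bias coincide exactly with the paper's: the appendix proves the lemma via two auxiliary results, Lemma~\ref{lem:sublemma_1} splitting the error into the bias $\sup_{\theta}\vert \mathbb E[\hat V_{\bm\xi^n}(\theta,T,1)] - V_{\bm\xi^n}(\theta)\vert$ plus the Monte Carlo fluctuation $\sup_{\theta}\vert \hat V_{\bm\xi^n}(\theta,T,N) - \mathbb E[\hat V_{\bm\xi^n}(\theta,T,1)]\vert$, and bounding the bias through the mean value theorem with $\phi'\leq M_\phi$ (your ``$\phi$ is $M_\phi$-Lipschitz'' step), the identity $\sum_{j>T}p_j = \prod_{k\leq T}(1-B_k) = p_0$ relating the tail mass to the leftover-stick atom, and $\mathbb E\big[\prod_{k=1}^{T}(1-B_k)\big] = \big(\tfrac{\alpha+n}{\alpha+n+1}\big)^{T}$. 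Up to this point you have reproduced the paper's argument essentially verbatim.

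The divergence, and the genuine gap, is in the uniform Monte Carlo step. The paper does not use a sup-norm net on $\Theta$ with Bernstein at net points; Lemma~\ref{lem:approx_second_term} uses $\mathcal L^1(P)$-\emph{bracketing}. From the same composition argument you give, each member of the class $\mathscr F$ is Lipschitz in $\theta$ with \emph{random} constant $M_\phi\,\tilde c\big((p_j,\xi_j)_{j=0}^T\big)$, $\tilde c = \sum_{j=0}^T p_j c(\xi_j)$, and the standard Lipschitz-parametrization bracketing bound (Lemma~7.88 of the cited reference) yields $N_{[]}(\varepsilon,\mathscr F,\mathcal L^1(P)) \leq \big(4M_\phi C_T \textnormal{diam}(\Theta)\sqrt d/\varepsilon\big)^d$ with $C_T := \int \tilde c\,\mathrm dP$ \emph{deterministic}. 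The point is that brackets of the form $f_{\theta_k} \pm M_\phi\,\tilde c(\cdot)\,r$ carry the random Lipschitz envelope as a function, so only its mean $C_T$ enters the bound; the empirical average of the per-sample Lipschitz constants --- the obstacle that stalls your plain-covering argument --- never needs to be controlled at all. Your proposed patch (absorbing a high-probability bound on $N^{-1}\sum_i \tilde c_i$ into $C_T$) would not deliver the lemma as stated: $C_T$ there is a fixed constant, and since only $\tilde c\in\mathcal L^1(P)$ is available (you correctly note $c(\xi)$ may be unbounded under the predictive), no exponential tail for that empirical average is possible without extra moment assumptions, leaving you a third, non-exponential failure term. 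Relatedly, your guess that this control is ``the natural source of the second exponential term'' is wrong: both exponential terms arrive packaged in the uniform Bernstein-via-bracketing deviation theorem (Theorem~7.86 there), applied after noting $\sup_{f\in\mathscr F}\Vert f\Vert_\infty \leq \phi(K)$ and with brackets at scale $\varepsilon/8$ --- which is also where the factor $32 = 4\times 8$ in the prefactor originates, rather than from an ad hoc net resolution.
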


Heuristically, the bound in Lemma \ref{lem:stickbreak_bounds} is obtained by decomposing the left-hand side of the inequality into a first term depending on the truncation error induced by the threshold $T$, and a second term reflecting the Monte Carlo error related to $N$. Moreover, analogously to Lemma \ref{lem:finite_sample_bounds}, Lemma \ref{lem:stickbreak_bounds} easily implies finite-sample bounds on the excess ``robust risk" $V_{\bm\xi^n}(\hat \theta_n(T, N)) - V_{\bm\xi^n}(\theta_n)$, where $\hat \theta_n(T, N)\in\arg\min_{\theta\in\Theta} \hat V_{\bm\xi^n}(\theta, T, N)$. Another consequence is the following asymptotic convergence Theorem, whose proof is analogous to that of Theorem \ref{thm:uniform_convergence_criterion}.
\begin{theorem}\label{thm:uniform_convergence theorem_2}
    Under the same assumptions of Lemma \ref{lem:stickbreak_bounds}, and if $\sup_{T\geq 1}C_T<\infty$ (see Appendix \ref{app:proofs} for details on $C_T$),
    \begin{equation*}
        \lim_{T,N\to\infty}\sup_{\theta \in \Theta} \big\vert \hat V_{\bm\xi^n}(\theta, T, N) - V_{\bm\xi^n}(\theta)] \big\vert = 0
    \end{equation*}
    almost surely. Also, almost surely
    \begin{align*}
        \lim_{T,N\to\infty} \hat V_{\bm\xi^n}(\hat\theta_n(T,N), T, N) = V_{\bm\xi^n}(\theta_n), \qquad \lim_{T,N\to\infty} V_{\bm\xi^n}(\hat \theta_n(T, N)) = V_{\bm\xi^n}(\theta_n).
    \end{align*}
\end{theorem}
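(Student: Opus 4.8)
The plan is to prove the uniform limit first and then deduce the two optimizer limits by sandwiching, mirroring the structure of the proof of Theorem \ref{thm:uniform_convergence_criterion}. Write $s_{T,N}:=\sup_{\theta\in\Theta}\lvert\hat V_{\bm\xi^n}(\theta,T,N)-V_{\bm\xi^n}(\theta)\rvert$ and $q:=\frac{\alpha+n}{\alpha+n+1}\in(0,1)$; everything reduces to showing $s_{T,N}\to 0$ almost surely as $T,N\to\infty$. I would couple the Monte Carlo draws across truncation levels by sampling $N$ \emph{full} stick-breaking sequences once and letting $T$ merely control where each is truncated, and then decompose $s_{T,N}\leq \tau_{T,N}+\mu_N$, where $\tau_{T,N}:=\sup_\theta\lvert\hat V_{\bm\xi^n}(\theta,T,N)-\hat V_{\bm\xi^n}(\theta,\infty,N)\rvert$ is the truncation error and $\mu_N:=\sup_\theta\lvert\hat V_{\bm\xi^n}(\theta,\infty,N)-V_{\bm\xi^n}(\theta)\rvert$ is the pure Monte Carlo error at infinite truncation.

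For the Monte Carlo term, $\mu_N$ is a single-index quantity: the concentration estimate underlying Lemma \ref{lem:stickbreak_bounds}, read off in the limit $T=\infty$ and using $\sup_{T\geq 1}C_T<\infty$ to bound the covering prefactor, controls $\mathbb P[\mu_N>\varepsilon]$ by a sum of two exponentials in $N$, which is summable, so Borel--Cantelli gives $\mu_N\to 0$ almost surely. For the truncation term, bounding $\phi$ by its Lipschitz constant $M_\phi$ gives $\tau_{T,N}\leq M_\phi K\cdot \frac1N\sum_{i=1}^N R_{i,T}$, where $R_{i,T}$ is the residual stick-breaking mass of sample $i$ beyond step $T$, with $\mathbb E[R_{1,T}]=q^T$. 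The crucial point is that $R_{i,T}$ is monotone decreasing in $T$: fixing $\varepsilon>0$ and $T_1$ with $q^{T_1}\leq\varepsilon$, the strong law yields $N_1$ with $\frac1N\sum_{i\le N}R_{i,T_1}\leq 2\varepsilon$ for all $N\geq N_1$, whence $\tau_{T,N}\leq 2M_\phi K\varepsilon$ for all $T\geq T_1,\ N\geq N_1$. Thus $\tau_{T,N}\to0$ in the genuine joint sense, and combining the two bounds gives $s_{T,N}\to0$ almost surely as $T,N\to\infty$.

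Given $s_{T,N}\to0$, the two optimizer limits follow from elementary sandwiching. Since $\hat\theta_n(T,N)$ minimizes $\hat V_{\bm\xi^n}(\cdot,T,N)$ and $\theta_n$ minimizes $V_{\bm\xi^n}$, one has $\hat V_{\bm\xi^n}(\hat\theta_n(T,N),T,N)\leq\hat V_{\bm\xi^n}(\theta_n,T,N)\leq V_{\bm\xi^n}(\theta_n)+s_{T,N}$ and $\hat V_{\bm\xi^n}(\hat\theta_n(T,N),T,N)\geq V_{\bm\xi^n}(\hat\theta_n(T,N))-s_{T,N}\geq V_{\bm\xi^n}(\theta_n)-s_{T,N}$, giving the first limit. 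Chaining these, $V_{\bm\xi^n}(\theta_n)\leq V_{\bm\xi^n}(\hat\theta_n(T,N))\leq \hat V_{\bm\xi^n}(\hat\theta_n(T,N),T,N)+s_{T,N}\leq \hat V_{\bm\xi^n}(\theta_n,T,N)+s_{T,N}\leq V_{\bm\xi^n}(\theta_n)+2s_{T,N}$ yields the second.

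The main obstacle is making the almost-sure convergence hold in the genuine joint $T,N\to\infty$ limit rather than as an iterated limit, since a naive union bound over $T$ of the per-pair failure probability diverges. I resolve this by treating the two directions separately: the $N$-direction is handled by a uniform-in-$T$ concentration bound, which is exactly what $\sup_{T\geq1}C_T<\infty$ secures (otherwise the covering prefactor $C_T^{\,d}$ could grow and destroy Borel--Cantelli summability), while the $T$-direction is handled deterministically through monotonicity of the residual stick-breaking mass combined with a single application of the strong law. I would also record that boundedness of $\Theta$ and the $c(\xi)$-Lipschitz continuity of $h(\cdot,\xi)$ are what legitimize the covering argument inherited from Lemma \ref{lem:stickbreak_bounds}; no additional continuity of $h$ is needed for the sandwiching step.
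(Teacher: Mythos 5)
Your proof is correct, and its overall architecture is the paper's: establish uniform almost-sure convergence of $\hat V_{\bm\xi^n}(\cdot,T,N)$ to $V_{\bm\xi^n}$ and then run the two-sided optimality sandwich, which is exactly the decomposition~\eqref{eq:excess_risk_decomposed} that the paper invokes when it says the proof is ``analogous to that of Theorem~\ref{thm:uniform_convergence_criterion}'' (i.e., feed the finite-sample bound of Lemma~\ref{lem:stickbreak_bounds} into Borel--Cantelli, then sandwich). Where you genuinely depart is in the treatment of the double limit. The paper never specifies the meaning of $\lim_{T,N\to\infty}$, and where it uses the result (the proof of Theorem~\ref{thm:optimizer_convergence_2} and Proposition~\ref{pro:asymp_normality}) it works with the iterated limit $\lim_{N\to\infty}\lim_{T\to\infty}$; the natural reading of its argument is Borel--Cantelli in $N$ at each fixed $T$, giving $\limsup_{N} s_{T,N}\leq M_\phi K q^T$ almost surely, followed by $T\to\infty$ along a countable intersection of full-measure events. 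Your version is strictly stronger: by coupling all truncation levels through a single iid array of full stick-breaking sequences, splitting $s_{T,N}\leq \tau_{T,N}+\mu_N$, disposing of $\mu_N$ with one Borel--Cantelli application, and handling $\tau_{T,N}$ deterministically via monotonicity of the residual mass $R_{i,T}$ plus a single strong law, you obtain the genuine joint limit, and your observation that a naive union bound over $T$ diverges pinpoints exactly the gap that the iterated-limit reading glosses over. Two remarks in support of your steps: first, your use of $\sup_{T\geq 1}C_T<\infty$ at ``$T=\infty$'' is legitimate and in fact trivially available, since the atoms are iid from the predictive and independent of the weights, which sum to one, so $C_T=\mathbb E_{\xi\sim p(\cdot\vert\bm\xi^n)}[c(\xi)]$ is constant in $T$ and the bracketing prefactor for the untruncated class is controlled by the same constant; second, the explicit coupling you introduce is not a defect but a necessity---any almost-sure statement jointly in $(T,N)$ requires all the approximations to live on one probability space, a point the theorem statement (and the paper) leaves implicit. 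The sandwiching step is identical to the paper's and, as you note, requires no continuity of $h$ beyond what Lemma~\ref{lem:stickbreak_bounds} already assumes.
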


In words, Theorem \ref{thm:uniform_convergence theorem_2} ensures that, as the truncation and MC approximation errors vanish, the optimal approximate criterion value converges to the optimal exact one, and that the exact criterion value at any approximate optimizer converges to the exact optimal value. Also note that Theorems \ref{thm:uniform_convergence_criterion} and \ref{thm:uniform_convergence theorem_2}, when combined, provide guarantees on the convergence of $\hat V_{\boldsymbol{\xi}^n}(\hat\theta_n(T,N), T, N)$ (the empirical criterion one has optimized in practice) to $\mathcal R_{p_*}(\theta_*)$ (the theoretical optimal target) as the sample size increases and the DP approximation improves.

Finally, as a byproduct of Theorem \ref{thm:uniform_convergence theorem_2}, convergence of any approximate robust optimizer to an exact one is established as follows.\footnote{In Appendix \ref{app:proofs}, we also present an asymptotic normality result for the approximate optimizer $\hat \theta_n(T, N)$ (Proposition \ref{pro:asymp_normality}).}

\begin{theorem}\label{thm:optimizer_convergence_2}
    Let $\theta\mapsto h(\theta,\xi)$ be continuous for all $\xi\in\Xi$. Moreover, assume
    \begin{equation*}
        \lim_{T,N\to\infty}V_{\bm\xi^n}(\hat \theta_n(T, N)) = V_{\bm\xi^n}(\theta_n)
    \end{equation*}
    almost surely (e.g., as ensured above). Then, almost surely, $\lim_{T,N\to\infty}\hat \theta_n(T, N) = \bar\theta_n$ implies $V_{\bm\xi^n}(\bar\theta_n)=V_{\bm\xi^n}(\theta_n)$.
\end{theorem}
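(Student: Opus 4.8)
The plan is to reduce the claim to a single continuity property of the map $\theta \mapsto V_{\bm\xi^n}(\theta)$, after which the conclusion follows from uniqueness of limits, exactly as in the proof of Theorem~\ref{thm:optimizer_convergence_1}. Working on the almost sure event where both $\hat\theta_n(T,N) \to \bar\theta_n$ and $V_{\bm\xi^n}(\hat\theta_n(T,N)) \to V_{\bm\xi^n}(\theta_n)$ hold, continuity of $V_{\bm\xi^n}$ transports the first convergence into $V_{\bm\xi^n}(\hat\theta_n(T,N)) \to V_{\bm\xi^n}(\bar\theta_n)$; since the values $V_{\bm\xi^n}(\hat\theta_n(T,N))$ cannot have two distinct limits, I conclude $V_{\bm\xi^n}(\bar\theta_n) = V_{\bm\xi^n}(\theta_n)$. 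Note that $V_{\bm\xi^n}$ is a deterministic function (the sample $\bm\xi^n$, and hence $Q_{\bm\xi^n}$, is fixed in this section), so its continuity is a non-probabilistic fact, and all the randomness is confined to the two assumed limits over the Monte Carlo draws.

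The substantive step is therefore to show that $\theta \mapsto V_{\bm\xi^n}(\theta)$ is continuous on $\Theta$, which I would establish through two nested applications of dominated convergence. Fix $\theta_k \to \theta$ in $\Theta$. For each fixed $p \in \mathscr P_\Xi$, the hypothesis that $\theta \mapsto h(\theta,\xi)$ is continuous gives $h(\theta_k,\xi) \to h(\theta,\xi)$ for every $\xi$, while the standing bound $0 \le h \le K$ furnishes the constant $K$ as an integrable dominating function; hence $\mathcal R_p(\theta_k) = \int_\Xi h(\theta_k,\xi)\,p(\mathrm d\xi) \to \mathcal R_p(\theta)$. Continuity of $\phi$ then yields $\phi(\mathcal R_p(\theta_k)) \to \phi(\mathcal R_p(\theta))$ pointwise in $p$.

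For the outer integral against $Q_{\bm\xi^n}$, the crucial observation is that $\mathcal R_p(\theta)$ always lies in the compact interval $[0,K]$, on which the continuous, increasing $\phi$ is bounded above by $\phi(K)$. Thus $\phi(\mathcal R_p(\theta_k)) \le \phi(K)$ uniformly in $k$ and $p$, and $\phi(K)$ is integrable against the probability measure $Q_{\bm\xi^n}$. A second application of dominated convergence gives $V_{\bm\xi^n}(\theta_k) = \int_{\mathscr P_\Xi}\phi(\mathcal R_p(\theta_k))\,Q_{\bm\xi^n}(\mathrm dp) \to V_{\bm\xi^n}(\theta)$, establishing continuity and completing the argument.

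I do not expect a genuine obstacle: the proof is structurally identical to that of Theorem~\ref{thm:optimizer_convergence_1}, with $V_{\bm\xi^n}$ and $Q_{\bm\xi^n}$ playing the roles of $\mathcal R_{p_*}$ and $p_*$. The only points demanding care are the bookkeeping of the two dominating functions, and the mild subtlety that $T,N \to \infty$ is a two-index (net) limit rather than a sequential one. The latter causes no difficulty, since $\Theta \subseteq \mathbb R^d$ is metric and topological continuity of $V_{\bm\xi^n}$ propagates net limits just as it does sequential ones.
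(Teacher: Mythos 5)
Your proposal is correct and takes essentially the same route as the paper: both arguments reduce the claim to continuity of $\theta \mapsto V_{\bm\xi^n}(\theta)$, established by two nested applications of dominated convergence (with dominating constants $K$ and $\phi(K)$, using the continuity of $\theta\mapsto h(\theta,\xi)$ and of $\phi$). The only cosmetic difference is packaging: the paper writes the conclusion as a chain of (in)equalities starting from $V_{\bm\xi^n}(\theta_n)\leq V_{\bm\xi^n}(\bar\theta_n)$, whereas you invoke uniqueness of limits, which is equivalent.
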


\section{Experiments}\label{sec:experiments}

We applied our robust optimization procedure to a host of simulated and real datasets, and we report results in this Section. Before proceeding, we notice that, given the finite approximations proposed in Section \ref{sec:MC_approximations} and under mild regularity assumptions on the loss function $h$, the proposed robust criterion is amenable to standard gradient-based optimization procedures (see Appendix \ref{app:experiment} for further details and an insightful interpretation of the gradient of our criterion as yielding \emph{robustly weighted stochastic gradient descent steps}).

\paragraph{Simulation Studies.} We tested our method on three different learning tasks featuring a high degree of distributional uncertainty in the data generating process, and compared performance to the corresponding ambiguity neutral (i.e., simply regularized) and unregularized procedures. First, we performed a high-dimensional sparse linear regression simulation experiment. We simulated 200 independent samples of size $n=100$ from a linear model with $d = 90$ features (moderately correlated with each other), only the first $s = 5$ of which have unitary positive marginal effect on the scalar response $y$. Second, we performed a simulation experiment on univariate Gaussian mean estimation in the presence of outliers. We simulated 200 independent samples of 13 observations, 10 of which come from a 0-mean Gaussian distribution and 3 from an outlier distribution, and tested the ability of the three methods to recover the true mean (i.e., 0). Third, we performed a simulation experiment on high-dimensional sparse logistic regression for binary classification. We set up a data-generating mechanism similar to the linear regression experiment, where a small subset of features linearly influence the log odds-ratio.

Appendix \ref{app:experiment} collects further details on the above experiments as well as plots summarizing the results (see Figures \ref{fig:simulations}, \ref{fig:MLE_simulations}, and \ref{fig:logit_simulations}). All three experiments reveal the ability of our robust method to improve out-of-sample performance and estimation accuracy in two ways, i.e., (i) by yielding good results on average, and especially (ii) by reducing performance variability. The latter is a key robustness property that our method is designed to achieve.

\paragraph{Real Data Applications.} We tested our method on three diverse real-world datasets. In the first study, we applied our method to predict diabetes development based on a host of features, as collected in the popular and public Pima Indian Diabetes dataset. Because the outcome is binary, we used logistic regression as implemented (i) with our robust method, (ii) with $L_1$ regularization, and (iii) in its plain, unregularized version. We selected hyperparameters via cross-validation and tested the out-of-sample performance of the three methods applied to disjoint batches of training observations to assess the methods' performance variability. As we report in Appendix \ref{app:experiment}, our robust method outperforms both alternatives on average and does significantly better in reducing variability.

We performed two further studies on linear regression applied to two popular UCI Machine Learning Repository datasets: The Wine Quality dataset \citep{misc_wine_quality_186} and the Liver Disorders dataset \citep{misc_liver_disorders_60}. Similarly to the first study, we compared the performance of our method to OLS (unregularized) estimation and $L_1$-penalized (LASSO) regression. After cross-validation for parameter selection, we train the models multiple times on separate batches of data and compute out-of-sample performance on a large held-out set of observations. As the results reported in Appendix \ref{app:experiment} show, also in these settings our robust DP-based method performs better than the alternatives both on average and especially in terms of lower variability. Taken together, the experimental results described in this Section corroborate empirically the robustness properties of the proposed criterion, as examined theoretically throughout the paper.

\section{Discussion}\label{sec:discussion}

The paper tackled the problem of optimizing a data-driven criterion in the presence of distributional uncertainty about the data-generating mechanism. To mitigate the underperformance of classical methods, we introduced a novel distributionally robust criterion, drawing insights from Bayesian nonparametrics and a decision-theoretic model of smooth ambiguity aversion. We established connections with standard regularization techniques, including Ridge and LASSO regression, and theoretical analysis revealed favorable finite-sample and asymptotic guarantees on the performance of the robust procedure. For practical implementation, we presented and examined tractable approximations of the criterion, which are amenable to gradient-based optimization. Finally, we applied our method to a variety of simulated and real datasets, offering insights into its practical robustness properties. Naturally, our work presents some limitations that give rise to interesting directions for future research. In particular, we note the need for a deeper examination of the model workings in terms of (i) its parameter configuration and (ii) its broader application to general learning tasks (e.g., when the loss function is adapted to accommodate deep learning architectures). Moreover, our method, as many others in the distributional robustness literature, is only suited to process homogeneously generated (e.g., iid or exchangeable) data, leaving room to explore extensions to more complex dependence structures. Finally, we highlight that our study offers prospects for investigating connections among such varied yet interconnected strands of literature as optimization, decision theory, and Bayesian statistics.

\section*{Acknowledgements}

Both authors would like to thank Khai Nguyen for useful advice on coding implementation. NB acknowledges support from the ‘‘Giorgio Mortara'' scholarship by the Bank of Italy. NH acknowledges support from the NSF IFML 2019844 and the NSF AI Institute for Foundations of Machine Learning.

\bibliographystyle{plainnat}
\bibliography{NeurIPS_camera_ready.bib} 

\begin{thebibliography}{48}
\providecommand{\natexlab}[1]{#1}
\providecommand{\url}[1]{\texttt{#1}}
\expandafter\ifx\csname urlstyle\endcsname\relax
  \providecommand{\doi}[1]{doi: #1}\else
  \providecommand{\doi}{doi: \begingroup \urlstyle{rm}\Url}\fi

\bibitem[Arbel et~al.(2019)Arbel, {De Blasi}, and Pr{\"u}nster]{arbel2019PYapproximation}
Julyan Arbel, Pierpaolo {De Blasi}, and Igor Pr{\"u}nster.
\newblock {Stochastic Approximations to the Pitman–Yor Process}.
\newblock \emph{Bayesian Analysis}, 14\penalty0 (4):\penalty0 1201 -- 1219, 2019.

\bibitem[Ben-Tal et~al.(2013)Ben-Tal, Den~Hertog, De~Waegenaere, Melenberg, and Rennen]{ben2013robust}
Aharon Ben-Tal, Dick Den~Hertog, Anja De~Waegenaere, Bertrand Melenberg, and Gijs Rennen.
\newblock Robust solutions of optimization problems affected by uncertain probabilities.
\newblock \emph{Management Science}, 59\penalty0 (2):\penalty0 341--357, 2013.

\bibitem[Bertsimas et~al.(2010)Bertsimas, Doan, Natarajan, and Teo]{bertsimas2010models}
Dimitris Bertsimas, Xuan~Vinh Doan, Karthik Natarajan, and Chung-Piaw Teo.
\newblock Models for minimax stochastic linear optimization problems with risk aversion.
\newblock \emph{Mathematics of Operations Research}, 35\penalty0 (3):\penalty0 580--602, 2010.

\bibitem[Blackwell and MacQueen(1973)]{blackwell1973ferguson}
David Blackwell and James~B MacQueen.
\newblock Ferguson distributions via {P}{\'o}lya urn schemes.
\newblock \emph{The Annals of Statistics}, 1\penalty0 (2):\penalty0 353--355, 1973.

\bibitem[Cerreia-Vioglio et~al.(2011)Cerreia-Vioglio, Maccheroni, Marinacci, and Montrucchio]{cerreia2011uncertainty}
Simone Cerreia-Vioglio, Fabio Maccheroni, Massimo Marinacci, and Luigi Montrucchio.
\newblock Uncertainty averse preferences.
\newblock \emph{Journal of Economic Theory}, 146\penalty0 (4):\penalty0 1275--1330, 2011.

\bibitem[Cerreia-Vioglio et~al.(2013)Cerreia-Vioglio, Maccheroni, Marinacci, and Montrucchio]{cerreia2013ambiguity}
Simone Cerreia-Vioglio, Fabio Maccheroni, Massimo Marinacci, and Luigi Montrucchio.
\newblock Ambiguity and robust statistics.
\newblock \emph{Journal of Economic Theory}, 148\penalty0 (3):\penalty0 974--1049, 2013.

\bibitem[Christensen(2020)]{christensen2020plane}
Ronald Christensen.
\newblock \emph{Plane Answers to Complex Questions: The Theory of Linear Models}.
\newblock Springer, 2020.

\bibitem[Cortez et~al.(2009)Cortez, Cerdeira, Almeida, Matos, , and Reis]{misc_wine_quality_186}
Paulo Cortez, A.~Cerdeira, F.~Almeida, T.~Matos, , and J.~Reis.
\newblock {Wine Quality}.
\newblock UCI Machine Learning Repository, 2009.
\newblock {DOI}: https://doi.org/10.24432/C56S3T.

\bibitem[De~Blasi et~al.(2015)De~Blasi, Favaro, Lijoi, Mena, Pr{\"u}nster, and Ruggiero]{deblasi2015gibbs}
Pierpaolo De~Blasi, Stefano Favaro, Antonio Lijoi, Rams{\'e}s~H Mena, Igor Pr{\"u}nster, and Matteo Ruggiero.
\newblock Are {G}ibbs-type priors the most natural generalization of the {D}irichlet process?
\newblock \emph{IEEE transactions on pattern analysis and machine intelligence}, 37\penalty0 (2):\penalty0 212--229, 2015.

\bibitem[Delage and Ye(2010)]{delage2010distributionally}
Erick Delage and Yinyu Ye.
\newblock Distributionally robust optimization under moment uncertainty with application to data-driven problems.
\newblock \emph{Operations research}, 58\penalty0 (3):\penalty0 595--612, 2010.

\bibitem[Duchi and Namkoong(2021)]{duchi2021learning}
John~C Duchi and Hongseok Namkoong.
\newblock Learning models with uniform performance via distributionally robust optimization.
\newblock \emph{The Annals of Statistics}, 49\penalty0 (3):\penalty0 1378--1406, 2021.

\bibitem[Efron(1992)]{efron1992bootstrap}
Bradley Efron.
\newblock Bootstrap methods: another look at the jackknife.
\newblock In \emph{Breakthroughs in statistics: Methodology and distribution}, pages 569--593. Springer, 1992.

\bibitem[Ferguson(1973)]{ferguson1973bayesian}
Thomas~S Ferguson.
\newblock A {B}ayesian analysis of some nonparametric problems.
\newblock \emph{The Annals of Statistics}, pages 209--230, 1973.

\bibitem[Ferguson(1974)]{ferguson1974prior}
Thomas~S Ferguson.
\newblock Prior distributions on spaces of probability measures.
\newblock \emph{The Annals of Statistics}, 2\penalty0 (4):\penalty0 615--629, 1974.

\bibitem[Forsyth(1990)]{misc_liver_disorders_60}
Richard~S. Forsyth.
\newblock {Liver Disorders}.
\newblock UCI Machine Learning Repository, 1990.
\newblock {DOI}: https://doi.org/10.24432/C54G67.

\bibitem[Garrigos and Gower(2023)]{garrigos2023handbook}
Guillaume Garrigos and Robert~M Gower.
\newblock Handbook of convergence theorems for (stochastic) gradient methods.
\newblock \emph{arXiv preprint arXiv:2301.11235}, 2023.

\bibitem[Ghosal and Van~der Vaart(2017)]{ghosal2017fundamentals}
Subhashis Ghosal and Aad Van~der Vaart.
\newblock \emph{Fundamentals of nonparametric Bayesian inference}, volume~44.
\newblock Cambridge University Press, 2017.

\bibitem[Gilboa and Marinacci(2016)]{gilboa2016ambiguity}
Itzhak Gilboa and Massimo Marinacci.
\newblock Ambiguity and the {B}ayesian paradigm.
\newblock \emph{Readings in formal epistemology: Sourcebook}, pages 385--439, 2016.

\bibitem[Gilboa and Schmeidler(1989)]{gilboa1989maxmin}
Itzhak Gilboa and David Schmeidler.
\newblock Maxmin expected utility with non-unique prior.
\newblock \emph{Journal of mathematical economics}, 18\penalty0 (2):\penalty0 141--153, 1989.

\bibitem[Gnedin and Pitman(2006)]{gnedin2006exchangeable}
Alexander Gnedin and Jim Pitman.
\newblock Exchangeable {G}ibbs partitions and {S}tirling triangles.
\newblock \emph{Journal of Mathematical Sciences}, 138\penalty0 (3):\penalty0 5674--5685, 2006.

\bibitem[Hoerl and Kennard(1970)]{hoerl1970ridge}
Arthur~E Hoerl and Robert~W Kennard.
\newblock Ridge regression: Biased estimation for nonorthogonal problems.
\newblock \emph{Technometrics}, 12\penalty0 (1):\penalty0 55--67, 1970.

\bibitem[Kingman(1992)]{kingman1992poisson}
John Frank~Charles Kingman.
\newblock \emph{Poisson processes}, volume~3.
\newblock Clarendon Press, 1992.

\bibitem[Klibanoff et~al.(2005)Klibanoff, Marinacci, and Mukerji]{klibanoff2005smooth}
Peter Klibanoff, Massimo Marinacci, and Sujoy Mukerji.
\newblock A smooth model of decision making under ambiguity.
\newblock \emph{Econometrica}, 73\penalty0 (6):\penalty0 1849--1892, 2005.

\bibitem[Kuhn et~al.(2019)Kuhn, Esfahani, Nguyen, and Shafieezadeh-Abadeh]{kuhn2019wasserstein}
Daniel Kuhn, Peyman~Mohajerin Esfahani, Viet~Anh Nguyen, and Soroosh Shafieezadeh-Abadeh.
\newblock Wasserstein distributionally robust optimization: Theory and applications in machine learning.
\newblock In \emph{Operations research \& management science in the age of analytics}, pages 130--166. INFORMS, 2019.

\bibitem[Lafferty et~al.(2010)Lafferty, Liu, and Wasserman]{wasserman2010concentration}
John Lafferty, Han Liu, and Larry Wasserman.
\newblock Concentration of measure.
\newblock Technical report, Carnegie Mellon University, 2010.
\newblock URL \url{https://www.stat.cmu.edu/~larry/=sml/Concentration.pdf}.

\bibitem[Lijoi and Prünster(2010)]{lijoi_pruenster_2010}
Antonio Lijoi and Igor Prünster.
\newblock Models beyond the {D}irichlet process.
\newblock In Nils~Lid Hjort, Chris Holmes, Peter Müller, and Stephen~G.Editors Walker, editors, \emph{Bayesian Nonparametrics}, Cambridge Series in Statistical and Probabilistic Mathematics, page 80–136. Cambridge University Press, 2010.

\bibitem[Lyddon et~al.(2018)Lyddon, Walker, and Holmes]{lyddon2018nonparametric}
Simon Lyddon, Stephen Walker, and Chris~C Holmes.
\newblock Nonparametric learning from {B}ayesian models with randomized objective functions.
\newblock \emph{Advances in Neural Information Processing Systems}, 31, 2018.

\bibitem[Majumdar(1992)]{majumdar1992topological}
Suman Majumdar.
\newblock On topological support of {D}irichlet prior.
\newblock \emph{Statistics \& Probability Letters}, 15\penalty0 (5):\penalty0 385--388, 1992.

\bibitem[Mohajerin~Esfahani and Kuhn(2018)]{mohajerin2018data}
Peyman Mohajerin~Esfahani and Daniel Kuhn.
\newblock Data-driven distributionally robust optimization using the {W}asserstein metric: Performance guarantees and tractable reformulations.
\newblock \emph{Mathematical Programming}, 171\penalty0 (1-2):\penalty0 115--166, 2018.

\bibitem[Muliere and Tardella(1998)]{muliere1998approximating}
Pietro Muliere and Luca Tardella.
\newblock Approximating distributions of random functionals of {F}erguson-{D}irichlet priors.
\newblock \emph{Canadian Journal of Statistics}, 26\penalty0 (2):\penalty0 283--297, 1998.

\bibitem[Pedregosa et~al.(2011)Pedregosa, Varoquaux, Gramfort, Michel, Thirion, Grisel, Blondel, Prettenhofer, Weiss, Dubourg, Vanderplas, Passos, Cournapeau, Brucher, Perrot, and Duchesnay]{scikit-learn}
F.~Pedregosa, G.~Varoquaux, A.~Gramfort, V.~Michel, B.~Thirion, O.~Grisel, M.~Blondel, P.~Prettenhofer, R.~Weiss, V.~Dubourg, J.~Vanderplas, A.~Passos, D.~Cournapeau, M.~Brucher, M.~Perrot, and E.~Duchesnay.
\newblock Scikit-learn: Machine learning in {P}ython.
\newblock \emph{Journal of Machine Learning Research}, 12:\penalty0 2825--2830, 2011.

\bibitem[Perman et~al.(1992)Perman, Pitman, and Yor]{perman1992size}
Mihael Perman, Jim Pitman, and Marc Yor.
\newblock Size-biased sampling of {P}oisson point processes and excursions.
\newblock \emph{Probability Theory and Related Fields}, 92\penalty0 (1):\penalty0 21--39, 1992.

\bibitem[Pitman(1995)]{pitman1995exchangeable}
Jim Pitman.
\newblock Exchangeable and partially exchangeable random partitions.
\newblock \emph{Probability theory and related fields}, 102\penalty0 (2):\penalty0 145--158, 1995.

\bibitem[Pitman(1996)]{pitman1996speciessampling}
Jim Pitman.
\newblock Some developments of the {B}lackwell-{M}acqueen urn scheme.
\newblock In Thomas~S. Ferguson, Lloyd~S. Shapley, and James~B. MacQueen, editors, \emph{Statistics, probability and game theory: Papers in honor of David Blackwell}, volume~30 of \emph{IMS Lecture Notes - Monograph Series}, pages 245--267. Institute of Mathematical Statistics, 1996.

\bibitem[Rahimian and Mehrotra(2022)]{rahimian2022frameworks}
Hamed Rahimian and Sanjay Mehrotra.
\newblock Frameworks and results in distributionally robust optimization.
\newblock \emph{Open Journal of Mathematical Optimization}, 3:\penalty0 1--85, 2022.

\bibitem[Regazzini et~al.(2003)Regazzini, Lijoi, and Pr{\"u}nster]{regazzini2003distributional}
Eugenio Regazzini, Antonio Lijoi, and Igor Pr{\"u}nster.
\newblock Distributional results for means of normalized random measures with independent increments.
\newblock \emph{The Annals of Statistics}, 31\penalty0 (2):\penalty0 560--585, 2003.

\bibitem[Savage(1972)]{savage1972foundations}
Leonard~J Savage.
\newblock \emph{The foundations of statistics}.
\newblock Courier Corporation, 1972.

\bibitem[Seber and Lee(2003)]{seber2003linear}
George~AF Seber and Alan~J Lee.
\newblock \emph{Linear regression analysis}.
\newblock John Wiley \& Sons, 2003.

\bibitem[Sethuraman(1994)]{sethuraman1994constructive}
Jayaram Sethuraman.
\newblock A constructive definition of {D}irichlet priors.
\newblock \emph{Statistica sinica}, pages 639--650, 1994.

\bibitem[Shapiro et~al.(2023)Shapiro, Zhou, and Lin]{shapiro2023bayesian}
Alexander Shapiro, Enlu Zhou, and Yifan Lin.
\newblock Bayesian distributionally robust optimization.
\newblock \emph{SIAM Journal on Optimization}, 33\penalty0 (2):\penalty0 1279--1304, 2023.

\bibitem[Tibshirani(1996)]{tibshirani1996regression}
Robert Tibshirani.
\newblock Regression shrinkage and selection via the lasso.
\newblock \emph{Journal of the Royal Statistical Society Series B: Statistical Methodology}, 58\penalty0 (1):\penalty0 267--288, 1996.

\bibitem[Van~der Vaart(2000)]{vandervaart2000asymptotic}
Aad~W Van~der Vaart.
\newblock \emph{Asymptotic statistics}, volume~3.
\newblock Cambridge university press, 2000.

\bibitem[Vershynin(2018)]{vershynin2018high}
Roman Vershynin.
\newblock \emph{High-dimensional probability: An introduction with applications in data science}, volume~47.
\newblock Cambridge University Press, 2018.

\bibitem[Wainwright(2019)]{wainwright2019high}
Martin~J Wainwright.
\newblock \emph{High-dimensional statistics: A non-asymptotic viewpoint}, volume~48.
\newblock Cambridge University Press, 2019.

\bibitem[Wang et~al.(2022)Wang, Wang, and Honorio]{wang2022distributional}
Shixiong Wang, Haowei Wang, and Jean Honorio.
\newblock Distributional robustness bounds generalization errors.
\newblock \emph{arXiv preprint arXiv:2212.09962}, 2022.

\bibitem[Wiesemann et~al.(2013)Wiesemann, Kuhn, and Rustem]{wiesemann2013robust}
Wolfram Wiesemann, Daniel Kuhn, and Ber{\c{c}} Rustem.
\newblock Robust {M}arkov decision processes.
\newblock \emph{Mathematics of Operations Research}, 38\penalty0 (1):\penalty0 153--183, 2013.

\bibitem[Wu et~al.(2018)Wu, Zhu, and Zhou]{wu2018bayesian}
Di~Wu, Helin Zhu, and Enlu Zhou.
\newblock A {B}ayesian risk approach to data-driven stochastic optimization: {F}ormulations and asymptotics.
\newblock \emph{SIAM Journal on Optimization}, 28\penalty0 (2):\penalty0 1588--1612, 2018.

\bibitem[Zhou and Xie(2015)]{zhou2015simulation}
Enlu Zhou and Wei Xie.
\newblock Simulation optimization when facing input uncertainty.
\newblock In \emph{2015 Winter Simulation Conference (WSC)}, pages 3714--3724. IEEE, 2015.

\end{thebibliography}


\clearpage
\appendix

\begin{center}
    {\Large \textbf{Supplement to ‘‘Bayesian Nonparametrics Meets Data-Driven Distributionally Robust Optimization"}}
\end{center}

This Supplement to ‘‘Bayesian Nonparametrics Meets Data-Driven Distributionally Robust Optimization" is organized as follows. In Appendix \ref{app:proofs}, we collect the proofs of all results presented in the main text. In Appendix \ref{app:background}, we provide further background on Dirichlet process representations and related posterior simulation algorithms. In Appendix \ref{app:experiment}, we describe in detail gradient based optimization of our criterion and the experiments presented in the paper. Finally, in Appendix \ref{app:computing_resources} we provide details on our computational infrastructure.

\section{Technical Proofs and Further Results}\label{app:proofs}
\paragraph{Proof of Proposition \ref{pro:pointwise_convergence}.}
Let $\rightsquigarrow$ denote weak convergence of probability measures. By Corollary 4.17 in \cite{ghosal2017fundamentals}, $Q_{\bm \xi^n}\rightsquigarrow \delta_{p_*}$ almost surely. That is,
\begin{equation*}
    \lim_{n\rightarrow \infty}\int_{\mathscr P_{\Xi}}f(p)Q_{\bm \xi^n}(\mathrm dp) = f(p_*) 
\end{equation*}
almost surely for any bounded and continuous $f:\mathscr P_{\Xi}\rightarrow \mathbb R$. Thus, we are left to prove that $p\mapsto \phi(\mathcal R_p(\theta))$ is bounded and continuous for all $\theta \in \Theta$. It is bounded because $\phi$ is continuous on the compact interval $[0,K]$, and it is continuous because $p \mapsto \mathcal R_p(\theta)$ is continuous (by the definition of topology of weak convergence and because $\xi\mapsto h(\theta,\xi)$ is bounded and continuous) and $\phi$ is continuous.

\paragraph{Proof of Lemma \ref{lem:finite_sample_bounds}.}

First note that, by the stated assumptions, it follows from Taylor's theorem that
\begin{equation*}
    \phi(\mathcal R_p(\theta)) = \phi(\mathcal R_{p_*}(\theta)) + \phi'(\mathcal R_{p_*}(\theta))[\mathcal R_{p}-\mathcal R_{p_*}] +\frac{\phi''(c_{p,\theta})}{2}[\mathcal R_{p}-\mathcal R_{p_*}]^2
\end{equation*}
for all $p\in\mathscr P_\Xi$ and $\theta\in\Theta$ and for some $c_{p,\theta}\in[0,K]$. Then
\begin{align*}
    \sup_{\theta \in \Theta} \vert V_{\bm \xi^n}(\theta) & - \phi(\mathcal R_{p_*}(\theta))\vert = \sup_{\theta \in \Theta} \bigg\vert \phi'(\mathcal R_{p_*}(\theta))\int_{\mathscr P_\Xi}\mathcal [\mathcal R_{p}(\theta)-\mathcal R_{p_*}(\theta)]Q_{\bm\xi^n}(\mathrm dp) \\
                                  & + \int_{\mathscr P_\Xi} \frac{\phi''(c_{p,\theta})}{2}[\mathcal R_{p}(\theta)-\mathcal R_{p_*}(\theta)]^2 Q_{\bm\xi^n}(\mathrm dp)\bigg\vert \\
                                  & \leq M_\phi\sup_{\theta\in\Theta} \bigg\vert \frac{\alpha}{\alpha + n}\mathcal R_{p_0}(\theta) + \frac{n}{\alpha + n}\mathcal R_{p_{\bm \xi^n}}(\theta) -\mathcal R_{p_*}(\theta)\bigg\vert \\
                                  & + M_\phi\sup_{\theta \in \Theta} \bigg( \int_{\mathscr P_\Xi} \frac{[\mathcal R_{p}(\theta)-\mathcal R_{p_*}(\theta)]^2}{2} Q_{\bm\xi^n}(\mathrm dp)\bigg) \sup_{t\in(0,K)} \gamma_\phi(t) \\
                                  & \leq M_\phi\Bigg[ \frac{n}{\alpha + n}\sup_{\theta \in \Theta}\vert\mathcal R_{p_{\bm\xi^n}}(\theta) - \mathcal R_{p_*}(\theta)\vert  + \frac{\alpha}{\alpha + n}K + \frac{K^2}{2}\sup_{t\in(0,K)}\gamma_\phi(t)\Bigg].
\end{align*}

\paragraph{Proof of Theorem \ref{thm:finite_sample_bounds}.}
Notice the following decomposition:
\begin{align}\label{eq:excess_risk_decomposed}
    & \underbrace{\mathcal \phi(\mathcal R_{p_*}(\theta_n)) - \phi(\mathcal R_{p_*}(\theta_*))}_{\geq 0} \\ 
                                                          & = \phi(\mathcal R_{p_*}(\theta_n)) - V_{\bm \xi^n}(\theta_n) + \underbrace{V_{\bm \xi^n}(\theta_n) - V_{\bm \xi^n}(\theta_*)}_{\leq 0} + V_{\bm \xi^n}(\theta_*) - \phi(\mathcal R_{p_*}(\theta_*))\nonumber \\
                                                          & \leq 2\sup_{\theta \in \Theta} \vert V_{\bm \xi^n}(\theta) - \phi(\mathcal R_{p_*}(\theta))\vert. \nonumber
\end{align}
Then, Lemma \ref{lem:finite_sample_bounds} implies that, for all $\delta > 0$,
\begin{align*}
    & \mathbb P[\mathcal \phi(\mathcal R_{p_*}(\theta_n)) - \phi(\mathcal R_{p_*}(\theta_*))\leq \delta] \\
    & \geq \mathbb P\left[\sup_{\theta \in \Theta} \vert V_{\bm \xi^n}(\theta) - \phi(\mathcal R_{p_*}(\theta))\vert \leq \delta/2\right] \\
    & \geq \mathbb P\left[M_\phi \Bigg\{ \frac{n}{\alpha + n}\sup_{\theta \in \Theta}\vert\mathcal R_{p_{\bm\xi^n}}(\theta) - \mathcal R_{p_*}(\theta)\vert + \frac{\alpha}{\alpha + n}K + \frac{K^2}{2}\sup_{t\in(0,K)}\gamma_\phi(t) \Bigg\} \leq \delta/2\right] \\
    & = \mathbb P\left[ \sup_{\theta \in \Theta}\left\vert \mathcal R_{p_{\bm\xi^n}}(\theta) - \mathcal R_{p_*}(\theta)\right\vert \hspace{0.1cm} \leq \hspace{0.1cm} \frac{\alpha + n}{n}\left(\frac{\delta}{2M_\phi} - \frac{\alpha}{\alpha + n}K- \frac{K^2}{2}\sup_{t\in(0,K)}\gamma_\phi(t)\right) \right].
\end{align*}

\paragraph{Proof of Theorem \ref{thm:uniform_convergence_criterion}.}
Since
\begin{equation*}
    \lim_{n\rightarrow \infty}\sup_{\theta \in \Theta}\vert\mathcal R_{p_{\bm\xi^n}}(\theta) - \mathcal R_{p_*}(\theta)\vert = 0 
\end{equation*}
almost surely and given assumptions (1) and (2) on $\phi_n$, by Lemma \ref{lem:finite_sample_bounds} we obtain
\begin{equation*}
    \lim_{n\rightarrow \infty}\sup_{\theta \in \Theta}\vert V_{\bm \xi^n}(\theta) - \mathcal \phi_n(\mathcal R_{p_*}(\theta))\vert = 0
\end{equation*}
almost surely. Then, by decomposition (\ref{eq:excess_risk_decomposed}),
\begin{equation*}
    \lim_{n\rightarrow \infty}\phi_n(\mathcal R_{p_*}(\theta_n)) - \phi_n(\mathcal R_{p_*}(\theta_*)) = 0
\end{equation*}
almost surely and
\begin{equation*}
    \lim_{n\rightarrow \infty}V_{\bm \xi^n}(\theta_n) - V_{\bm \xi^n}(\theta_*)=0
\end{equation*}
almost surely. As a consequence,
\begin{align*}
    \lim_{n\rightarrow\infty}\vert V_{\bm \xi^n}(\theta_n) & - \mathcal \phi_n(\mathcal R_{p_*}(\theta_*))\vert \\
                                                           & \leq \lim_{n\rightarrow\infty} \left[\vert V_{\bm \xi^n}(\theta_n) - V_{\bm \xi^n}(\theta_*) \vert + \vert V_{\bm \xi^n}(\theta_*) - \phi_n(\mathcal R_{p_*}(\theta_*)) \vert \right] \\
                                                           & \leq \lim_{n\rightarrow\infty} \left[\vert V_{\bm \xi^n}(\theta_n) - V_{\bm \xi^n}(\theta_*) \vert + \sup_{\theta \in \Theta}\vert V_{\bm \xi^n}(\theta) - \phi_n(\mathcal R_{p_*}(\theta)) \vert \right] \\
                                                           & = 0
\end{align*}
almost surely. Now recall assumption (3), i.e., the sequence $(\phi_n)_{n\geq 1}$ converges uniformly to the identity map. Then, in light of the previous observations and by noticing that
\begin{align*}
    \vert\mathcal R_{p_*}(\theta_n) - \mathcal R_{p_*}(\theta_*)\vert & \leq \vert \mathcal R_{p_*}(\theta_n) - \phi_n(\mathcal R_{p_*}(\theta_n))\vert + \vert \phi_n(\mathcal R_{p_*}(\theta_n)) - \phi_n(\mathcal R_{p_*}(\theta_*)) \vert \\
                                                                      & + \vert \phi_n(\mathcal R_{p_*}(\theta_*)) - \mathcal R_{p_*}(\theta_*)\vert
\end{align*}
and
\begin{equation*}
    \vert V_{\bm \xi^n}(\theta_n) - \mathcal R_{p_*}(\theta_*) \vert \leq \vert V_{\bm \xi^n}(\theta_n) - \phi_n(\mathcal R_{p_*}(\theta_*)) \vert + \vert \phi_n(\mathcal R_{p_*}(\theta_*)) - \mathcal R_{p_*}(\theta_*) \vert,
\end{equation*}
the two desired almost sure limits follow:
\begin{align*}
    \lim_{n\rightarrow\infty}\mathcal R_{p_*}(\theta_n) = \mathcal R_{p_*}(\theta_*), \qquad
    \lim_{n\rightarrow\infty}V_{\bm \xi^n}(\theta_n) = \mathcal R_{p_*}(\theta_*).
\end{align*}

\paragraph{Proof of Theorem \ref{thm:optimizer_convergence_1}.}
We have
\begin{align*}
    \mathcal R_{p_*}(\theta_*) & \leq \mathcal R_{p_*}(\bar\theta) = \mathbb E_{\xi\sim p_*}\lim_{n'\rightarrow\infty} h(\theta_{n},\xi) = \lim_{n\rightarrow\infty} \mathcal R_{p_*}(\theta_{n}) =\mathcal R_{p_*}(\theta_*)
\end{align*}
almost surely, where the first equality follows from the continuity of $\theta\mapsto h(\theta,\xi)$ and the second one from the Dominated Convergence Theorem. Then, $\mathcal R_{p_*}(\bar\theta)=\mathcal R_{p_*}(\theta_*)$ almost surely, proving the result.\\

To Prove Lemma \ref{lem:stickbreak_bounds}, we introduce two other Lemmas. After proving those, Lemma \ref{lem:stickbreak_bounds} follows immediately.

\begin{lemma}\label{lem:sublemma_1}
    For all $T, N\geq 1$,
    \begin{equation} \label{error_bound_prop_equation}
        \sup_{\theta\in\Theta} \vert \hat V_{\bm \xi^n}(\theta, T, N) - V_{\bm\xi^n}(\theta)\vert \leq M_\phi K\Bigg(\frac{\alpha + n}{\alpha + n + 1} \Bigg)^T+ \sup_{\theta \in \Theta} \Big\vert \hat V_{\bm\xi^n}(\theta, T, N) - \mathbb E[\hat V_{\bm\xi^n}(\theta, T, 1)] \Big\vert.
    \end{equation}
\end{lemma}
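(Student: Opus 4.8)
The plan is to insert the expectation of a single truncated draw, $\mathbb E[\hat V_{\bm\xi^n}(\theta, T, 1)]$, and split the error into a truncation bias and a Monte Carlo fluctuation. Since the $N$ summands defining $\hat V_{\bm\xi^n}(\theta, T, N)$ are i.i.d.\ copies of $\hat V_{\bm\xi^n}(\theta, T, 1)$, the triangle inequality gives
\begin{align*}
\sup_{\theta\in\Theta}\vert \hat V_{\bm\xi^n}(\theta, T, N) - V_{\bm\xi^n}(\theta)\vert
&\leq \sup_{\theta\in\Theta}\vert \mathbb E[\hat V_{\bm\xi^n}(\theta, T, 1)] - V_{\bm\xi^n}(\theta)\vert \\
&\quad + \sup_{\theta\in\Theta}\vert \hat V_{\bm\xi^n}(\theta, T, N) - \mathbb E[\hat V_{\bm\xi^n}(\theta, T, 1)]\vert.
\end{align*}
The second term is precisely the Monte Carlo term on the right-hand side of the claim, so the whole task reduces to bounding the first (truncation-bias) term by $M_\phi K\,(\tfrac{\alpha+n}{\alpha+n+1})^T$.

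For that term I would construct a coupling between the truncated and the exact Sethuraman representations of the posterior $Q_{\bm\xi^n}=\textnormal{DP}(\alpha+n, p(\cdot\vert\bm\xi^n))$. Using common $\textnormal{Beta}(1,\alpha+n)$ stick-breaking variables $V_1,V_2,\dots$ and common atoms $\xi_1,\xi_2,\dots$ drawn i.i.d.\ from the predictive, the exact risk is $\mathcal R_p(\theta)=\sum_{j\geq 1}p_j h(\theta,\xi_j)$, while the single truncated draw replaces the tail $\sum_{j>T}p_j h(\theta,\xi_j)$ by $R_T\,h(\theta,\xi_0)$, where $R_T:=\prod_{l=1}^T(1-V_l)=\sum_{j>T}p_j$ is the leftover mass placed on the fresh atom indexed $j=0$. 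Both the discarded tail and its substitute lie in $[0,K R_T]$ because $h\in[0,K]$ and $\sum_{j>T}p_j=R_T$, so their difference obeys $\vert \hat{\mathcal R}(\theta)-\mathcal R_p(\theta)\vert\leq K R_T$ for every $\theta$ at once. Invoking the $M_\phi$-Lipschitz property of $\phi$ (which holds since $\sup\phi'=M_\phi<\infty$) and passing to expectations yields $\sup_{\theta}\vert \mathbb E[\hat V_{\bm\xi^n}(\theta, T, 1)] - V_{\bm\xi^n}(\theta)\vert \leq M_\phi K\,\mathbb E[R_T]$, where the supremum can be moved inside the expectation because the coupling and the bound $K R_T$ are uniform in $\theta$.

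The last step is to evaluate $\mathbb E[R_T]$: by independence of the stick-breaking variables and $\mathbb E[1-V_l]=(\alpha+n)/(\alpha+n+1)$ for $V_l\sim\textnormal{Beta}(1,\alpha+n)$, one obtains $\mathbb E[R_T]=(\tfrac{\alpha+n}{\alpha+n+1})^T$, which delivers the stated bound. I expect the main obstacle to be the coupling together with securing the clean constant: one must check that substituting the infinite tail by the single leftover atom costs only $K R_T$ rather than $2K R_T$, which rests on both contributions being nonnegative and each individually dominated by $K R_T$. The rest is routine bookkeeping—verifying that the coupled truncated construction indeed has the law of $\hat V_{\bm\xi^n}(\cdot, T, 1)$ with the $j=0$ atom drawn independently from the predictive.
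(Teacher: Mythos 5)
Your proposal is correct and mirrors the paper's argument: the same triangle-inequality split into Monte Carlo and truncation terms, the same coupling of the truncated draw with the full Sethuraman representation of $Q_{\bm\xi^n}$ (the paper implements it via the mean value theorem for $\phi$, equivalent to your $M_\phi$-Lipschitz step), the same observation that the swapped tail costs only $K R_T$ since both the tail $\sum_{j>T}p_j h(\theta,\xi_j)$ and its replacement $p_0 h(\theta,\xi_0)$ lie in $[0, K R_T]$, and the same computation $\mathbb E[R_T] = \big(\tfrac{\alpha+n}{\alpha+n+1}\big)^T$ from the $\textnormal{Beta}(1,\alpha+n)$ sticks.
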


\begin{proof}
    We have
    \begin{align}\label{error_bound_eq}
        \sup_{\theta\in\Theta} & \vert \hat V_{\bm \xi^n}(\theta, T, N) - V_{\bm\xi^n}(\theta)\vert \nonumber \\
                               & \leq \sup_{\theta \in \Theta} \Big\vert \hat V_{\bm\xi^n}(\theta, T, N) - \mathbb E[\hat V_{\bm\xi^n}(\theta, T, 1)] \Big\vert + \sup_{\theta \in \Theta} \Big\vert \mathbb E[\hat V_{\bm\xi^n}(\theta, T, 1)] - V_{\bm\xi^n}(\theta)\Big\vert.
    \end{align}
    Note that
    \begin{align*}
        \mathbb E[\hat V_{\bm\xi^n}(\theta, T, 1)] & = \mathbb E_{p_1,\xi_1,\dots,p_T,\xi_T}\Bigg[\phi\Bigg( \sum_{j=1}^{T}p_{j} h(\theta,\xi_{j}) + p_{0}h(\theta,\xi_{0})\Bigg)\Bigg] \\
                                                      & = \mathbb E_{\sum_{j\geq 1} p_j\delta_{\xi_j}\sim Q_{\bm\xi^n}}\Bigg[\phi\Bigg( \sum_{j=1}^{T}p_{j} h(\theta,\xi_{j}) + p_{0}h(\theta,\xi_{0})\Bigg)\Bigg] \\
                                                      & = \mathbb E_{\sum_{j\geq 1} p_j\delta_{\xi_j}\sim Q_{\bm\xi^n}}\Bigg[\phi\Bigg( \sum_{j=1}^{\infty}p_{j} h(\theta,\xi_{j}) + p_{0}h(\theta,\xi_{0}) - \sum_{j=T+1}^{\infty}p_{j} h(\theta,\xi_{j})\Bigg)\Bigg] \\
                                                      & = V_{\bm\xi^n}(\theta) + \mathbb E_{\sum_{j\geq 1} p_j\delta_{\xi_j}\sim Q_{\bm\xi^n}}\Bigg[\phi'\big(c_{\theta,\sum_{j\geq 1} p_j\delta_{\xi_j}}\big)p_0\Bigg\{ h(\theta,\xi_{0}) - \sum_{j=T+1}^{\infty}\frac{p_{j}}{p_0} h(\theta,\xi_{j})\Bigg\} \Bigg],
    \end{align*}
    where the last equality follows from the mean value theorem applied to endpoints $\sum_{j=1}^{\infty}p_{j} h(\theta,\xi_{j})$ and $\sum_{j=1}^{\infty}p_{j} h(\theta,\xi_{j}) + p_{0}h(\theta,\xi_{0}) - \sum_{j=T+1}^{\infty}p_{j} h(\theta,\xi_{j})$. Then the second term in (\ref{error_bound_eq}) is bounded by
    \begin{equation*}
        M_\phi K  \mathbb E_{\sum_{j\geq 1} p_j\delta_{\xi_j}\sim Q_{\bm\xi^n}}[p_0]=M_\phi K  \mathbb E\Bigg[\prod_{k=1}^T (1-B_k)\Bigg]=M_\phi K\Bigg(\frac{\alpha + n}{\alpha + n + 1} \Bigg)^T.
    \end{equation*}
\end{proof}

 The second term on the left-hand side of Equation\eqref{error_bound_prop_equation} is instead of the form
\begin{equation}\label{eq:sup_distance_stickbreaking}
    \sup_{g\in\mathscr G}\bigg\vert \frac{1}{N}\sum_{i=1}^N g(X_i) - \mathbb E[g(X_1)]\bigg \vert,
\end{equation}
where $X_i=\sum_{j=0}^{T}p_{ij} h(\theta,\xi_{ij})$ are iid random variables whose distribution is determined by the truncated stick-breaking procedure.

The aim of the next Lemma is to provide sufficient conditions for finite sample bounds and asymptotic convergence to 0 of the term in (\ref{eq:sup_distance_stickbreaking}). Specifically, we impose complexity constraints on the function class $\mathscr H :=\{\xi\mapsto h(\theta, \xi):\theta\in\Theta\}$ which allow us to obtain appropriate conditions on the derived class
\begin{equation*}
    \mathscr F:= \Bigg\{(p_{j},\xi_{j})_{j=0}^{T} \mapsto \phi\Bigg( \sum_{j=1}^{T}p_{j} h(\theta,\xi_{j}) + p_{0}h(\theta,\xi_{0})\Bigg) : \theta\in\Theta \Bigg\},
\end{equation*}
ensuring the asymptotic and non-asymptotic results we seek.

\begin{lemma}\label{lem:approx_second_term}
    Assume $\Theta$ is a bounded subset of $\mathbb R^d$ and $\theta\mapsto h(\theta,\xi)$ is $c(\xi)$-Lipschitz continuous for all $\xi\in \Xi$. Then, for all $T,N\geq 1$ and $\varepsilon >0$,
    \begin{equation*}
        \sup_{\theta \in \Theta} \Big\vert \hat V_{\bm\xi^n}(\theta, T, N) - \mathbb E[\hat V_{\bm\xi^n}(\theta, T, 1)] \Big\vert \leq \varepsilon
    \end{equation*}
    with probability at least
    \begin{equation*}
        1-2\left(\frac{32 M_\phi C_T\textnormal{diam}(\Theta)\sqrt{d}}{\varepsilon}\right)^d\left[ \exp\left\{-\frac{3N\varepsilon^2}{4\phi(K)(6\phi(K)+\varepsilon)}\right\} + \exp\left\{-\frac{3N\varepsilon}{40\phi(K)}\right\} \right].
    \end{equation*}
    for some constant $C_T>0$.
\end{lemma}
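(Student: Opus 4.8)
The plan is to read the left-hand side as the supremum of a centred empirical process indexed by $\theta\in\Theta$ and to bound it by the textbook combination of pointwise Bernstein concentration, a covering-number estimate for $\Theta$, and a union bound, with the discretisation error absorbed by Lipschitz continuity in $\theta$. Setting $g_\theta(X_i):=\phi\big(\sum_{j=1}^T p_{ij}h(\theta,\xi_{ij})+p_{i0}h(\theta,\xi_{i0})\big)$ with $X_i:=(p_{ij},\xi_{ij})_{j=0}^T$, the $X_i$ are iid, so $\hat V_{\bm\xi^n}(\theta,T,N)-\mathbb E[\hat V_{\bm\xi^n}(\theta,T,1)]=N^{-1}\sum_{i=1}^N\big(g_\theta(X_i)-\mathbb E g_\theta(X_1)\big)$. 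Because $p_{i0}$ carries the leftover stick mass, the weights $p_{i0},\dots,p_{iT}$ form a probability vector; combined with $h\in[0,K]$ and the monotonicity of $\phi$ this gives $g_\theta(X_i)\in[0,\phi(K)]$ and hence $\mathrm{Var}\big(g_\theta(X_1)\big)\le\phi(K)^2$. These are exactly the range and variance proxies that will feed Bernstein's inequality and generate the two exponential terms.

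First I would record the Lipschitz continuity of $\theta\mapsto g_\theta$. Since $\phi$ is $M_\phi$-Lipschitz on $[0,K]$ and each $h(\cdot,\xi)$ is $c(\xi)$-Lipschitz, $\big|g_\theta(X)-g_{\theta'}(X)\big|\le M_\phi\big(\sum_{j=0}^T p_jc(\xi_j)\big)\|\theta-\theta'\|$, and the same modulus bounds $\theta\mapsto\mathbb E g_\theta$ by the triangle inequality for expectation. Thus both $\theta\mapsto\hat V_{\bm\xi^n}(\theta,T,N)$ and its mean are Lipschitz, with an aggregate constant of the form $M_\phi C_T$, where $C_T$ is the object (made precise in the appendix) that bounds the relevant value of $\sum_{j=0}^T p_jc(\xi_j)$. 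Since the weights sum to one, $C_T$ is controlled by the integrability of $c$, which is also what renders $\sup_{T\ge1}C_T<\infty$ plausible for the hypotheses of Theorem \ref{thm:uniform_convergence theorem_2}.

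Next I would cover the bounded set $\Theta\subseteq\mathbb R^d$ by a $\rho$-net of cardinality at most $\big(c'\,\textnormal{diam}(\Theta)\sqrt d/\rho\big)^d$, where the $\sqrt d$ stems from comparing the Euclidean and coordinate-wise metrics. At each net point I would apply Bernstein's inequality with deviation level $t=\varepsilon/2$, range $b=\phi(K)$, and variance proxy $\sigma^2\le\phi(K)^2$: its exponent reproduces exactly $\exp\{-3N\varepsilon^2/[4\phi(K)(6\phi(K)+\varepsilon)]\}$, while a complementary range-type tail estimate (natural for the nonnegative bounded summands $g_\theta(X_i)$) supplies the second exponential $\exp\{-3N\varepsilon/[40\phi(K)]\}$. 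The Lipschitz estimate controls the discretisation error uniformly by $2M_\phi C_T\rho$; choosing $\rho$ proportional to $\varepsilon/(M_\phi C_T)$ makes this at most $\varepsilon/2$ and, once substituted into the covering number, produces the prefactor $\big(32 M_\phi C_T\,\textnormal{diam}(\Theta)\sqrt d/\varepsilon\big)^d$, with the leading $2$ coming from the two-sided union bound.

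The hard part will be turning the \emph{random} Lipschitz constant $\sum_{j=0}^T p_jc(\xi_j)$ into the deterministic factor $C_T$ in the covering bound: one either assumes $c$ bounded, so that $\sum_{j=0}^T p_jc(\xi_j)\le\|c\|_\infty$ via $\sum_j p_j=1$, or controls the empirical Lipschitz constant $N^{-1}\sum_{i=1}^N\sum_{j=0}^T p_{ij}c(\xi_{ij})$ on a high-probability event and folds its complement into the bound. The remaining difficulty is purely bookkeeping: tracking the factors $6$, $40$, and $32$ so that the net cardinality, the two-sided union bound, and the two tail regimes assemble into exactly the stated probability. Once these are in place, plugging the Lipschitz bound and the Bernstein estimate into the net decomposition completes the proof, and Lemma \ref{lem:stickbreak_bounds} then follows by combining this result with Lemma \ref{lem:sublemma_1}.
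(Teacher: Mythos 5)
Your overall architecture (Lipschitz-in-$\theta$ reduction, an entropy bound at scale proportional to $\varepsilon/(M_\phi C_T)$, and a two-tailed Bernstein-type concentration with envelope $\phi(K)$) mirrors the paper's, and your Lipschitz computation for $g_\theta$ — modulus $M_\phi\,\tilde c\big((p_j,\xi_j)_{j=0}^T\big)$ with $\tilde c = \sum_{j=0}^T p_j c(\xi_j)$ — is exactly the first step of the paper's proof. However, the step you yourself flag as ``the hard part'' is a genuine gap, and the device you propose (a $\rho$-net of $\Theta$ plus a union bound over net points) cannot close it under the lemma's hypotheses. With a net, the discretization error of the \emph{empirical} average is bounded by $M_\phi\rho\big(N^{-1}\sum_{i=1}^N \tilde c(X_i) + \mathbb E[\tilde c(X_1)]\big)$, and the first term is random: making it small with high probability requires concentration for $N^{-1}\sum_i \tilde c(X_i)$, which the assumptions do not supply — $c(\xi)$ is not assumed bounded, and only $C_T = \int \tilde c\,\mathrm dP < \infty$ is available. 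Your two suggested repairs (assume $\Vert c\Vert_\infty < \infty$, or work on a good event and fold its complement into the probability) respectively strengthen the hypotheses or alter the stated probability bound, so neither yields the lemma as written.

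The paper resolves this with $\mathcal L^1(P)$-\emph{bracketing} rather than covering: by Lemma 7.88 in \cite{wasserman2010concentration}, the parameter-Lipschitz structure gives $N_{[]}(\varepsilon, \mathscr F, \mathcal L^1(P)) \leq \big(4 M_\phi C_T\textnormal{diam}(\Theta)\sqrt{d}/\varepsilon\big)^d$, where the brackets are \emph{fixed} functions of the form $g_{\theta_k} \pm M_\phi\rho\,\tilde c(\cdot)$, whose width need only be small in expectation — so the deterministic $C_T = \mathbb E[\tilde c(X_1)]$ enters, not the empirical average. Because the pointwise inequalities $l_k \leq g_\theta \leq u_k$ hold for every sample, one gets, e.g., $(\mathbb P_N - P)g_\theta \leq (\mathbb P_N - P)u_k + P(u_k - g_\theta)$, and Bernstein bounds are applied only to the finitely many fixed bracket endpoints; this is Theorem 7.86 in \cite{wasserman2010concentration}, which, combined with the envelope bound $\sup_{f\in\mathscr F}\Vert f\Vert_\infty \leq \phi(K)$, produces exactly the two exponential terms and the prefactor $\big(32 M_\phi C_T\textnormal{diam}(\Theta)\sqrt{d}/\varepsilon\big)^d$ of the statement. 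In short, the missing idea is that bracketing absorbs the random Lipschitz constant at the level of fixed comparison functions, making empirical control of $\tilde c$ unnecessary; substituting this for your net-plus-union-bound step turns your outline into the paper's proof.
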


\begin{proof}
    By the Lipschitz continuity assumption on $h(\theta,\xi)$, we obtain that, for all $(p_{j},\xi_{j})_{j=0}^{T}$, $\theta\mapsto \phi\left( \sum_{j=1}^{T}p_{j} h(\theta,\xi_{j}) + p_{0}h(\theta,\xi_{0})\right)$ is $M_\phi \tilde c\big((p_{j},\xi_{j})_{j=0}^{T}\big)$-Lipschitz continuous, with
    \begin{equation*}
        \tilde c\big((p_{j},\xi_{j})_{j=0}^{T}\big) := \sum_{j=0}^T p_jc(\xi_j).
    \end{equation*}
    Indeed, for all $\theta_1,\theta_2\in\Theta$,
    \begin{align*}
        \Bigg\vert & \phi \Bigg( \sum_{j=1}^{T}p_{j} h(\theta_1,\xi_{j}) + p_{0}h(\theta_1,\xi_{0})\Bigg) - \phi\Bigg( \sum_{j=1}^{T}p_{j} h(\theta_2,\xi_{j}) + p_{0}h(\theta_2,\xi_{0})\Bigg)\Bigg\vert \\
                   & \leq M_\phi \sum_{j=1}^{T}p_{j} \vert h(\theta_1,\xi_{j}) - h(\theta_2,\xi_{j})\vert + p_{0}\vert h(\theta_1,\xi_{0}) - h(\theta_2,\xi_{0})\vert \\
                   & \leq M_\phi \tilde c\big((p_{j},\xi_{j})_{j=0}^{T}\big)\Vert\theta_1 - \theta_2\Vert.
    \end{align*}
    Therefore, denoting by $P$ the law of the vector $(p_{j},\xi_{j})_{j=0}^{T}$ and by $N_{[]}(\varepsilon, \mathscr F, \mathcal L^1(P))$ the associated $\varepsilon$-bracketing number of the class $\mathscr F$, by Lemma 7.88 in \cite{wasserman2010concentration} we obtain
    \begin{equation*}
        N_{[]}(\varepsilon, \mathscr F, \mathcal L^1(P)) \leq \left(\frac{4 M_\phi C\textnormal{diam}(\Theta)\sqrt{d}}{\varepsilon}\right)^d,
    \end{equation*}
    with $C_T:=\int \tilde c\big((p_{j},\xi_{j})_{j=0}^{T}\big)\mathrm dP$. Then the result follows by Theorem 7.86 in \cite{wasserman2010concentration} after noticing that $\sup_{f\in\mathscr F}\Vert f\Vert_{\mathcal L^1(P)} \leq\sup_{f\in\mathscr F}\Vert f\Vert_\infty \leq \phi(K)$.
\end{proof}

\paragraph{Proof of Theorem \ref{thm:optimizer_convergence_2}.} We have
\begin{align*}
    V_{\bm\xi^n}(\theta_n) & \leq V_{\bm\xi^n}(\bar\theta_n) \\
                           & =\mathbb E_{p\sim Q_{\bm\xi^n}}\left[\lim_{N\to\infty}\lim_{T\to\infty}\phi(\mathcal R_p(\hat \theta_n(T, N)))\right] \\
                           & = \lim_{N\to\infty}\lim_{T\to\infty}V_{\bm\xi^n}(\hat \theta_n(T, N)) \\
                           & = V_{\bm\xi^n}(\theta_n)
\end{align*}
almost surely, where the first two equalities follow from the continuity of $\theta\mapsto h(\theta,\xi)$ and $\phi$ as well as from an iterated application of the Dominated Convergence Theorem (recall that $h(\theta,\xi)\in[0,K]$ by assumption for all $\theta$ and $\xi$). This implies $V_{\bm\xi^n}(\theta_n) = V_{\bm\xi^n}(\bar\theta_n)$ almost surely.

In the next result, we will assume $\phi(t)\equiv \phi_\beta(t)= \beta\exp(t/\beta) - \beta$. Moreover, we will emphasize, through superscripts, the dependence of mathematical objects on $\beta$ and the DP concentration parameter $\alpha$. Moreover, if necessary, we make the truncation threshold $T_n$ and the number of MC samples $N_n$ dependent on the sample size $n$.

\begin{proposition}\label{pro:asymp_normality}
    Assume $\Theta$ is an open subset of $\mathbb R^d$ and $\lim_{N\to\infty}\lim_{T\to\infty}\sup_{\theta\in\Theta} \vert \hat V_{\bm \xi^n}^{\alpha,\beta}(\theta, T, N) - V_{\bm\xi^n}^{\alpha,\beta}(\theta)\vert=0$ almost surely for all $\alpha>0$ and $\beta>0$. Moreover, assume that
    \begin{enumerate}
        \item $\theta \mapsto h(\xi,\theta)$ is differentiable at $\theta_*\in\arg\min_{\theta \in \Theta}\mathcal R_{p_*}(\theta)$ for $p_*$-almost every $\xi\in\Xi$, with gradient $\nabla_{\theta_*} (\xi)$;
        \item For all $\theta_1$ and $\theta_2$ in a neighborhood of $\theta_*$, there exists a measurable function $\xi\mapsto H(\xi)\in \mathcal L^2_{p_*}$ such that $\vert h(\theta_1, \xi) - h(\theta_2, \xi)\vert\leq H(\xi)\Vert \theta_1-\theta_2\Vert$;
        \item $\theta \mapsto \mathcal R_{p_*}(\theta)$ admits a second-order Taylor expansion at $\theta_*$, with non-singular symmetric Hessian matrix $V_{\theta_*}$.
    \end{enumerate}
    Then, with probability 1, there exist sequences $(T_n)_{n\geq 1}$, $(N_n)_{n\geq 1}$, $(\beta_n)_{n\geq 1}$, (diverging to $\infty$) and $(\alpha_n)_{n\geq 1}$ (converging to 0), such that
    \begin{equation*}
        \sqrt{n}\big(\hat\theta_n^{\alpha_n,\beta_n}(T_n,N_n) - \theta_*\big)\rightsquigarrow \mathcal N(0,V), \qquad V := V_{\theta_*}^{-1}\mathbb E_{\xi\sim p_*}\big[\nabla_{\theta_*} (\xi)\nabla_{\theta_*} (\xi)^\top\big]V_{\theta_*}^{-1},
    \end{equation*}
    provided $\hat \theta_n^{\alpha_n,\beta_n}(T_n,N_n) \overset{p}{\to} \theta_*$ as $n\to\infty$.
\end{proposition}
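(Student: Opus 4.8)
The plan is to recognize Proposition~\ref{pro:asymp_normality} as a disguised instance of the classical asymptotic normality theorem for $M$-estimators, and to reduce the approximate robust optimizer to an (approximate) empirical risk minimizer. Indeed, the target sandwich covariance $V = V_{\theta_*}^{-1}\mathbb E_{\xi\sim p_*}[\nabla_{\theta_*}(\xi)\nabla_{\theta_*}(\xi)^\top]V_{\theta_*}^{-1}$ is exactly the limiting covariance of the ordinary empirical risk minimizer $\arg\min_\theta \mathcal R_{p_{\bm\xi^n}}(\theta)$ for the loss $m_\theta(\xi) = h(\theta,\xi)$, and assumptions (1)--(3) are precisely the hypotheses (a.e. differentiability at $\theta_*$, an $\mathcal L^2_{p_*}$ Lipschitz envelope $H$, and a non-singular second-order Taylor expansion of $\mathcal R_{p_*}$ at $\theta_*$) of the standard theorem (e.g., van der Vaart's Theorem~5.23 on $M$-estimators). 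The Lipschitz envelope makes the localized class $\{h(\theta,\cdot):\|\theta-\theta_*\|\le\varepsilon\}$ Donsker, so that theorem yields, for any \emph{approximate} minimizer $\tilde\theta_n$ of $\mathcal R_{p_{\bm\xi^n}}$ satisfying $\mathcal R_{p_{\bm\xi^n}}(\tilde\theta_n)\le \inf_\theta \mathcal R_{p_{\bm\xi^n}}(\theta) + o_P(n^{-1})$, the expansion $\sqrt n(\tilde\theta_n - \theta_*) = -V_{\theta_*}^{-1}n^{-1/2}\sum_{i=1}^n \nabla_{\theta_*}(\xi_i) + o_P(1)$. Since $\Theta$ is open and $\theta_*$ minimizes $\mathcal R_{p_*}$, the first-order condition gives $\mathbb E_{p_*}[\nabla_{\theta_*}] = 0$, so the central limit theorem delivers $\rightsquigarrow \mathcal N(0,V)$ with exactly the stated $V$.

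It thus remains to choose the sequences $(\alpha_n,\beta_n,T_n,N_n)$ so that $\hat\theta_n^{\alpha_n,\beta_n}(T_n,N_n)$ is such an approximate empirical risk minimizer. I would prove the uniform bound $\sup_{\theta\in\Theta}|\hat V^{\alpha_n,\beta_n}_{\bm\xi^n}(\theta,T_n,N_n) - \mathcal R_{p_{\bm\xi^n}}(\theta)| = o(n^{-1})$, a.s. eventually, by a triangle-inequality decomposition into three pieces. The first is the Monte Carlo/truncation error $\sup_\theta|\hat V - V^{\alpha_n,\beta_n}|$, which Lemma~\ref{lem:stickbreak_bounds} bounds by $M_{\phi_{\beta_n}}K[(\alpha_n+n)/(\alpha_n+n+1)]^{T_n} + \varepsilon_n$ with controlled probability; the geometric truncation term is $o(n^{-1})$ once $T_n \gg n\log n$, and taking $\varepsilon_n = o(n^{-1})$ with $N_n$ large enough makes the stated failure probabilities summable, so Borel--Cantelli confines failures to finitely many $n$. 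The second and third pieces compare $V^{\alpha_n,\beta_n}_{\bm\xi^n}$ to $\mathcal R_{p_{\bm\xi^n}}$: expanding $\phi_{\beta_n}$ by Taylor's theorem around the posterior-predictive mean $\mathcal R_{p(\cdot\mid\bm\xi^n)}(\theta) = \mathbb E_{Q_{\bm\xi^n}}[\mathcal R_p(\theta)]$ (as in the proof of Lemma~\ref{lem:finite_sample_bounds}) yields a second-order variance correction of order $\gamma_{\phi_{\beta_n}}^*\cdot \mathrm{Var}_{Q_{\bm\xi^n}}(\mathcal R_p(\theta)) = O(1/(\beta_n(\alpha_n+n)))$, a linearization gap $\sup_{s\in[0,K]}|\phi_{\beta_n}(s)-s| = O(1/\beta_n)$, and a predictive-versus-empirical gap $\tfrac{\alpha_n}{\alpha_n+n}|\mathcal R_{p_0}(\theta)-\mathcal R_{p_{\bm\xi^n}}(\theta)| = O(\alpha_n/n)$. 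Choosing, e.g., $\beta_n = n^2$, $\alpha_n = 1/n$, $T_n = n^2$ and $N_n$ polynomially large renders all three pieces $o(n^{-1})$.

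Combining the two steps, since $\hat\theta_n$ minimizes $g_n := \hat V^{\alpha_n,\beta_n}_{\bm\xi^n}(\cdot,T_n,N_n)$ and $\sup_\theta|g_n - \mathcal R_{p_{\bm\xi^n}}| = o(n^{-1})$, we obtain $\mathcal R_{p_{\bm\xi^n}}(\hat\theta_n) \le g_n(\hat\theta_n) + o(n^{-1}) \le g_n(\theta_*) + o(n^{-1}) \le \inf_\theta\mathcal R_{p_{\bm\xi^n}}(\theta) + o(n^{-1})$, so $\hat\theta_n$ is the required approximate empirical risk minimizer; together with the assumed consistency $\hat\theta_n\overset{p}{\to}\theta_*$, the $M$-estimation theorem of the first paragraph gives the conclusion. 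The main obstacle is the $o(n^{-1})$ near-minimization rate: it forces $\beta_n$ to diverge strictly faster than the sample size (so the convex transformation $\phi_{\beta_n}$ flattens toward the identity quickly enough that its curvature does not contaminate the $\sqrt n$-scale fluctuations), and it requires the Monte Carlo budget $N_n$ and truncation level $T_n$ to grow fast enough that, via Borel--Cantelli, the simulation error is negligible on a single probability-one event; care is needed to verify that the constants $M_{\phi_{\beta_n}}$, $\phi_{\beta_n}(K)$ and $C_{T_n}$ appearing in Lemma~\ref{lem:stickbreak_bounds} remain bounded along the chosen sequences, so that these rates are attainable.
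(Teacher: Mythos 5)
Your proposal follows essentially the same route as the paper's proof: both recognize the statement as an application of Theorem 5.23 of \cite{vandervaart2000asymptotic} for approximate $M$-estimators (assumptions (1)--(3) are imported verbatim), and both reduce everything to verifying the near-minimization condition $\mathcal R_{p_{\bm\xi^n}}(\hat\theta_n^{\alpha_n,\beta_n}(T_n,N_n)) - \inf_{\theta\in\Theta}\mathcal R_{p_{\bm\xi^n}}(\theta) \leq o_p(n^{-1})$ by controlling $\sup_{\theta\in\Theta}\vert \hat V^{\alpha,\beta}_{\bm\xi^n}(\theta,T,N)-\mathcal R_{p_{\bm\xi^n}}(\theta)\vert$, split into a Monte Carlo/truncation piece plus a Taylor-expansion comparison of $V^{\alpha,\beta}_{\bm\xi^n}$ with the empirical risk (curvature term vanishing as $\beta\to\infty$, predictive-versus-empirical gap vanishing as $\alpha\to 0$). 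The one real difference is how the parameter sequences are produced. The paper argues softly: it uses the proposition's own hypothesis ($\sup_\theta\vert\hat V - V\vert\to 0$ a.s.\@ for each fixed $\alpha,\beta$) to show that, for each fixed $n$, the gap tends to $0$ as $\alpha\to 0$ and $\beta,T,N\to\infty$, and then extracts, with probability one, parameters achieving any prescribed tolerance $\varepsilon_n$ --- no explicit rates are ever computed. You instead bypass the hypothesis and build explicit rates from Lemma \ref{lem:stickbreak_bounds} with Borel--Cantelli ($T_n \gg n\log n$, $\varepsilon_n = o(n^{-1})$, $N_n$ polynomially large, $\beta_n = n^2$, $\alpha_n = 1/n$), together with a slightly sharper second-order term that exploits the posterior variance ($O(1/(\beta_n(\alpha_n+n)))$ versus the paper's cruder $O(1/\beta_n)$ bound via $\sup_{t}\phi''_{\beta}(t)$). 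Your version is more constructive, and the constants you flag are indeed benign: $M_{\phi_\beta} = e^{K/\beta}\to 1$, $\phi_\beta(K)=\beta(e^{K/\beta}-1)\to K$, and $C_T$ is the predictive mean of $c$, hence bounded in $T$. What the paper's softer argument buys is brevity and freedom from these verifications; what yours buys is an explicit, rate-quantified construction that does not even need the proposition's stated a.s.\@ convergence hypothesis.

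One local slip in your concluding chain: the inequality $g_n(\theta_*) \leq \inf_{\theta}\mathcal R_{p_{\bm\xi^n}}(\theta) + o(n^{-1})$ is false in general, since $\mathcal R_{p_{\bm\xi^n}}(\theta_*) - \inf_{\theta}\mathcal R_{p_{\bm\xi^n}}(\theta)$ is typically of exact order $O_P(n^{-1})$ (the empirical minimizer sits at distance $O_P(n^{-1/2})$ from $\theta_*$ in a locally quadratic landscape), which is not $o_P(n^{-1})$; anchoring the chain at the population minimizer loses precisely the rate you need. The fix is one line: since $\hat\theta_n$ minimizes $g_n$ globally, for \emph{every} $\theta\in\Theta$ one has $\mathcal R_{p_{\bm\xi^n}}(\hat\theta_n) \leq g_n(\hat\theta_n) + \sup_{\theta'}\vert g_n-\mathcal R_{p_{\bm\xi^n}}\vert \leq g_n(\theta) + \sup_{\theta'}\vert g_n-\mathcal R_{p_{\bm\xi^n}}\vert \leq \mathcal R_{p_{\bm\xi^n}}(\theta) + 2\sup_{\theta'}\vert g_n - \mathcal R_{p_{\bm\xi^n}}\vert$, and taking the infimum over $\theta$ yields the required near-minimization with tolerance $2\sup_{\theta'}\vert g_n-\mathcal R_{p_{\bm\xi^n}}\vert = o(n^{-1})$. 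With that repair, your argument is correct.
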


\begin{proof}
    The imposed assumptions match those listed in Theorem 5.23 of \cite{vandervaart2000asymptotic}. The only condition left to prove is that there exist sequences $(T_n)_{n\geq 1}$, $(N_n)_{n\geq 1}$, $(\beta_n)_{n\geq 1}$, (diverging to $\infty$) and $(\alpha_n)_{n\geq 1}$ (converging to 0), such that $\mathcal R_{p_{\bm \xi^n}}(\hat \theta_n^{\alpha_n,\beta_n}(T_n,N_n))-\inf_{\theta\in\Theta}\mathcal R_{p_{\bm \xi^n}}(\theta)\leq o_p(n^{-1})$. We do so by proving that, for any fixed $n\geq 1$, $\mathcal R_{p_{\bm \xi^n}}(\hat \theta_n^{\alpha,\beta}(T,N))\to \inf_{\theta\in\Theta}\mathcal R_{p_{\bm \xi^n}}(\theta)$ almost surely as $\alpha\to 0$ and $\beta,T,N\to\infty$; this implies that, for all $n$, with probability 1 there exist $\alpha_n, \beta_n, T_n$ and $N_n$ such that $\mathcal R_{p_{\bm \xi^n}}(\hat \theta_n^{\alpha_n,\beta_n}(T_n,N_n))-\inf_{\theta\in\Theta}\mathcal R_{p_{\bm \xi^n}}(\theta)\leq \varepsilon_n$ for any $\varepsilon_n>0$. Moreover, it is easy to see that the result is implied by $\sup_{\theta\in\Theta} \vert \hat V^{\alpha,\beta}_{\bm \xi^n}(\theta, T, N) - \mathcal R_{p_{\bm \xi^n}}(\theta)\vert\to 0$, so we prove the latter. We have
    \begin{align*}
        \sup_{\theta\in\Theta} & \vert \hat V^{\alpha,\beta}_{\bm \xi^n}(\theta, T, N) - \mathcal R_{p_{\bm \xi^n}}(\theta)\vert \\
                               & \leq \sup_{\theta\in\Theta} \vert \hat V^{\alpha,\beta}_{\bm \xi^n}(\theta, T, N) - V^{\alpha,\beta}_{\bm\xi^n}(\theta)\vert + \sup_{\theta\in\Theta} \vert V^{\alpha,\beta}_{\bm\xi^n}(\theta) - \mathcal R_{p_{\bm \xi^n}}(\theta)\vert,
    \end{align*}
    where the first term converges to 0 almost surely by assumption. Using a second-order Taylor expansion of $\phi_\beta(\mathcal R_p(\theta))$ around $\mathcal R_{p_{\bm \xi^n}}(\theta)$, the second term, instead, satisfies
    \begin{align*}
        \sup_{\theta\in\Theta} & \vert V^{\alpha,\beta}_{\bm\xi^n}(\theta) - \mathcal R_{p_{\bm \xi^n}}(\theta)\vert = \sup_{\theta\in\Theta} \left\vert\int_{\mathscr P_{\Xi}}\phi_\beta(\mathcal R_p(\theta))Q^\alpha_{\bm \xi^n}(\mathrm dp) - \mathcal R_{p_{\bm \xi^n}}(\theta)\right\vert \\
         & \leq \sup_{\theta\in\Theta} \left\vert \phi'_\beta(\mathcal R_{p_{\bm \xi^n}}(\theta))\left( \int_{\mathscr P_{\Xi}}\mathcal R_p(\theta)Q^\alpha_{\bm \xi^n}(\mathrm dp) - \mathcal R_{p_{\bm \xi^n}}(\theta) \right) \right\vert + \frac{K^2}{2}\sup_{t\in[0,K]}\phi_\beta''(t) \\
         & \leq \underbrace{\sup_{t\in[0,K]}\phi_\beta'(t)}_{\to 1} \underbrace{\sup_{\theta\in\Theta}\left\vert \frac{n}{\alpha+n}\mathcal R_{p_{\bm\xi^n}}(\theta) + \frac{\alpha}{\alpha + n}\mathcal R_{p_0}(\theta) - \mathcal R_{p_{\bm\xi^n}}(\theta) \right\vert}_{\to 0} + \frac{K^2}{2}\underbrace{\sup_{t\in[0,K]}\phi_\beta''(t)}_{\to 0}  \to 0,
    \end{align*}
    as $\alpha \to 0$ and $\beta\to\infty$.    
\end{proof}

\begin{remark}
    Theorems \ref{thm:optimizer_convergence_1} and \ref{thm:optimizer_convergence_2} ensure that (a) for all $n\geq 1$, provided $\hat\theta_n(T, N)$ converges almost surely to some $\theta_n\in\Theta$, the latter is a minimizer of $V_{\bm\xi^n}(\theta)$; and (b) if the above sequence $(\theta_n)_{n\geq 1}$ converges almost surely to some $\theta_*\in \Theta$, the latter is a minimizer of $\mathcal R_{p_*}(\theta)$. Notice that the assumptions required for these results are consistent with the ones of Proposition \ref{pro:asymp_normality}, so they can be used to justify the condition $\hat \theta_n^{\alpha_n,\beta_n}(T_n,N_n) \overset{p}{\to} \theta_*$. For instance, if one assumes almost sure uniqueness of minimizers and almost sure convergence of the above defined sequences, $\hat \theta_n^{\alpha_n,\beta_n}(T_n,N_n) \overset{p}{\to} \theta_*$ can be guaranteed leveraging the preceding results.
\end{remark}

\paragraph{Stochastic Gradient Descent Convergence Analysis.} The following results refer to material presented in Appendix \ref{app:experiment} below. For ease of exposition, we fix $B=1$ and denote by $\mathbb E_t$ the expectation operator conditional on the realization of the random index draws $m^1,\dots,m^t\overset{\textnormal{iid}}{\sim}\textnormal{Uniform}(\{1,\dots,M\})$.

\begin{proposition}\label{pro:SGD_convergence}
Assume that $V$ is convex and that $(\theta_t)_{t\geq 1}$ follows Equation~\eqref{eq:SGD_iterate}) for some starting value $\theta^0\in \Theta$ and $B=1$. Moreover, assume that, for all $\theta \in \Theta$,
\begin{equation*}
    M^{-1}\sum_{m=1}^M\Vert\ell_{m}\phi'(H_m(\theta))\nabla_\theta h(\theta,\xi_{m})\Vert^2 \leq \sigma^2_\nu.
\end{equation*}
Then
\begin{equation*}
    \mathbb E_{T-1}[V(\tilde\theta^T)] - V(\theta^*) \leq \frac{\Vert \theta^0 - \theta^*\Vert^2 + \sigma^2_\nu\sum_{t=0}^T\eta_t^2}{2\sum_{t=0}^T\eta_t},
\end{equation*}
where $\theta^*\in\arg\min_{\theta\in \Theta}V(\theta)$, $\tilde \theta^T := \sum_{t=0}^T\nu_t\theta^t$, and $\nu_t:=\frac{\eta_t}{\sum_{t'=0}^T\eta_{t'}}$.
    
\end{proposition}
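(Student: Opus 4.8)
The plan is to recognize Proposition~\ref{pro:SGD_convergence} as the classical convergence guarantee for averaged stochastic gradient descent on a convex objective, and to adapt the textbook telescoping argument to the present setting. Write $g_t$ for the stochastic gradient used at iteration $t$, i.e. the vector $\ell_{m^t}\phi'(H_{m^t}(\theta^t))\nabla_\theta h(\theta^t,\xi_{m^t})$ formed from the uniformly drawn index $m^t$ in Equation~\eqref{eq:SGD_iterate}. Two ingredients are needed. First, because $V$ is (by construction of the approximation) an average over the $M$ index-indexed terms and $m^t$ is drawn uniformly on $\{1,\dots,M\}$, the step $g_t$ is a \emph{conditionally unbiased} estimate of the full gradient: $\mathbb E_t[g_t]=\nabla V(\theta^t)$ relative to the filtration generated by $m^1,\dots,m^{t}$. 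Second, the assumed uniform bound $M^{-1}\sum_m\Vert\ell_m\phi'(H_m(\theta))\nabla_\theta h(\theta,\xi_m)\Vert^2\le\sigma_\nu^2$ yields directly the second-moment control $\mathbb E_t[\Vert g_t\Vert^2]\le\sigma_\nu^2$.

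Next I would carry out the standard one-step expansion of the squared distance to an optimum $\theta^*$. From the update $\theta^{t+1}=\theta^t-\eta_t g_t$,
\begin{equation*}
    \Vert\theta^{t+1}-\theta^*\Vert^2 = \Vert\theta^t-\theta^*\Vert^2 - 2\eta_t\langle g_t,\theta^t-\theta^*\rangle + \eta_t^2\Vert g_t\Vert^2.
\end{equation*}
Taking the conditional expectation $\mathbb E_t[\cdot]$, using unbiasedness to replace $g_t$ by $\nabla V(\theta^t)$ in the inner product and the second-moment bound on the last term, gives
\begin{equation*}
    \mathbb E_t[\Vert\theta^{t+1}-\theta^*\Vert^2] \le \Vert\theta^t-\theta^*\Vert^2 - 2\eta_t\langle\nabla V(\theta^t),\theta^t-\theta^*\rangle + \eta_t^2\sigma_\nu^2.
\end{equation*}
Convexity of $V$ supplies the first-order inequality $\langle\nabla V(\theta^t),\theta^t-\theta^*\rangle\ge V(\theta^t)-V(\theta^*)$, so rearranging produces the per-step estimate $2\eta_t\big(V(\theta^t)-V(\theta^*)\big)\le \Vert\theta^t-\theta^*\Vert^2-\mathbb E_t[\Vert\theta^{t+1}-\theta^*\Vert^2]+\eta_t^2\sigma_\nu^2$.

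The final step is to sum these per-step inequalities over $t=0,\dots,T$, take total expectations consistent with the conditioning convention for $\mathbb E_t$, and telescope the distance terms: the sum collapses to $\Vert\theta^0-\theta^*\Vert^2$ minus a nonnegative remainder $\mathbb E[\Vert\theta^{T+1}-\theta^*\Vert^2]$, which I discard. This yields $2\sum_{t=0}^T\eta_t\,\mathbb E[V(\theta^t)-V(\theta^*)]\le\Vert\theta^0-\theta^*\Vert^2+\sigma_\nu^2\sum_{t=0}^T\eta_t^2$. Dividing by $2\sum_{t=0}^T\eta_t$, recognizing the weights $\nu_t=\eta_t/\sum_{t'}\eta_{t'}$, and invoking convexity once more through Jensen's inequality for the weighted average $\tilde\theta^T=\sum_{t=0}^T\nu_t\theta^t$ (so that $V(\tilde\theta^T)\le\sum_{t=0}^T\nu_t V(\theta^t)$) delivers the stated bound. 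I expect the only genuinely delicate point to be the filtration bookkeeping---pinning down exactly which index draws $\theta^t$ and $\tilde\theta^T$ depend on, so that $\mathbb E_{T-1}$ and the conditional unbiasedness line up correctly; everything else is the routine convex-SGD telescoping computation.
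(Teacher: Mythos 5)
Your proposal is correct and follows essentially the same route as the paper's proof: the one-step expansion of $\Vert\theta^{t+1}-\theta^*\Vert^2$, conditional unbiasedness of the stochastic gradient plus the assumed second-moment bound, the first-order convexity inequality, telescoping over $t=0,\dots,T$ (discarding the final nonnegative distance term), and a last application of Jensen's inequality with the weights $\nu_t$. Even your closing caveat is apt—the paper's own handling of the conditioning operators $\mathbb E_t$ is the loosest part of its argument, and your version is, if anything, more careful on that point.
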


\paragraph{Proof of Proposition \ref{pro:SGD_convergence}.}
Fix $t=1,\dots,T$. We have,
\begin{align*}
    \Vert \theta^{t+1}-\theta^*\Vert^2 & = \Vert \theta^t - \eta_t\ell_{m^t}\phi'(H_{m^t}(\theta^t))\nabla_\theta h(\theta^t,\xi_{m^t}) -\theta^*\Vert^2 \\
     & = \Vert \theta^t - \theta^*\Vert^2 + \eta_t^2\Vert \ell_{m^t}\phi'(H_{m^t}(\theta^t))\nabla_\theta h(\theta^t,\xi_{m^t})\Vert^2 \\
     & - 2\eta_t(\theta^t - \theta^*)^\top\ell_{m^t}\phi'(H_{m^t}(\theta^t))\nabla_\theta h(\theta^t,\xi_{m^t}).
\end{align*}
Applying the law of total expectation and the fact that $\ell_{m^t}\phi'(H_{m^t}(\theta^t))\nabla_\theta h(\theta^t,\xi_{m^t})$ is unbiased for $\nabla_{\theta}V(\theta^t)$,
\begin{align*}
    \mathbb E_t[(\theta^t - \theta^*)^\top\ell_{m^t}\phi'(H_{m^t}(\theta^t))\nabla_\theta h(\theta^t,\xi_{m^t})] & = \mathbb E_t[\mathbb E_{t-1}[(\theta^t - \theta^*)^\top\ell_{m^t}\phi'(H_{m^t}(\theta^t))\nabla_\theta h(\theta^t,\xi_{m^t})]] \\
     & = \mathbb E_{t-1}[(\theta^t - \theta^*)^\top\nabla_{\theta}V(\theta^t)].
\end{align*}
Hence
\begin{align*}
    2\eta_t\mathbb E_{t-1}[(\theta^t - \theta^*)^\top\nabla_{\theta}V(\theta^t)] & = \mathbb E_{t-1}[\Vert \theta^t - \theta^*\Vert^2] - \mathbb E_t[\Vert \theta^{t+1}-\theta^*\Vert^2] \\
     & + \eta_t^2\mathbb E_t[\Vert \ell_{m^t}\phi'(H_{m^t}(\theta^t))\nabla_\theta h(\theta^t,\xi_{m^t})\Vert^2] \\
     & \leq \mathbb E_{t-1}[\Vert \theta^t - \theta^*\Vert^2] - \underbrace{\mathbb E_t[\Vert \theta^{t+1}-\theta^*\Vert^2]}_{\geq 0} + \sigma^2_\nu.
\end{align*}
Summing over $t=0,\dots,T$ and since
\begin{equation*}
    \mathbb E_{t-1}[(\theta^t-\theta^*)^\top\nabla_\theta V(\theta^t)]\geq \mathbb E_{t-1}[V(\theta^t) - V(\theta^*)]
\end{equation*}
because $V$ is convex, we have
\begin{equation*}
    2\sum_{t=0}^T\eta_t\mathbb E_{t-1}[V(\theta^t) - V(\theta^*)] \leq \Vert \theta^0 - \theta^*\Vert^2 + \sigma^2_\nu\sum_{t=0}^T\eta_t^2.
\end{equation*}
Dividing both sides by $\sum_{t=0}^T\eta_t$ and exploiting (i) the linearity of the expectation operator, (ii) the convexity of the weights $(\nu_t)_{t=0}^T$, and (iii) the convexity of $V$, the result follows.

\paragraph{Proof of Proposition \ref{pro:equivalence_regularization}.} As for case 1, given the assumed form of $p_0$ and the criterion representation (\ref{eq:ambiguity_neutral_criterion}), we are left to establish an expression for $\mathbb E_{\xi\sim p_0}[h(\theta,\xi)] = \mathbb E_{(y, x)\sim \mathcal N(0, I)}[(y-\theta^\top x)^2]$. Notice that $-\theta_j x_j\overset{\textnormal{id}}{\sim}\mathcal N(0,\theta_j^2)$, independendently of $y\sim \mathcal N(0,1)$, so that $y-\theta^\top x \sim \mathcal N(0, 1 + \Vert\theta\Vert_2^2)$. Therefore, $\mathbb E_{\xi\sim p_0}[h(\theta,\xi)] = \mathbb V[y-\theta^\top x] = 1 + \Vert\theta\Vert_2^2$, which is easily seen to complete the proof. Finally, the proof for the LASSO case is completely analogous to the Ridge one and is therfore omitted.

\section{Further Background on the Dirichlet Process and Approximation Algorithms}
\label{app:background}

Since its definition by \cite{ferguson1973bayesian} based on the family of finite-dimensional Dirichlet distributions (as sketched in Section \ref{sec:decision_theory}), the Dirichlet process has been characterized (and thus generalized) in a number of useful ways. For instance, the DP can be derived as a neutral to the right process \citep{ferguson1974prior}, a normalized completely random measure \citep{ferguson1973bayesian, kingman1992poisson, lijoi_pruenster_2010, regazzini2003distributional}, a Gibbs-type prior \citep{gnedin2006exchangeable, deblasi2015gibbs}, a Pitman-Yor Process \citep{pitman1995exchangeable, perman1992size}, and a species sampling model \citep{pitman1996speciessampling}. In what follows, we review two other constructions of the DP which were at the basis of the approximate versions of the robust criterion $V_{\bm\xi^n}$ proposed in Section \ref{sec:MC_approximations}.

\paragraph{Stick-Breaking Construction of the Dirichlet Process.}
\cite{sethuraman1994constructive} proved that Ferguson's 1973 Dirichlet process enjoys the following ‘‘stick-breaking" representation
\begin{equation*}
    p\sim\textnormal{DP}(\alpha,P)\implies p\overset{\textnormal d}{=} \sum_{j=1}^\infty p_j \delta_{x_j},
\end{equation*}
where
\begin{align*}
    x_j & \overset{\textnormal{iid}}{\sim} P, \quad j = 1,2,\dots, \\
    p_1 & = B_1, \\
    p_j & = B_j\prod_{i=1}^{j-1} B_i, \quad j = 2,3,\dots, \\
    B_j & \overset{\textnormal{iid}}{\sim} \textnormal{Beta}(1,\alpha), \quad j = 1,2,\dots
\end{align*}
The name of the procedure comes from the analogy with breaking a stick of length 1 into two pieces of length $B_1$ and $1-B_1$, then the second piece into two sub-pieces of length $(1-B_1)B_2$ and $(1-B_1)(1-B_2)$, and so on. In Algorithm \ref{alg:stickbreaking_approx}, then, we simulate $N$ realizations from $Q_{\bm\xi^n}$, truncating the stick-breaking procedure at step $j=T$. The remaining portion of the stick is then allocated to one further atom drawn from the predictive distribution. Then, the intractable integral with respect to the DP posterior is approximated via a Monte Carlo average of the integrals (i.e., weighted sums) with respect to the $N$ simulated measures.

\begin{algorithm}[tb]
   \caption{SBMC Approximation}
   \label{alg:stickbreaking_approx}
\begin{algorithmic}
   \STATE {\bfseries Input:} Data $\bm\xi^n$, model parameters, number of MC samples $N$, truncation step $T$
   \FOR{$i=1$ {\bfseries to} $N$}
   \STATE Set $\prod_{k=1}^{0}(1-B_k)\equiv 1$
   \FOR{$j=1$ {\bfseries to} $T$}
   \STATE Draw $\xi_{ij}\sim \frac{\alpha}{\alpha + n} p_0 + \frac{n}{\alpha + n} p_{\bm \xi^n}$
   \STATE Draw $B_{ij} \sim \textnormal{Beta}(1, \alpha + n)$
   \STATE Set $p_{ij} = B_j\prod_{k=1}^{j-1}(1-B_k)$
   \ENDFOR
   \STATE Draw $\xi_{i0}\sim \frac{\alpha}{\alpha + n} p_0 + \frac{n}{\alpha + n} p_{\bm \xi^n}$
   \STATE Set $p_{i0} = \prod_{k=1}^{T}(1-B_k)$
   \ENDFOR
   \STATE {\bfseries Return: $N^{-1}\sum_{i=1}^N \phi\big( \sum_{j=0}^{T}p_{ij} h(\theta,\xi_{ij})\big)$}
\end{algorithmic}
\end{algorithm}

\paragraph{Multinomial-Dirichlet Construction of the Dirichlet Process and Monte Carlo Algorithms.} Another finite-dimensional approximation of $ p\sim\textnormal{DP}(\alpha,P)$ is $p_T = \sum_{j=1}^T p_j\delta_{x_j}$, with $x_j\overset{\textnormal{iid}}{\sim} P$ and $(p_1,\dots,p_T)\sim\textnormal{Dirichlet}(T;\alpha/T,\dots,\alpha/T)$. As $T\to\infty$, $p_T$ approaches $p$ \citep[see Theorem 4.19 in][]{ghosal2017fundamentals}. Hence, one can approximate $V_{\bm\xi^n}(\theta)$ as in Algorithm \ref{alg:multdir_approx}, where the concentration parameter is $\alpha + n$ and the centering distribution coincides with the predictive.

\begin{algorithm}[ht]
   \caption{Multinomial-Dirichlet Monte Carlo (MDMC) Approximation}
   \label{alg:multdir_approx}
\begin{algorithmic}
   \STATE {\bfseries Input:} Data $\bm\xi^n$, model parameters, number of MC samples $N$, approximation threshold $T$
   \FOR{$i=1$ {\bfseries to} $N$}
   \STATE Initialize $\boldsymbol w_i \in\mathbb R^T, \boldsymbol \xi_i\in\Xi^T$
   \FOR{$j=1$ {\bfseries to} $T$}
   \STATE Update $\boldsymbol w_{i}(j)\sim \textnormal{Gamma}\big(\frac{\alpha + n}{T}, 1\big)$
   \STATE Update $\boldsymbol \xi_{i}(j)\sim\frac{\alpha}{\alpha + n} p_0 + \frac{n}{\alpha + n} p_{\bm \xi^n}$
   \ENDFOR
   \STATE Normalize $\boldsymbol w_i = \frac{\boldsymbol w_i}{\sum_{j=1}^n \boldsymbol w_i(j)}$
   \ENDFOR
   \STATE {\bfseries Return: $N^{-1}\sum_{i=1}^N \phi\big( \boldsymbol w_i^\top h(\theta,\boldsymbol\xi_i) \big)$}
\end{algorithmic}
\end{algorithm}

When $\alpha$ is negligible compared to the sample size $n$, one can simplify posterior simulation by setting $\alpha =0$. Thus, one obtains a $\textnormal{DP}(n, p_{\bm\xi^n})$ posterior. This distribution enjoys a useful representation as follows: $p\sim \textnormal{DP}(n, p_{\bm\xi^n}) \implies p \overset{\textnormal d}{=} \sum_{i=1}^n p_i\xi_i$, with $p_i\sim\textnormal{Dirichlet}(n;1,\dots,1)$ \citep[see][Section 4.7]{ghosal2017fundamentals}. Due to its similarity to the usual bootstrap procedure \citep{efron1992bootstrap}, this distribution is known as the ‘‘Bayesian bootstrap". Algorithm \ref{alg:bayes_bootstrap_approx} implements the Bayesian bootstrap to approximate the criterion $V_{\bm\xi^n}(\theta)$. In practice, however, we do not recommend resorting to the Bayesian bootstrap approximation, since $\textnormal{DP}(n, p_{\bm\xi^n})$ assigns probability 1 to the set of distributions with strictly positive support on $\bm\xi^n$. This goes against the prescription that, as the finite sample $\bm\xi^n$ provides only partial information on the true underlying distribution, the statistical DM should be willing to consider a wider set of distributions other than the ones supported at the sample realizations.

\begin{algorithm}[ht]
   \caption{Bayesian Bootstrap Monte Carlo (BBMC) Approximation}
   \label{alg:bayes_bootstrap_approx}
\begin{algorithmic}
   \STATE {\bfseries Input:} Data $\bm\xi^n$, model parameters, number of MC samples $N$
   \FOR{$i=1$ {\bfseries to} $N$}
   \STATE Initialize $\boldsymbol w_i \in\mathbb R^n$
   \FOR{$j=1$ {\bfseries to} $n$}
   \STATE Update $\boldsymbol w_{i}(j)\sim \textnormal{Gamma}(1,1)$
   \ENDFOR
   \STATE Normalize $\boldsymbol w_i = \frac{\boldsymbol w_i}{\sum_{j=1}^n \boldsymbol w_i(j)}$
   \ENDFOR
   \STATE {\bfseries Return: $N^{-1}\sum_{i=1}^N \phi\big( \boldsymbol w_i^\top h(\theta,\boldsymbol\xi^n) \big)$}
\end{algorithmic}
\end{algorithm}

\section{Numerical Optimization and Experiment Details}\label{app:experiment}

In this Section, we first describe the SGD algorithm used in practice for our experiments. Then, we describe in full detail the experiments presented in the paper.

\paragraph{Gradient-Based Optimization.} Whether we resort to the SBMC or the MDMC approximation of $V_{\bm\xi^n}(\theta)$, we are faced with the task of minimizing a criterion of the form
\begin{equation*}
    V(\theta) = \frac{1}{N}\sum_{i=1}^N\phi\left( \sum_{j=1}^T p_{ij}h(\theta,\xi_{ij}) \right).
\end{equation*}
The smoothness and convexity of $\phi$ make it appealing to minimize the criterion via gradient-based convex optimization techniques. Indeed, it is enough to assume that $\theta\mapsto h(\theta,\xi)$ is convex and differentiable (a standard assumption met in many applications of interest) to easily yield the same properties for $V$.

In light of this, assuming that $\theta\mapsto h(\theta,\xi)$ is differentiable at every $\xi_{ij}$ and denoting $H_i(\theta):=\sum_{j=1}^T p_{ij}h(\theta,\xi_{ij})$, the gradient of $V$ is
\begin{align}\label{eq:gradient_decomposition}
    \nabla_\theta V(\theta) & = \frac{1}{N}\sum_{i=1}^N\phi'(H_i(\theta))\nabla_\theta H_i(\theta)  \equiv\frac{1}{M}\sum_{m=1}^{M} \ell_{m}\phi'(H_m(\theta))\nabla_\theta h(\theta,\xi_{m}),
\end{align}
where $\ell_{m}\equiv T p_{m}$ and the $m$-indexing is just a recoding of the indices (with a slight abuse of notation and $M\equiv N\cdot T$). That is, the gradient of $V(\theta)$ can be written as the average of $M$ terms. Thus, to minimize $V(\theta)$ we propose a mini-batch Stochastic Gradient Descent algorithm which, at each iteration $t$, updates the parameter vector as follows:
\begin{equation}\label{eq:SGD_iterate}
    \theta^{t+1} = \theta^t - \eta_t\frac{1}{B}\sum_{m_b=1}^B\ell_{m_b}\phi'(H_{m_b}(\theta^t))\nabla_\theta h(\theta^t,\xi_{m_b}),
\end{equation}
for a step-size $\eta_t>0$ and a random subset (mini-batch) of size $B$ from the indices $\{1,\dots,M\}$. Under standard regularity assumptions \citep{garrigos2023handbook}, in Proposition \ref{pro:SGD_convergence} (Appendix \ref{app:proofs}) we prove convergence of the algorithm at usual rates for convex problems.

\begin{remark}
    Expression~\eqref{eq:gradient_decomposition} provides some insight on how, in practice, distributional robustness is enforced. Notice that $H_i(\theta) = \mathcal R_{p_i}(\theta)$ is the expected risk computed according to $p_i$, an approximate realization from $Q_{\bm \xi^n}$. Thus, in the computation of the overall gradient $\nabla_\theta V(\theta)$, the gradients associated to the $p_i$'s that generate higher expected risks receive more weight (being $\phi$ convex, $\phi'$ is increasing). These weights, then, are reflected into which gradients, in the mini-batch SGD algorithm, are given more leverage in updating the parameter vector. Thus, the procedure can be thought of as implementing a ‘‘soft worst-case scenario" scheme, whereby distributions in the posterior support are weighted (in terms of gradient influence) more the worse they do in terms of expected risk.
\end{remark}

\paragraph{Mini-Batch Stochastic Gradient Descent Algorithm.} For practical optimization, we apply a modification to the SGD algorithm provided in Equation~\eqref{eq:SGD_iterate}, which helps to reduce the computational burden of the procedure. Indeed, recall the formula of the gradient of the criterion $V$ that we need to optimize:
\begin{align*}\label{eq:gradient_decomposition}
    \nabla_\theta V(\theta) & = \frac{1}{N}\sum_{i=1}^N\phi'(H_i(\theta))\nabla_\theta H_i(\theta) \\
                            & \equiv\frac{1}{M}\sum_{m=1}^{M} \ell_{m}\phi'(H_m(\theta))\nabla_\theta h(\theta,\xi_{m}). 
\end{align*}
Clearly, then, implementing the baseline SGD algorithm requires, at each iteration, the evaluation of multiple $H_m(\theta^t)$ terms, each consisting of $T$ evaluations of the loss function $h$. To avoid this, at each iteration we instead sub-sample one index $i=1,\dots,N$ and update the parameter vector according to the associated gradient $\phi'(H_i(\theta^t))\nabla_\theta H_i(\theta^t)$. The latter is still an unbiased estimator of the overall gradient of $V(\theta^t)$, but it requires only $T$ evaluations of $h$ (plus those of $T$ gradients of $h$, similarly to the baseline algorithm). Finally, to exploit the whole data efficiently, we sub-sample without replacement and perform multiple passes over the $N$ MC samples. Algorithm \ref{alg:SGD_modified} summarizes the procedure.

\begin{algorithm}[ht]
   \caption{Modified Stochastic Gradient Descent Algorithm}
   \label{alg:SGD_modified}
\begin{algorithmic}
   \STATE {\bfseries Input:} Approximate criterion parameters $\{(p_{ij}, \xi_{ij}):i=1,\dots,N, j=1,\dots,T\}$, step size schedule $(\eta_t)_{t\geq 0}$, starting value $\theta^0$, number of passes $P$, iteration tracker $t=0$
   \FOR{$p=1$ {\bfseries to} $P$}
   \STATE Initialize $I=\{1,\dots,N\}$
   \FOR{$j=1$ {\bfseries to} $N$}
   \STATE Sample uniformly $i\in I$
   \STATE Update $\theta^{t+1} = \theta^t - \eta_t \cdot \phi'\big(\sum_{\ell = 1}^T p_{i\ell}h(\theta^t, \xi_{i\ell})\big) \cdot \sum_{\ell = 1}^T p_{i\ell}\nabla_\theta h(\theta^t, \xi_{i\ell})\big)$
   \STATE Update $I=I\setminus\{i\}$
   \STATE Update $t = t + 1$
   \ENDFOR
   \ENDFOR
   \STATE {\bfseries Return: $\theta^{PN + 1}$}
\end{algorithmic}
\end{algorithm}

\subsection{High-Dimensional Linear Regression Experiment}

\paragraph{Setting.} In this experiment, we test the performance of our robust criterion in a high-dimensional sparse linear regression task. The high-dimensional and sparse nature of the data-generating process is expected to induce distributional uncertainty, and our method is meant to address this. In this context, we use the quadratic loss function $(\theta, y, x)\mapsto10^{-3}(y-\theta^\top x)^2$, where the $10^{-3}$ factor serves to stabilize numerical values in the optimization process. Notice that, by the form of the ambiguity-neutral criterion (\ref{eq:ambiguity_neutral_criterion}), the multiplicative factor on the loss function does not change the equivalence with Ridge.

\paragraph{Data-Generating Process.} The data for the experiment are generated iid across simulations (200) and observations ($n=100$ per simulation) as follows. For each observation $i=1,\dots,n$, the $d$-dimensional ($d=90$) covariate vector follows a multivariate normal distribution with mean 0 and such that (i) each covariate has unitary variance, and (ii) any pair of distinct covariates has covariance 0.3:
\begin{equation*}
    x_i = \begin{bmatrix}
        x_{i1} \\
        \vdots \\
        x_{id}
    \end{bmatrix} \sim \mathcal N(0, \Sigma), \quad \Sigma = \begin{bmatrix}
        1 & 0.3 & \cdots & 0.3 \\
        0.3 & 1 & \cdots & 0.3\\
        \vdots & \vdots & \ddots & \vdots\\
        0.3 & 0.3 & \cdots & 1
    \end{bmatrix} \in \mathbb R^{d\times d}.
\end{equation*}
Then, the response has conditional distribution $y_i\vert x_i \sim \mathcal N(a^\top x_i, \sigma^2)$, with $a = (1, 1, 1, 1, 1, 0, \cdots, 0)^\top\in\mathbb R^d$ and $\sigma = 0.5$. That is, out of 90 covariates, only the first 5 have a unitary positive marginal effect on $y_i$, and additive Gaussian noise is added to the resulting linear combination. Together with 100 training samples, at each simulation we generate 5000 test samples on which we compute out-of-sample RMSE for the ambiguity-averse, ambiguity-neutral, and OLS procedures.

\paragraph{Robust Criterion Parameters.} For each simulated sample, we run our robust procedure setting the following parameter values: $\phi(t)=\beta\exp(t/\beta)-\beta$, $\beta \in\{1, \infty\}$, $\alpha=a/n$ for $a\in\{1, 2, 5, 10\}$, and $p_0 = \mathcal N(0,I)$, where the $\beta = \infty$ setting corresponds to Ridge regression with regularization parameter $\alpha$ (see Proposition \ref{pro:equivalence_regularization}). Finally, we run 300 Monte Carlo simulations to approximate the criterion, and truncate the Multinomial-Dirichlet approximation at $T=50$.

\paragraph{Stochastic Gradient Descent Parameters} We initialize the algorithm at $\theta = (0,\dots,0)$ and set the step size at $\eta_t = 50/(100 + \sqrt{t})$. The number of passes over data is set after visual inspection of convergence of the criterion value. The run time per SGD run is less than 1 second on our infrastructure (see Appendix \ref{app:computing_resources}).

\begin{figure*}[t]
\begin{center}
\centerline{\includegraphics[width=0.9\textwidth]{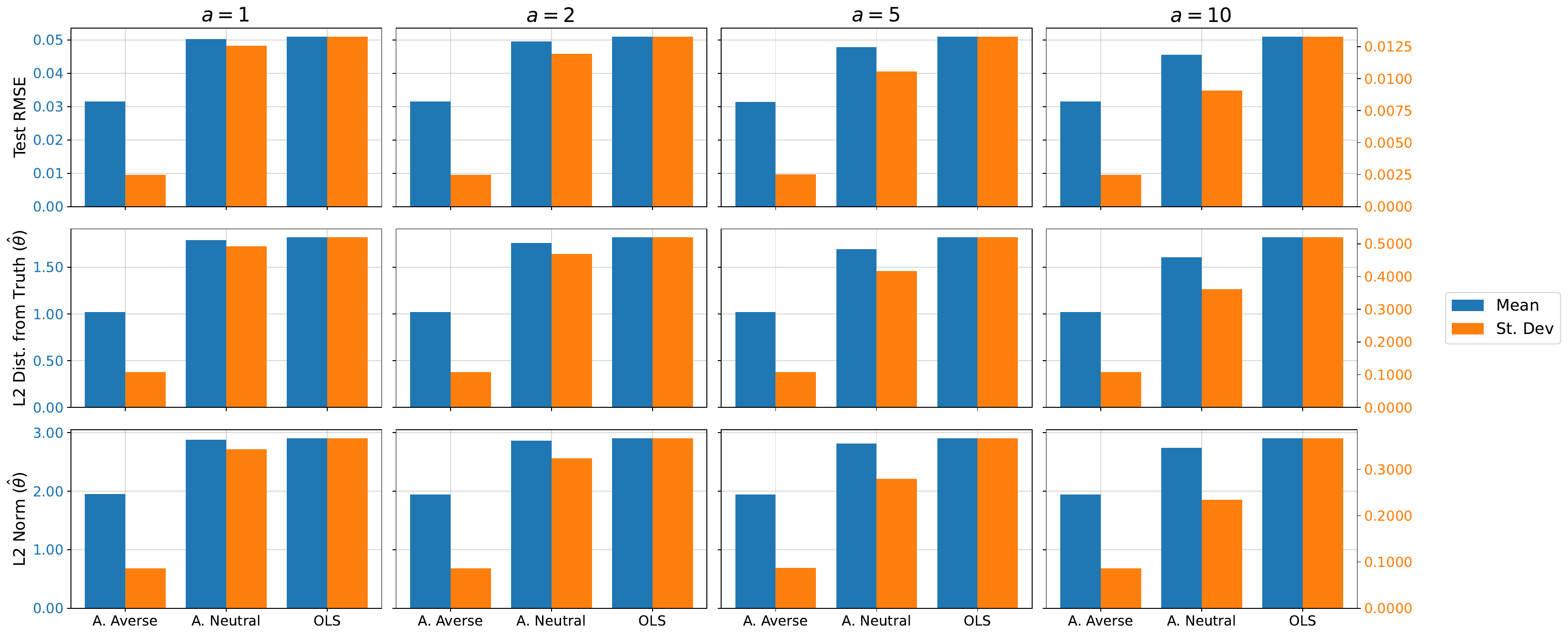}}
\caption{Simulation results for the high-dimensional sparse linear regression experiment. Bars report the mean and standard deviation (across 200 sample simulations) of the test RMSE, $L_2$ distance of estimated coefficient vector $\hat\theta$ from the data-generating one, and the $L_2$ norm of $\hat\theta$. Results are shown for the ambiguity-averse, ambiguity-neutral, and OLS procedures. Note: The left (blue) axis refers to mean values, the right (orange) axis to standard deviation values.}
\label{fig:simulations}
\end{center}
\vskip -0.3in
\end{figure*}

\subsection{Experiment on Gaussian Location Estimation With Outliers}

\paragraph{Setting.} In this experiment, we test the performance of our robust criterion on the task of estimating a univariate Gaussian mean (assuming the variance is known) when the data is corrupted by a few observations coming from a distant distribution. Clearly, this is a situation where a considerable level of distributional uncertainty is warranted. In this setting, the loss function $h(\xi, \theta) = (\xi-\theta)^2$ is simply the negative log-likelihood associated to the normal model. Notice that the $h$ is convex in $\theta$ and, as in the previous experiment, we pre-multiply it by a factor of $10^{-3}$ for numerical stability reasons.

\paragraph{Data-Generating Process.} The data for the experiment are generated iid across simulations (100) and observations ($n=13$ per simulation) as follows. For each simulation, 10 iid samples $x_i$ are drawn from a $\mathcal N(0,1)$ distribution (the actual data-generating process we want to learn) and 3 samples are drawn iid from a $\mathcal N(0,5)$  outlier distribution. At each simulation we also generate 5000 test samples from the data-generating process $\mathcal N(0,1)$, on which we compute the out-of-sample average negative log-likelihood for the ambiguity-averse, ambiguity-neutral, and Maximum Likelihood Estimation (MLE) procedures -- this will be our measure of out of sample performance (see Figure \ref{fig:MLE_simulations}).

\paragraph{Robust Criterion Parameters.} For each simulated sample, we run our robust procedure setting the following parameter values: $\phi(t)=\beta\exp(t/\beta)-\beta$, $\beta \in\{1, \infty\}$, $\alpha\in\{1, 2, 5, 10\}$, and $p_0 = \mathcal N(\mu_0,I)$, where $\mu_0 = (10\cdot 0 + 3\cdot 5)/(10+3)$ is a weighted average of the data-generating and the outlier means. By the expression of the ambiguity-neutral criterion (\ref{eq:ambiguity_neutral_criterion}), it is easy to show that the $\beta=\infty$ case leads to the parameter estimate 
\begin{equation*}
    \hat\theta_{\bm\xi^n} = \frac{1}{\alpha + n}\sum_{i=1}^{\alpha + n}y_{i},
\end{equation*}
with $y_i=x_i$ for $i=1,\dots,n$ and $y_i=\mu_0$ for $i = n+1,\dots,n+\alpha$. That is, the ambiguity-neutral procedure with concentration parameter $\alpha\in\mathbb N$ is equivalent to the MLE procedure when the original training sample is enlarged with $\alpha$ additional observations equal to $\mu_0$. Finally, we run 300 Monte Carlo simulations to approximate the criterion, and truncate the Multinomial-Dirichlet approximation at $T=50$.

\paragraph{Stochastic Gradient Descent Parameters.} We initialize the algorithm at $\theta = 0$ and set the step size at $\eta_t = 20/(100 + \sqrt{t})$. The number of passes over data is set after visual inspection of convergence of the criterion value. The run time per SGD run is 2 seconds on our infrastructure (see Appendix \ref{app:computing_resources}).

\paragraph{Results.}
In Figure \ref{fig:MLE_simulations}, we present the results of the simulation study. As for the regression experiment, the ambiguity-averse criterion brings improvement, across $\alpha$ values and compared to the ambiguity-neutral and the simple MLE procedures, both in terms of average performance and in terms of the latter's variabiliy (see the first row of the Figure). From the second row of Figure \ref{fig:MLE_simulations}, it also emerges that, on average, the ambiguity-averse procedure is more accurate at estimating the location parameter than the two other methods. Compared to the simple MLE procedure, the variability of the estimated parameter is also significantly smaller. Taken together, these results confirm the theoretical expectation that the ambiguity-averse optimization is effective at hedging against the distributional uncertainty arising in the estimation of corrupted data such as the simulated ones.

\begin{figure*}[t]
\begin{center}
\centerline{\includegraphics[width=0.9\textwidth]{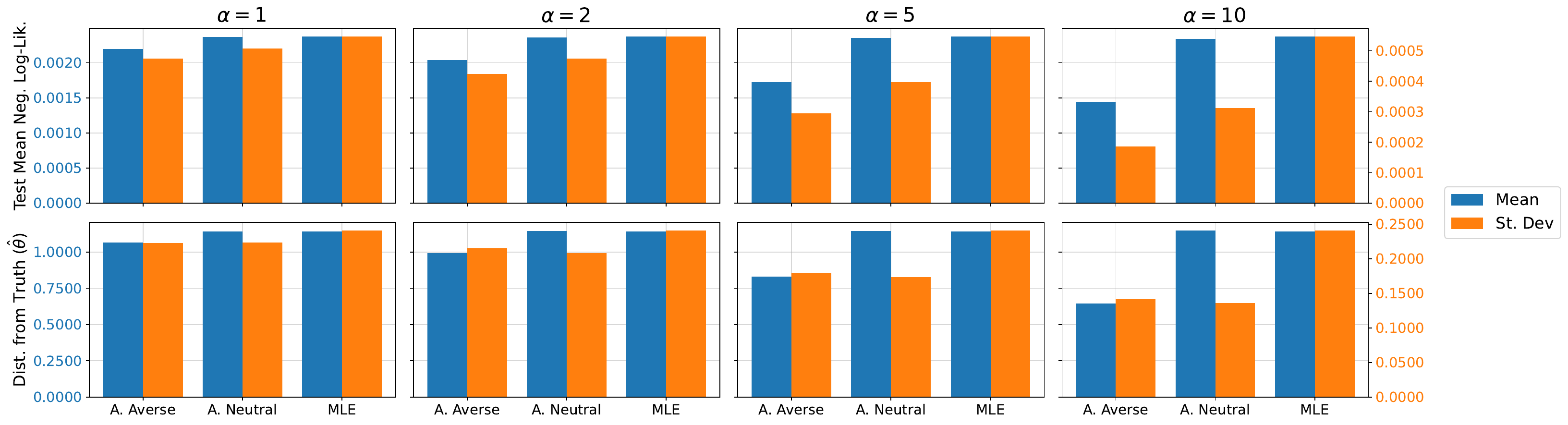}}
\caption{Simulation results from the experiment on Gaussian mean estimation with outliers. Bars report the mean and standard deviation (across 100 sample simulations) of the test mean negative log-likelihood and the absolute value distance of the estimated parameter from 0 (the data-generating value). Results are shown for the ambiguity-averse, ambiguity-neutral, and MLE procedures. Note: The left (blue) axis refers to mean values, the right (orange) axis to standard deviation values.}
\label{fig:MLE_simulations}
\end{center}
\vskip -0.2in
\end{figure*}

\subsection{High-Dimensional Logistic Regression Experiment}

\paragraph{Setting.} In this experiment, we test the performance of our robust criterion on a high-dimensional sparse classification task using the framework of logistic regression. As in the linear regression experiment, the high-dimensional and sparse nature of the data-generating process is expected to induce distributional uncertainty, and our method is meant to address this. In this setting, the loss function is $h(\xi, \theta) = \log(1+\exp(-y\cdot x^\top\theta))$. As in the previous experiment, we pre-multiply it by a factor of $10^{-3}$ for numerical stability reasons.

\paragraph{Data-Generating Process.} The data for the experiment are generated iid across simulations (200) and observations ($n=100$ per simulation) as follows. For each observation $i=1,\dots,n$, the $d$-dimensional ($d=90$) covariate vector follows a multivariate normal distribution with mean 0 and such that (i) each covariate has unitary variance, and (ii) any pair of distinct covariates has covariance 0.3:
\begin{equation*}
    x_i = \begin{bmatrix}
        x_{i1} \\
        \vdots \\
        x_{id}
    \end{bmatrix} \sim \mathcal N(0, \Sigma), \quad \Sigma = \begin{bmatrix}
        1 & 0.3 & \cdots & 0.3 \\
        0.3 & 1 & \cdots & 0.3\\
        \vdots & \vdots & \ddots & \vdots\\
        0.3 & 0.3 & \cdots & 1
    \end{bmatrix} \in \mathbb R^{d\times d}.
\end{equation*}
Then, the response has conditional distribution $y_i\vert x_i \sim \textnormal{Binary}(\{1,-1\}, p_x)$, with $p_x =1/(1+\exp(-x^\top a))$ and $a = (1, 1, 1, 1, 1, 0, \cdots, 0)^\top\in\mathbb R^d$. That is, out of 90 covariates, only the first 5 have a unitary positive marginal effect on the log-odds. Together with 100 training samples, at each simulation we generate 5000 test samples on which we compute the out-of-sample average loss for the ambiguity-averse, $L_2$-regularized (with regularization parameter $\alpha$, see below), and un-regularized procedures.

\paragraph{Robust Criterion Parameters.} For each simulated sample, we run our robust procedure setting the following parameter values: $\phi(t)=\beta\exp(t/\beta)-\beta$, $\beta =1$, $\alpha=a/n$ for $\alpha\in\{1, 2, 5, 10\}$, and $p_0 = \textnormal{Binary}(\{1,-1\}, 0.5)\times\mathcal N(0,I)$. Finally, we run 200 Monte Carlo simulations to approximate the criterion, and truncate the Multinomial-Dirichlet approximation at $T=50$.

\paragraph{Stochastic Gradient Descent Parameters} We initialize the algorithm at $\theta = (0,\dots,0)$ and set the step size at $\eta_t = 1000/(100 + \sqrt{t})$. The number of passes over data is set after visual inspection of convergence of the criterion value. The run time per SGD run is 3 seconds on our infrastructure (see Appendix \ref{app:computing_resources}).

\paragraph{Results.}
In Figure \ref{fig:MLE_simulations}, we present the results of the simulation study. As for the regression experiment, the ambiguity-averse criterion brings improvement, across $\alpha$ values and compared to the $L_2$-regularized and the unregularized procedures, both in terms of average performance and in terms of the latter's variabiliy (see the first row of the Figure). From the second row of Figure \ref{fig:MLE_simulations}, it also emerges that, on average, the ambiguity-averse procedure is more accurate and less variable at estimating the true regression coefficient than the two other methods. Also, our method is able to more effectively shrink the norm of the coefficient vector towards 0 (see the third row). Taken together, these results confirm the theoretical expectation that the ambiguity-averse optimization is effective at hedging against the distributional uncertainty arising in high-dimensional classification problems (in this experimental setting, tackled via logistic regression).

\begin{figure*}[t]
\begin{center}
\centerline{\includegraphics[width=0.9\textwidth]{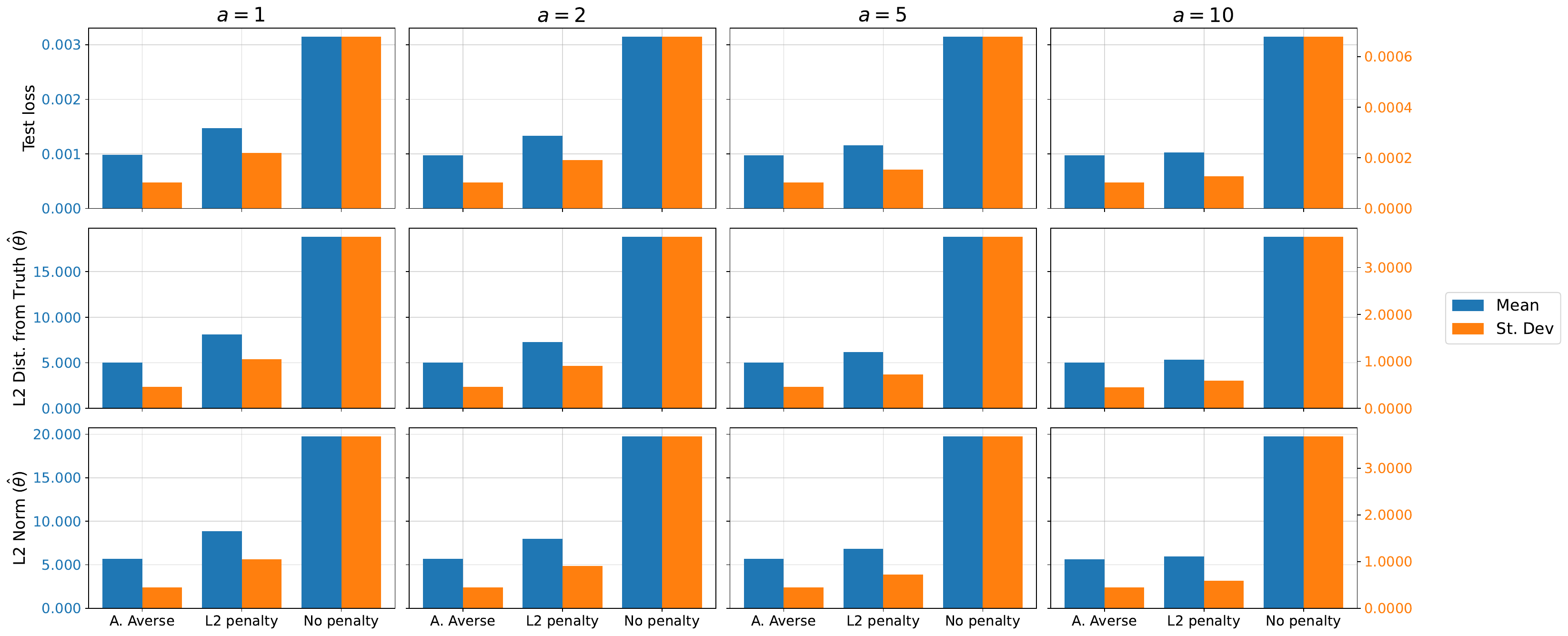}}
\caption{Simulation results for the high-dimensional sparse logistic regression experiment. Bars report the mean and standard deviation (across 200 sample simulations) of the test average loss, $L_2$ distance of estimated coefficient vector $\hat\theta$ from the data-generating one, and the $L_2$ norm of $\hat\theta$. Results are shown for the ambiguity-averse, $L_2$-regularized, and un-regularized procedures. Note: The left (blue) axis refers to mean values, the right (orange) axis to standard deviation values.}
\label{fig:logit_simulations}
\end{center}
\vskip -0.2in
\end{figure*}

\subsection{Pima Indian Diabetes Dataset Experiment}

In this experiment, we use logistic regression for classification on the popular Pima Indians Diabetes dataset,\footnote{Made available by the National Institute of Diabetes and Digestive and Kidney Diseases and downloaded from \url{https://www.kaggle.com/datasets/kandij/diabetes-dataset?resource=download}.} collecting data on 768 women belonging to a Native American group that lives in Mexico and Arizona. The data consists of a binary outcome (whether the subject developed diabetes or not) and 8 features related to her physical condition (these features are standardized before running the analysis).

To test our method, we randomly select 300 training observations and leave out the rest for as a test sample. Then, we randomly split the training data into 15 folds of size 20 and select, via $k$-fold cross validation, the optimal DP concentration parameter $\alpha$ over a wide grid of values. We do the same for the $L_1$-penalty coefficient used to implement regularized logistic regression with the Python library \texttt{scikit-learn} \cite{scikit-learn}. Once the optimal parameters are selected based on out-of-sample risk, we again randomly split the training sample into the same number of folds, and implement our roubust DP method, L1-penalized logistic regression, and unregularized logistic regression on each of the folds.\footnote{All of the implementation details (e.g., parameter values), can be found in our code. This holds for the next two experiments as well.} This splitting procedure allows us (i) to test and compare the performance of our method in a setting with scarce data, where distributional uncertainty is most likely present, and (ii) to asses the sampling variability of the implemented procedures. The run time per SGD run is 19 seconds on our infrastructure (see Appendix \ref{app:computing_resources}).

Table \ref{tab:diabetes} reports the results from the described procedure. Unregularized logistic regression performs quite poorly compared to the other two methods. Instead, the latter yield results in the same orders of magnitude both in terms of average performance and of performance variability, though our DP robust method features almost half of the variability produced by $L_1$-regularized logistic regression.

\begin{table}[ht]
\centering
\begin{tabular}{|l|c|c|c|}
\hline
 & Unregularized & $L_1$ Regularized & \textbf{DP Robust} \\
\hline
 Average & 0.0142 & 0.0007 & 0.0006 \\
\hline
 Standard Deviation & 0.0127 & 6.2253e-05 & 3.9742e-05 \\
\hline
\end{tabular}
\vspace{0.3cm}
\caption{Comparison of average and standard deviation of the out-of-sample performance (out-of-sample expected logistic loss) of the three employed methods for binary classification on the Pima Indian Diabetes dataset.}
\label{tab:diabetes}
\end{table}

\subsection{Wine Quality Dataset Experiment}

In this experiment, we applied linear regression to the popular UCI Machine Learning Repository Wine Quality dataset \cite{misc_wine_quality_186}. Data consists of 4898 measurements of 11 wines' characteristics and a quality score assigned to each wine. The aim is to predict the latter based on the former (both features and response are standardized before running the analysis). We implement linear regression using our DP-based robust method (with the squared loss function), OLS, and LASSO (the last two methods are implemented using \texttt{scikit-learn} \cite{scikit-learn}).

To test our method, we randomly select 300 training observations and leave out the rest for as a test sample. Then, we randomly split the training data into 10 folds of size 30 and select, via $k$-fold cross validation, the optimal DP concentration parameter $\alpha$ over a wide grid of values. We do the same for the $L_1$-penalty coefficient used to implement LASSO. Once the optimal parameters are selected based on out-of-sample risk, we again randomly split the training sample into the same number of folds, and implement our roubust DP method, LASSO regression, and OLS estimation on each of the folds. This splitting procedure allows us (i) to test and compare the performance of our method in a setting with scarce data, where distributional uncertainty is most likely present, and (ii) to asses the sampling variability of the implemented procedures. The run time per SGD run is 5 seconds on our infrastructure (see Appendix \ref{app:computing_resources}).

Table \ref{tab:wine_quality} reports the results from the described procedure, whose interpretation is very much in line with the results of the previous experiment.

\begin{table}[ht]
\centering
\begin{tabular}{|l|c|c|c|}
\hline
 & Unregularized & $L_1$ Regularized & \textbf{DP Robust} \\
\hline
 Average & 0.0014 & 0.0009 & 0.0009 \\
\hline
 Standard Deviation & 0.0004 & 8.0192e-05 & 6.0076e-05 \\
\hline
\end{tabular}
\vspace{0.3cm}
\caption{Comparison of average and standard deviation of the out-of-sample performance (out-of-sample expected squared loss) of the three employed methods for linear regression on the Wine Quality dataset.}
\label{tab:wine_quality}
\end{table}

\subsection{Liver Disorders Dataset Experiment}

In this experiment, we applied linear regression to the popular UCI Machine Learning Repository Liver Disorders dataset\cite{misc_liver_disorders_60}. Data consists of 345 measurements of 5 blood test results and the number of drinks consumed per day by each subject. The aim is to predict the latter based on the former (both features and response are standardized before running the analysis). We implement linear regression using our DP-based robust method (with the squared loss function), OLS, and LASSO (the last two methods are implemented using \texttt{scikit-learn} \cite{scikit-learn}).

To test our method, we randomly select 200 training observations and leave out the rest for as a test sample. Then, we randomly split the training data into 10 folds of size 20 and select, via $k$-fold cross validation, the optimal DP concentration parameter $\alpha$ over a wide grid of values. We do the same for the $L_1$-penalty coefficient used to implement LASSO. Once the optimal parameters are selected based on out-of-sample risk, we again randomly split the training sample into the same number of folds, and implement our roubust DP method, LASSO regression, and OLS estimation on each of the folds. This splitting procedure allows us (i) to test and compare the performance of our method in a setting with scarce data, where distributional uncertainty is most likely present, and (ii) to asses the sampling variability of the implemented procedures.The run time per SGD run is 15 seconds on our infrastructure (see Appendix \ref{app:computing_resources}).

Table \ref{tab:liver_disorders} reports the results from the described procedure, whose interpretation is very much in line with the results of the previous two experiments.

\begin{table}[ht]
\centering
\begin{tabular}{|l|c|c|c|}
\hline
 & Unregularized & $L_1$ Regularized & \textbf{DP Robust} \\
\hline
 Average & 0.0012 & 0.0009 & 0.0007 \\
\hline
 Standard Deviation & 0.0005 & 0.0001 & 6.6597e-05 \\
\hline
\end{tabular}
\vspace{0.3cm}
\caption{Comparison of average and standard deviation of the out-of-sample performance (out-of-sample expected squared loss) of the three employed methods for linear regression on the Liver Disorders dataset.}
\label{tab:liver_disorders}
\end{table}

\section{Computational Infrastructure}\label{app:computing_resources}

All experiments were performed on a desktop with 12th Gen Intel(R) Core(TM) i9-12900H, 2500 Mhz, 14 Core(s), 20 Logical Processor(s) and 32.0 GB RAM.

\newpage
\section*{NeurIPS Paper Checklist}

\begin{enumerate}

\item {\bf Claims}
    \item[] Question: Do the main claims made in the abstract and introduction accurately reflect the paper's contributions and scope?
    \item[] Answer: \answerYes{} 
    \item[] Justification: The paper justifies the claims made in the abstract with either theoretical results, or experimental evidence.
    \item[] Guidelines:
    \begin{itemize}
        \item The answer NA means that the abstract and introduction do not include the claims made in the paper.
        \item The abstract and/or introduction should clearly state the claims made, including the contributions made in the paper and important assumptions and limitations. A No or NA answer to this question will not be perceived well by the reviewers. 
        \item The claims made should match theoretical and experimental results, and reflect how much the results can be expected to generalize to other settings. 
        \item It is fine to include aspirational goals as motivation as long as it is clear that these goals are not attained by the paper. 
    \end{itemize}

\item {\bf Limitations}
    \item[] Question: Does the paper discuss the limitations of the work performed by the authors?
    \item[] Answer: \answerYes{} 
    \item[] Justification: Please, refer to Section \ref{sec:discussion}.
    \item[] Guidelines:
    \begin{itemize}
        \item The answer NA means that the paper has no limitation while the answer No means that the paper has limitations, but those are not discussed in the paper. 
        \item The authors are encouraged to create a separate "Limitations" section in their paper.
        \item The paper should point out any strong assumptions and how robust the results are to violations of these assumptions (e.g., independence assumptions, noiseless settings, model well-specification, asymptotic approximations only holding locally). The authors should reflect on how these assumptions might be violated in practice and what the implications would be.
        \item The authors should reflect on the scope of the claims made, e.g., if the approach was only tested on a few datasets or with a few runs. In general, empirical results often depend on implicit assumptions, which should be articulated.
        \item The authors should reflect on the factors that influence the performance of the approach. For example, a facial recognition algorithm may perform poorly when image resolution is low or images are taken in low lighting. Or a speech-to-text system might not be used reliably to provide closed captions for online lectures because it fails to handle technical jargon.
        \item The authors should discuss the computational efficiency of the proposed algorithms and how they scale with dataset size.
        \item If applicable, the authors should discuss possible limitations of their approach to address problems of privacy and fairness.
        \item While the authors might fear that complete honesty about limitations might be used by reviewers as grounds for rejection, a worse outcome might be that reviewers discover limitations that aren't acknowledged in the paper. The authors should use their best judgment and recognize that individual actions in favor of transparency play an important role in developing norms that preserve the integrity of the community. Reviewers will be specifically instructed to not penalize honesty concerning limitations.
    \end{itemize}

\item {\bf Theory Assumptions and Proofs}
    \item[] Question: For each theoretical result, does the paper provide the full set of assumptions and a complete (and correct) proof?
    \item[] Answer: \answerYes{} 
    \item[] Justification: Please, refer to Sections \ref{sec:stat_properties} for theoretical propositions and the underlying assumptions, and Appendix \ref{app:proofs} for proofs.
    \item[] Guidelines:
    \begin{itemize}
        \item The answer NA means that the paper does not include theoretical results. 
        \item All the theorems, formulas, and proofs in the paper should be numbered and cross-referenced.
        \item All assumptions should be clearly stated or referenced in the statement of any theorems.
        \item The proofs can either appear in the main paper or the supplemental material, but if they appear in the supplemental material, the authors are encouraged to provide a short proof sketch to provide intuition. 
        \item Inversely, any informal proof provided in the core of the paper should be complemented by formal proofs provided in appendix or supplemental material.
        \item Theorems and Lemmas that the proof relies upon should be properly referenced. 
    \end{itemize}

    \item {\bf Experimental Result Reproducibility}
    \item[] Question: Does the paper fully disclose all the information needed to reproduce the main experimental results of the paper to the extent that it affects the main claims and/or conclusions of the paper (regardless of whether the code and data are provided or not)?
    \item[] Answer: \answerYes{} 
    \item[] Justification: Please, refer to Appendix \ref{app:experiment} for experiment details and to our supplementary material in the form of code and a README file with instructions on how to run the code.
    \item[] Guidelines:
    \begin{itemize}
        \item The answer NA means that the paper does not include experiments.
        \item If the paper includes experiments, a No answer to this question will not be perceived well by the reviewers: Making the paper reproducible is important, regardless of whether the code and data are provided or not.
        \item If the contribution is a dataset and/or model, the authors should describe the steps taken to make their results reproducible or verifiable. 
        \item Depending on the contribution, reproducibility can be accomplished in various ways. For example, if the contribution is a novel architecture, describing the architecture fully might suffice, or if the contribution is a specific model and empirical evaluation, it may be necessary to either make it possible for others to replicate the model with the same dataset, or provide access to the model. In general. releasing code and data is often one good way to accomplish this, but reproducibility can also be provided via detailed instructions for how to replicate the results, access to a hosted model (e.g., in the case of a large language model), releasing of a model checkpoint, or other means that are appropriate to the research performed.
        \item While NeurIPS does not require releasing code, the conference does require all submissions to provide some reasonable avenue for reproducibility, which may depend on the nature of the contribution. For example
        \begin{enumerate}
            \item If the contribution is primarily a new algorithm, the paper should make it clear how to reproduce that algorithm.
            \item If the contribution is primarily a new model architecture, the paper should describe the architecture clearly and fully.
            \item If the contribution is a new model (e.g., a large language model), then there should either be a way to access this model for reproducing the results or a way to reproduce the model (e.g., with an open-source dataset or instructions for how to construct the dataset).
            \item We recognize that reproducibility may be tricky in some cases, in which case authors are welcome to describe the particular way they provide for reproducibility. In the case of closed-source models, it may be that access to the model is limited in some way (e.g., to registered users), but it should be possible for other researchers to have some path to reproducing or verifying the results.
        \end{enumerate}
    \end{itemize}

\item {\bf Open access to data and code}
    \item[] Question: Does the paper provide open access to the data and code, with sufficient instructions to faithfully reproduce the main experimental results, as described in supplemental material?
    \item[] Answer: \answerYes{} 
    \item[] Justification: Please, refer to the code included in the submitted supplementary material.
    \item[] Guidelines:
    \begin{itemize}
        \item The answer NA means that paper does not include experiments requiring code.
        \item Please see the NeurIPS code and data submission guidelines (\url{https://nips.cc/public/guides/CodeSubmissionPolicy}) for more details.
        \item While we encourage the release of code and data, we understand that this might not be possible, so “No” is an acceptable answer. Papers cannot be rejected simply for not including code, unless this is central to the contribution (e.g., for a new open-source benchmark).
        \item The instructions should contain the exact command and environment needed to run to reproduce the results. See the NeurIPS code and data submission guidelines (\url{https://nips.cc/public/guides/CodeSubmissionPolicy}) for more details.
        \item The authors should provide instructions on data access and preparation, including how to access the raw data, preprocessed data, intermediate data, and generated data, etc.
        \item The authors should provide scripts to reproduce all experimental results for the new proposed method and baselines. If only a subset of experiments are reproducible, they should state which ones are omitted from the script and why.
        \item At submission time, to preserve anonymity, the authors should release anonymized versions (if applicable).
        \item Providing as much information as possible in supplemental material (appended to the paper) is recommended, but including URLs to data and code is permitted.
    \end{itemize}

\item {\bf Experimental Setting/Details}
    \item[] Question: Does the paper specify all the training and test details (e.g., data splits, hyperparameters, how they were chosen, type of optimizer, etc.) necessary to understand the results?
    \item[] Answer: \answerYes{} 
    \item[] Justification: Please, refer to Appendix \ref{app:experiment} and the submitted code in the supplementary material.
    \item[] Guidelines:
    \begin{itemize}
        \item The answer NA means that the paper does not include experiments.
        \item The experimental setting should be presented in the core of the paper to a level of detail that is necessary to appreciate the results and make sense of them.
        \item The full details can be provided either with the code, in appendix, or as supplemental material.
    \end{itemize}

\item {\bf Experiment Statistical Significance}
    \item[] Question: Does the paper report error bars suitably and correctly defined or other appropriate information about the statistical significance of the experiments?
    \item[] Answer: \answerYes{} 
    \item[] Justification: Variability in the reported average performance metrics are of direct interest for the evaluation of our method, so we report measures of variability as separate bars in plots or separate entries in tables.
    \item[] Guidelines:
    \begin{itemize}
        \item The answer NA means that the paper does not include experiments.
        \item The authors should answer "Yes" if the results are accompanied by error bars, confidence intervals, or statistical significance tests, at least for the experiments that support the main claims of the paper.
        \item The factors of variability that the error bars are capturing should be clearly stated (for example, train/test split, initialization, random drawing of some parameter, or overall run with given experimental conditions).
        \item The method for calculating the error bars should be explained (closed form formula, call to a library function, bootstrap, etc.)
        \item The assumptions made should be given (e.g., Normally distributed errors).
        \item It should be clear whether the error bar is the standard deviation or the standard error of the mean.
        \item It is OK to report 1-sigma error bars, but one should state it. The authors should preferably report a 2-sigma error bar than state that they have a 96\% CI, if the hypothesis of Normality of errors is not verified.
        \item For asymmetric distributions, the authors should be careful not to show in tables or figures symmetric error bars that would yield results that are out of range (e.g. negative error rates).
        \item If error bars are reported in tables or plots, The authors should explain in the text how they were calculated and reference the corresponding figures or tables in the text.
    \end{itemize}

\item {\bf Experiments Compute Resources}
    \item[] Question: For each experiment, does the paper provide sufficient information on the computer resources (type of compute workers, memory, time of execution) needed to reproduce the experiments?
    \item[] Answer: \answerYes{} 
    \item[] Justification: Please, refer to Appendices \ref{app:experiment} and \ref{app:computing_resources}
    \item[] Guidelines:
    \begin{itemize}
        \item The answer NA means that the paper does not include experiments.
        \item The paper should indicate the type of compute workers CPU or GPU, internal cluster, or cloud provider, including relevant memory and storage.
        \item The paper should provide the amount of compute required for each of the individual experimental runs as well as estimate the total compute. 
        \item The paper should disclose whether the full research project required more compute than the experiments reported in the paper (e.g., preliminary or failed experiments that didn't make it into the paper). 
    \end{itemize}
    
\item {\bf Code Of Ethics}
    \item[] Question: Does the research conducted in the paper conform, in every respect, with the NeurIPS Code of Ethics \url{https://neurips.cc/public/EthicsGuidelines}?
    \item[] Answer: \answerYes{} 
    \item[] Justification: We have read the Code of Ethics and confirm the adherence of our paper to it.
    \item[] Guidelines:
    \begin{itemize}
        \item The answer NA means that the authors have not reviewed the NeurIPS Code of Ethics.
        \item If the authors answer No, they should explain the special circumstances that require a deviation from the Code of Ethics.
        \item The authors should make sure to preserve anonymity (e.g., if there is a special consideration due to laws or regulations in their jurisdiction).
    \end{itemize}

\item {\bf Broader Impacts}
    \item[] Question: Does the paper discuss both potential positive societal impacts and negative societal impacts of the work performed?
    \item[] Answer: \answerNA{} 
    \item[] Justification: Due to its theoretical and optimization-focused nature, the paper has no direct foreseeable societal impact beyond that of any generic machine learning paper.
    \item[] Guidelines:
    \begin{itemize}
        \item The answer NA means that there is no societal impact of the work performed.
        \item If the authors answer NA or No, they should explain why their work has no societal impact or why the paper does not address societal impact.
        \item Examples of negative societal impacts include potential malicious or unintended uses (e.g., disinformation, generating fake profiles, surveillance), fairness considerations (e.g., deployment of technologies that could make decisions that unfairly impact specific groups), privacy considerations, and security considerations.
        \item The conference expects that many papers will be foundational research and not tied to particular applications, let alone deployments. However, if there is a direct path to any negative applications, the authors should point it out. For example, it is legitimate to point out that an improvement in the quality of generative models could be used to generate deepfakes for disinformation. On the other hand, it is not needed to point out that a generic algorithm for optimizing neural networks could enable people to train models that generate Deepfakes faster.
        \item The authors should consider possible harms that could arise when the technology is being used as intended and functioning correctly, harms that could arise when the technology is being used as intended but gives incorrect results, and harms following from (intentional or unintentional) misuse of the technology.
        \item If there are negative societal impacts, the authors could also discuss possible mitigation strategies (e.g., gated release of models, providing defenses in addition to attacks, mechanisms for monitoring misuse, mechanisms to monitor how a system learns from feedback over time, improving the efficiency and accessibility of ML).
    \end{itemize}
    
\item {\bf Safeguards}
    \item[] Question: Does the paper describe safeguards that have been put in place for responsible release of data or models that have a high risk for misuse (e.g., pretrained language models, image generators, or scraped datasets)?
    \item[] Answer: \answerNA{} 
    \item[] Justification: The paper only employs simulated or small publicly available datasets, as well as well-established machine learning techniques.
    \item[] Guidelines:
    \begin{itemize}
        \item The answer NA means that the paper poses no such risks.
        \item Released models that have a high risk for misuse or dual-use should be released with necessary safeguards to allow for controlled use of the model, for example by requiring that users adhere to usage guidelines or restrictions to access the model or implementing safety filters. 
        \item Datasets that have been scraped from the Internet could pose safety risks. The authors should describe how they avoided releasing unsafe images.
        \item We recognize that providing effective safeguards is challenging, and many papers do not require this, but we encourage authors to take this into account and make a best faith effort.
    \end{itemize}

\item {\bf Licenses for existing assets}
    \item[] Question: Are the creators or original owners of assets (e.g., code, data, models), used in the paper, properly credited and are the license and terms of use explicitly mentioned and properly respected?
    \item[] Answer: \answerYes{} 
    \item[] Justification: The paper properly credits all data and software used.
    \item[] Guidelines:  
    \begin{itemize}
        \item The answer NA means that the paper does not use existing assets.
        \item The authors should cite the original paper that produced the code package or dataset.
        \item The authors should state which version of the asset is used and, if possible, include a URL.
        \item The name of the license (e.g., CC-BY 4.0) should be included for each asset.
        \item For scraped data from a particular source (e.g., website), the copyright and terms of service of that source should be provided.
        \item If assets are released, the license, copyright information, and terms of use in the package should be provided. For popular datasets, \url{paperswithcode.com/datasets} has curated licenses for some datasets. Their licensing guide can help determine the license of a dataset.
        \item For existing datasets that are re-packaged, both the original license and the license of the derived asset (if it has changed) should be provided.
        \item If this information is not available online, the authors are encouraged to reach out to the asset's creators.
    \end{itemize}

\item {\bf New Assets}
    \item[] Question: Are new assets introduced in the paper well documented and is the documentation provided alongside the assets?
    \item[] Answer: \answerNA{} 
    \item[] Justification: The paper does not release any new asset.
    \item[] Guidelines:
    \begin{itemize}
        \item The answer NA means that the paper does not release new assets.
        \item Researchers should communicate the details of the dataset/code/model as part of their submissions via structured templates. This includes details about training, license, limitations, etc. 
        \item The paper should discuss whether and how consent was obtained from people whose asset is used.
        \item At submission time, remember to anonymize your assets (if applicable). You can either create an anonymized URL or include an anonymized zip file.
    \end{itemize}

\item {\bf Crowdsourcing and Research with Human Subjects}
    \item[] Question: For crowdsourcing experiments and research with human subjects, does the paper include the full text of instructions given to participants and screenshots, if applicable, as well as details about compensation (if any)? 
    \item[] Answer: \answerNA{} 
    \item[] Justification: The paper does not involve crowdsourcing nor research with human subjects.
    \item[] Guidelines:
    \begin{itemize}
        \item The answer NA means that the paper does not involve crowdsourcing nor research with human subjects.
        \item Including this information in the supplemental material is fine, but if the main contribution of the paper involves human subjects, then as much detail as possible should be included in the main paper. 
        \item According to the NeurIPS Code of Ethics, workers involved in data collection, curation, or other labor should be paid at least the minimum wage in the country of the data collector. 
    \end{itemize}

\item {\bf Institutional Review Board (IRB) Approvals or Equivalent for Research with Human Subjects}
    \item[] Question: Does the paper describe potential risks incurred by study participants, whether such risks were disclosed to the subjects, and whether Institutional Review Board (IRB) approvals (or an equivalent approval/review based on the requirements of your country or institution) were obtained?
    \item[] Answer: \answerNA{} 
    \item[] Justification: The paper does not involve crowdsourcing nor research with human subjects.
    \item[] Guidelines:
    \begin{itemize}
        \item The answer NA means that the paper does not involve crowdsourcing nor research with human subjects.
        \item Depending on the country in which research is conducted, IRB approval (or equivalent) may be required for any human subjects research. If you obtained IRB approval, you should clearly state this in the paper. 
        \item We recognize that the procedures for this may vary significantly between institutions and locations, and we expect authors to adhere to the NeurIPS Code of Ethics and the guidelines for their institution. 
        \item For initial submissions, do not include any information that would break anonymity (if applicable), such as the institution conducting the review.
    \end{itemize}

\end{enumerate}

\end{document}